\newcommand{\E}{\mathbb{E}}
\newcommand{\G}{\mathbb{G}}
\newcommand{\Var}{\mathrm{Var}}
\newcommand{\Cov}{\mathrm{Cov}}
\newcommand{\R}{\mathbb{R}}
\renewcommand{\P}{\mathbb{P}}
\newcommand{\td}{\tilde}
\newcommand{\lb}{\left(}
\newcommand{\rb}{\right)}
\newcommand{\eps}{\epsilon}
\newcommand{\N}{\mathcal{N}}
\renewcommand{\td}{\tilde}
\newcommand{\tr}{\mathrm{Tr}}
\newcommand{\cls}{{{\scriptscriptstyle\mathrm{XY-only}}}}
\newcommand{\ppi}{{{\scriptscriptstyle\mathrm{PPI}}}}
\newcommand{\ppiplus}{{{\scriptscriptstyle\mathrm{PPI++}}}}
\newcommand{\REPPI}{{{\scriptscriptstyle\mathrm{RePPI}}}}
\newcommand{\cc}{{{\scriptscriptstyle\mathrm{CC}}}}
\newcommand{\all}{{{\scriptscriptstyle\mathrm{All}}}}
\newcommand{\cf}{{{\scriptscriptstyle\mathrm{CrossFit}}}}
\newcommand{\initial}{{{\scriptscriptstyle\mathrm{Init}}}}
\newcommand{\rectifier}{imputed loss~}
\DeclareMathOperator*{\argmin}{argmin}
\newtheorem{theorem}{Theorem}
\newtheorem{lemma}{Lemma}
\newtheorem{proposition}{Proposition}
\newtheorem{remark}{Remark}
\newtheorem{assumption}{Assumption}
\newtheorem{example}{Example}
\newcommand\blfootnote[1]{%
  \begingroup
  \renewcommand\thefootnote{}% Remove numbering
\footnotetext{\Hy@raisedlink{\relax}#1}% Insert the footnote text without a hyperlink
  \addtocounter{footnote}{-1}% Prevent incrementing the counter
  \endgroup
}
\begin{document}
\title{Predictions as Surrogates: \\Revisiting Surrogate Outcomes in the Age of AI}
\author{Wenlong Ji, Lihua Lei, Tijana Zrnic}

\affil{Stanford University}
\maketitle
\blfootnote{Authors are ordered alphabetically. We thank Bin Nan and Jiwei Zhao for their helpful comments. L.L. and W.J. are grateful for the support of National Science Foundation grant DMS-2338464. T.Z. gratefully acknowledges support from Stanford Data Science.}
\begin{abstract}
We establish a formal connection between the decades-old surrogate outcome model in biostatistics and economics and the emerging field of prediction-powered inference (PPI). 
The connection treats predictions from pre-trained models, prevalent in the age of AI, as cost-effective surrogates for expensive outcomes that are not fully observed.
Building on the surrogate outcomes literature, we develop \emph{recalibrated prediction-powered inference}, a more efficient approach to statistical inference than existing PPI proposals. 
Our method departs from the existing proposals by using flexible machine learning techniques to learn the optimal ``imputed loss'' through a step we call \emph{recalibration}.
Importantly, the method always improves upon the estimator that relies solely on the data with available true outcomes,
even when the optimal imputed loss is estimated imperfectly, and it achieves the smallest asymptotic variance among PPI estimators if the estimate is consistent.
Computationally, our optimization objective is convex whenever the loss function that defines the target parameter is convex. We further analyze the benefits of recalibration, both theoretically and numerically, in several common scenarios where machine learning predictions systematically deviate from the outcome of interest. We demonstrate significant gains in effective sample size over existing PPI proposals via three applications leveraging state-of-the-art AI models.
\end{abstract}

\section{Introduction}

In many scientific applications, the outcome of interest is costly or time-consuming to acquire. Surrogate outcomes, sometimes referred to as auxiliary or proxy variables, are often collected to accelerate data analysis and improve statistical efficiency. Intuitively, surrogates are outcomes that are easy to obtain and highly correlated with the outcome of interest, which are usually not fully observed. Early developments of surrogate outcome models were motivated by clinical trials where measuring the primary endpoint may be ``unduly invasive, uncomfortable or expensive'' \citep{prentice1989surrogate} and sometimes ``confounded by secondary treatments or competing risks'' \citep{wittes1989surrogate}.
For example, \cite{wittes1989surrogate} consider the peak cardiac enzyme level in the bloodstream as a surrogate for the damage to the heart muscle caused by a myocardial infarction. Other examples include the CD4 counts as a surrogate for HIV infection \citep{fleming1994surrogate}, the acute disease status as a surrogate for the chronic disease \citep{pepe1992inference}, and responses shortly after the trial as a surrogate for long-term outcomes in longitudinal studies with drop-outs \citep{post2010analysis}. 

Later on, surrogates have found applications in economics and marketing. For example, \cite{chen2005measurement} present a case study on estimating returns to schooling using the Current Population Survey (CPS). Ideally, the study would be based on employer-reported social security earnings, but these outcomes are only available for the subset of individuals who reported their social security numbers in the survey. As a result, Chen et al. use the household-reported income as a surrogate for the outcome of interest. \cite{athey2019surrogate} study the effect of the Greater Avenues
to Independence (GAIN) job training program on long-term labor market outcomes in California, surrogated by short-term employment and earnings. \cite{kallus2024role} discuss an example of using digital ad clicks, which are available for all users, to surrogate visitations to brick-and-mortar stores, which are only observable for those who agree to share cellphone geolocation data. Recently, tech firms have leveraged online experiments run for two weeks or less as surrogates for the purpose of understanding the long-term effects of a newly launched feature  \citep{athey2019surrogate, gupta2019top, zhang2023evaluating, tran2023inferring}.

In all aforementioned applications, surrogates are domain-specific variables that still need to be collected by the researcher, albeit at a lower cost than the outcome of interest. The incurred cost and time required for surrogate measurement are not always negligible. Note also that the surrogates may themselves be missing due to survey non-response, attrition, or unexpected failure of the measurement system. This would lead to a violation of surrogacy assumptions \citep{prentice1989surrogate,frangakis2002principal,lauritzen2004discussion,chen2007criteria,vanderweele2013surrogate}, even if the outcome of interest is missing at random.

\subsection{Predictions as Surrogates in the Age of AI}
With the rise of machine learning and AI, a nascent literature on prediction-powered inference (PPI) introduces predictions by pre-trained models as another class of surrogates \citep{angelopoulos2023prediction, angelopoulos2023ppi++, wang2020methods, zrnic2024cross, zrnic2024active, fischstratified, xu2025unified}. 
For example, a large language model (LLM) can quickly generate text annotations, such as tones and sentiments, which can serve as surrogates for ``gold-standard'' human annotations \citep{egami2024using,gligoric2024can}. Despite the similarity to the earlier uses of surrogates, prediction surrogates present an important conceptual departure from the surrogate framework. For example, the cost of generating a prediction, even from large-scale commercial AI models, is often orders of magnitude smaller than measuring a domain-specific surrogate variable. In addition, prediction surrogates are never missing: the prediction is fully decided by the covariates or contextual information associated with each unit (up to algorithmic randomness that can be controlled by the researcher; e.g., random seeds).  

This last point raises an important question. Strictly speaking, unlike domain-specific surrogates, predictions carry no additional information since they are solely a function of the observables. What is then the value of using pre-trained models? The answer is: they accelerate learning.
In  settings where covariates are low-dimensional and tabular, the researcher can learn the relationship between the outcome of interest and the covariates precisely simply using the data at hand, without resorting to external prediction models, provided that the data is of a decent size. However, this strategy may be ineffective or even infeasible in modern scientific applications. Indeed, given the vast computational resources and large amounts of high-dimensional and unstructured data that go into their training, pre-trained models can ``pick up'' complex patterns that can significantly accelerate learning the relationship between outcomes and covariates. This makes pre-trained models often more reliable than ``hand-cooked'' domain-specific models.

For example, election forecasting models rely on multimodal data such as polls, prediction markets, past election results, and economic indicators \citep[e.g.][]{hummel2014fundamental, donnini2024election}. The researcher does not have access to all data used to train such models but can query the models for any demographic group of interest through the API. The performance gap between a hand-cooked model and a pre-trained model is even more pronounced in the age of AI, when no single researcher or lab has sufficient resources to train a generative model from scratch that is competitive with commercial models. Furthermore, for unstructured covariates such as images and texts, fitting good models of the outcome of interest is increasingly challenging for most researchers. With massive training data, pre-trained AI models can learn from a wide variety of contexts, capturing nuances, relationships, and patterns that smaller datasets might not cover. This diversity leads to more generalizable representations, and hence more reliable predictions. Empirical evidence supports this, even in areas where general-purpose AI models might not be expected to excel. For example, \cite{vafa2022career} build a foundation model for job sequences, by treating each occupation as a word, using millions of passively-collected resumes from labor market participants. Yet, \cite{du2024labor} later show that the model underperforms off-the-shelf LLMs with careful fine-tuning. 

For all these reasons, pre-trained machine learning and AI models carry substantive value, as the relationship between outcomes and covariates is often difficult to learn solely from the available data. This contrasts with the perspective of semiparametric statistics, which assumes a low-dimensional setting where all conditional distributions can be learned accurately without relying on external data.

\subsection{Recalibrated PPI: A Lesson from Surrogate Outcomes}
{
Although predictions are natural surrogates, the formal connection between PPI and the literature on surrogate outcome models is still an active area of investigation. Existing literature \cite{gronsbell2024another, miao2025assumption, xu2025unified} has explored using methods from surrogate outcome models to improve PPI, however a practical computational strategy to achieve efficiency is not yet established.}
Our first contribution is to explain the two frameworks in a unified language, clarifying the connections and distinctions. Drawing on results from the surrogate outcomes literature \citep{robins1994estimation, chen2000unified,
chen2003information, chen2005measurement, chen2008semiparametric, tang2012efficient}, we identify inefficiencies in existing PPI proposals \citep{angelopoulos2023prediction, angelopoulos2023ppi++, gronsbell2024another}. 

Building off the established connections, we then introduce a more efficient method, termed \emph{recalibrated PPI}, or RePPI for short. The method is applicable for estimands defined through estimating equations. When the estimating equation is given by a generalized linear model (GLM),
% with canonical links,
the procedure essentially applies PPI with ``recalibrated'' predictions, obtained by approximating the conditional expectation of the true outcome given the predicted outcome and the covariates that define the estimand. For general estimating equations, the method applies the same recalibration to the ``imputed loss''~\citep{angelopoulos2023prediction}.
%evaluated at an initial estimate.
The recalibration can be regarded as model fine-tuning. Recalibrated PPI is always more efficient than inference that does not utilize predictions as surrogates, even when the conditional expectation of the true outcome is estimated inconsistently. If the conditional expectation is estimated consistently, the method achieves the minimum asymptotic variance among all ``prediction-powered estimators,'' defined in Section \ref{sec: surrogate and PPI}. While the core idea is inspired by existing literature \citep{robins1994estimation, chen2000unified}, we differ from the previous proposals in two key aspects: (1) our method applies cross-fitting to enable flexible recalibration of the \rectifier via machine learning algorithms \citep{chernozhukov2018double}; (2) we use a linearized empirical risk minimization approach to simplify the computation. { We elaborate on our methodological novelty in Section \ref{sec:method_contribution}.}

% Because predictions differ from domain-specific surrogates in nature, it is crucial to study predictions rather than treat them as generic surrogates. 
Recognizing that even the best general-purpose model can generate biased predictions, another main contribution of this paper is to study the role of recalibration in the context of common use-cases for pre-trained models. We provide theoretical insights into the benefits of recalibration in three common scenarios:
\begin{itemize}
    \item \textbf{Modality mismatch.} The predictions may be based on only a subset of the available covariates. For example, for each individual in the dataset we may have both demographic information and a medical scan (e.g., MRI). We may use a computer vision model to predict the individual's diagnosis based on the medical scan, but there is no way to input the demographic information into the prediction. In such cases, recalibration can improve efficiency by fine-tuning the generic predictions for each demographic group.
    
    \item \textbf{Distribution shift.} The prediction model may have been trained on a general-purpose dataset following a different distribution than the population under study. For example, most LLMs are trained on texts from the whole internet and may not be well-calibrated to specific subgroups. Recalibration improves efficiency by adjusting the predictions to better reflect the outcome in the population of interest.
    \item \textbf{Discrete predictions.} Discrete predictions, commonly used for binary or categorical outcomes, may not be sufficiently informative. Moreover, while there are sometimes ways of obtaining a probabilistic output (e.g., by prompting an LLM to produce probabilistic predictions, or by looking at its token probabilities), these are widely acknowledged to be miscalibrated and unreliable \citep{xiong2023can,wei2024measuring}. Recalibration improves efficiency by turning the discrete predictions into calibrated numerical ones. We note that \cite{hofer2024bayesian} similarly acknowledge a need for recalibrating discrete predictions in the PPI context.
\end{itemize}

\section{A Review of the Surrogate Outcome Model and PPI}
% In this section, we review the  surrogate outcome model and build its connection the the prediction-powered inference framework. 

\subsection{Surrogate Outcome Model}

The surrogate outcome model addresses the standard problem of inferring the relationship between an outcome variable $Y$ and covariates $X$. The parameter of interest $\theta^\star\in \R^d$ is defined as the solution to an estimating equation: 
\begin{equation}
\label{eqn:estimand}
\E[U_\theta(X, Y)] = 0,
\end{equation}
for some function $U_\theta$.  For example, $U_\theta(X, Y)$ can be chosen to be a score function, i.e., the gradient of a log-likelihood.

When data collection is costly or time-consuming, it may be impossible to measure the true outcome for all subjects. In the hope of increasing the sample size, researchers often collect a surrogate outcome $\hat Y$ that is correlated with the outcome of interest $Y$ and can be measured at a much lower cost.  We assume that we observe \(n\) labeled observations, for which \(Y\) is available, and \(N\) unlabeled observations, for which only \((X,\hat Y)\)
% or \((X,W,\hat Y)\) 
 is available. Equivalently, if \(D\in\{0,1\}\) denotes the indicator of having a label, \(D=1\) meaning that \(Y\) is observed, then we have an incomplete dataset $\{(Y_i, \hat Y_i, X_i,  D_i)\}_{i=1}^{n+N}$.

Unless $Y$ and $\hat Y$ are perfectly correlated, replacing $Y$ by $\hat Y$ does not yield a valid estimating equation, i.e., $\E[U_{\theta^\star}(X, \hat{Y})] \neq 0$.
% To correct for the bias, \cite{pepe1992inference} considers the profile likelihood $p(\hat{Y}\mid X)$ by integrating out $Y$ under specific parametric models. \cite{robins1994estimation} consider an analogous semiparametric model with fully observed outcomes, but partially observed covariates. While they focus on missing covariates, their theory applies to general missingness patterns; we describe their theory under the missing outcomes model. They assume that the missingness occurs at random, i.e., $\P(D= 1|Y,\hat Y, X) = p$ for some constant $p\in (0, 1)$, and propose solving the modified estimating equation
\cite{pepe1992inference} provides an early semiparametric solution to correct for this bias; here we take the perspective of \cite{robins1994estimation}, who consider a similar semiparametric model. We note that, although they focus on missing covariates rather than missing outcomes, their theory directly applies to the latter. They assume that the missingness occurs at random, i.e., $\P(D= 1|Y,\hat Y, X) = p$ for some $p\in (0, 1)$. Their proposed estimator $\hat\theta$ is defined as the solution to the modified estimating equation
\begin{equation}
\label{eqn: surrogate estimator}
\sum_{i=1}^{n+N} \left(\frac{D_i}{p}U_\theta(X_i, Y_i) - \frac{D_i-p}{p}s_\theta( X_i, \hat{Y}_i)\right) = 0,    
\end{equation}
where $s_\theta$ is a user-specified function. {Throughout the paper, we will call $s_\theta$ the \emph{imputed score function} or simply \emph{score function}, as it usually appears in the form of the gradient of a likelihood function in most applications discussed in the paper, but it can be more general in principle}. They prove that the optimal choice is given by 
\begin{equation}
\label{eqn:optimal score}
    s^\star_\theta(X, \hat{Y}) = \E[U_\theta(X, Y)\mid X, \hat{Y}].
\end{equation}
The resulting estimator $\hat\theta$ with this choice of $s_\theta$ is \emph{semiparametrically efficient}. The authors suggest estimating the optimal $s^\star_\theta$ parametrically.

{

Equation \eqref{eqn: surrogate estimator} is an augmented inverse probability weighted (AIPW) estimating equation for a
missing-outcome problem \citep{robins1994estimation, tsiatis2006semiparametric}. The first term, $\frac{D_i}{p} U_\theta(X_i,Y_i),$ is the inverse-probability weighted complete-data estimating function, while the second term, $-\frac{D_i-p}{p}s_\theta(X_i,\hat Y_i),$
is a mean-zero augmentation based on the surrogate information \((X_i,\hat Y_i)\). Equivalently,
the estimating equation can be written as
\begin{equation}
    \label{eqn: AIPW}
    \sum_{i=1}^{n+N}
    \left[
        \frac{D_i}{p}\{U_\theta(X_i,Y_i)-s_\theta(X_i,\hat Y_i)\}
        +
        s_\theta(X_i,\hat Y_i)
    \right]
    =0 .
\end{equation}
This is the usual AIPW decomposition in causal inference: the labeled observations ($D_i=1$) estimate the residual
\(U_\theta(X,Y)-s_\theta(X,\hat Y)\), while all observations contribute the surrogate-based
augmentation \(s_\theta(X,\hat Y)\). In our setting the observation probability \(p\) is known
and constant, so the main issue is not the estimation of $p$ but the choice of the augmentation
\(s_\theta\).

Recognizing the practical challenge of choosing a well-specified parametric model for fitting $s_\theta^\star$, \cite{chen2000unified} improve the surrogate outcome estimator with a safeguard against poorly specified $s_\theta$. They condition on the missingness, i.e., they assume access to a dataset with observed outcomes $\{(Y_i, \hat Y_i, X_i)\}_{i=1}^{n}$ and a dataset with missing outcomes $\{(\hat Y_i, X_i)\}_{i=n+1}^{n+N}$ independently. For $U_\theta (X,Y)\in \R^d$, they propose the following ``safe'' estimator, which we will refer as the ``CC-estimator'' for short:
\begin{equation}
\label{eqn: CC estimator}
    \hat{\theta}^\cc \triangleq \hat{\theta}^\cls - \hat{M}_s(\hat{\theta}_s^\cls - \hat{\theta}_s^\all).
\end{equation}
Here, $\hat{\theta}^\cls$, $\hat{\theta}_s^\cls$, and $\hat{\theta}_s^\all$ respectively solve the following estimating equations:
\[
\frac{1}{n}\sum_{i=1}^{n}U_\theta(X_i, {Y}_i) = 0, \qquad \frac{1}{n}\sum_{i=1}^{n}s_\theta(X_i, \hat{Y}_i) = 0, \qquad \frac{1}{n+N}\sum_{i=1}^{n+N}s_\theta(X_i, \hat{Y}_i) = 0,
\]
and the matrix $\hat{M}_s$ is an empirical estimate of 
$$\E[\nabla U_{\hat{\theta}^\cls}]^{-1}\Cov(U_{\hat{\theta}^\cls},  s_{\hat{\theta}^\cls })\Cov( s_{\hat{\theta}^\cls})^{-1}\E[\nabla s_{\hat{\theta}^\cls}]^{-1}$$

based on the labeled data. The matrix $\hat{M}_s$ can be viewed as a tuning parameter that decides how much to rely on the predictions $\hat Y_i$; if it equals zero, then $\hat{\theta}^\cc$ reduces to the ``XY-only'' estimator $\hat{\theta}^\cls$ that ignores the predictions. The specific choice of $\hat{M}_s$ above optimizes estimator efficiency, and thus the CC-estimator is never worse by incorporating predictions, no matter the choice of $s_\theta$. If, however, we happen to be able to learn and use the optimal score \eqref{eqn:optimal score}, then the estimator is semiparametrically efficient.

Inspired by the above works, we develop a method for the context of prediction-powered inference that is both \emph{efficient} when the optimal score function is estimated consistently, and \emph{safe} against bad predictions regardless of how well the score function is chosen.

%As a different approach, \cite{chen2008improving} proposes an empirical-likelihood-based estimator to safeguard the estimator when $s_\theta$ is poorly estimated. It first solves a set of empirical likelihood weights through a constrained optimization problem and then uses them to solve a weighted estimation equation.
% $$
% p^\star,q^\star = \argmax_{p,q} \sum_{i=1}^n \log p_i + \sum_{j=n+1}^{n+N} \log q_j 
% $$
% subject to the constraints  
% $
% \sum_{i=1}^n p_i = 1, \sum_{j=n+1}^{n+N} q_j = 1, p_i,q_j\geq 0, 
% $ and 
% $$
% \frac{1}{n}\sum_{i=1}^n p_is_\theta(X_i,\hat Y_i) = \frac{1}{N}\sum_{j=n+1}^{n+N} q_js_\theta(X_j,\hat Y_j).
% $$
% And then use these weights to solve the weighted estimation equation 
% $$
% \sum_{i=1}^np_i^\star U_\theta(X_i, Y_i) = 0.
% $$
%It is shown that this empirical likelihood approach can also guarantee improvement over the XY-only estimator, and achieves the same asymptotic efficiency as \cite{chen2000unified}.
}
\subsection{Connection to Prediction-Powered Inference}
\label{sec: surrogate and PPI}
Prediction-powered inference (PPI) was proposed by \cite{angelopoulos2023prediction} as a way to incorporate machine learning predictions from any black-box model into statistical inference.
In their setting, the researcher has access to two datasets, one labeled and one unlabeled. For notational convenience, we split the covariates into two sets: by $X$ we denote the covariates that define the inference problem, as in \eqref{eqn:estimand}, and by $W$ we denote possibly unstructured, high-dimensional additional covariates, such as text or images, that can be used for prediction. Thus, the researcher has access to $n$ i.i.d. labeled data points, $\{(Y_i,  X_i, W_i)\}_{i=1}^n$, and $N$ i.i.d. unlabeled data points, $\{(X_i, W_i)\}_{i=n+1}^{n+N}$. The working assumption in PPI is that the distribution of $(X_i, W_i)$ is the same in the two datasets.
The target parameter is defined as 
\begin{equation}\label{eq:thetastar}
\theta^\star = \argmin_{\theta\in \Theta}\E[\ell_{\theta}(X, Y)],
\end{equation}
for a loss $\ell_\theta$ that is convex in $\theta \in \R^d$. This target is equivalent to the estimating-equation target \eqref{eqn:estimand} if we take $U_\theta = \nabla \ell_{\theta}$.

The researcher additionally has access to a black-box machine learning model $f$, which outputs predictions $\hat{Y} = f(X, W)$. The model is not required to take both $X$ and $W$ as inputs; often it only takes $W$, e.g. in the case of text annotation or image classification.
With the assistance of the machine learning model, one can expand the labeled and the unlabeled datasets with predictions $\hat{Y}$. Clearly, this recovers the same problem structure as the surrogate outcome model---the researcher has access to an incomplete dataset $\{(Y_i, \hat{Y}_i, X_i, W_i, D_i)\}_{i=1}^{n+N}$, where $Y_i$ is observed if and only if $D_i = 1$, and $D_i=1$ for $i\in\{1,\dots,n\}$ and $D_i=0$ for $i\in\{n+1,\dots,n+N\}$. The resemblance suggests that predictions can be thought of as surrogates. 
%An analogous setting with deterministic $D_i$ was considered in the classical surrogate outcomes literature \citep[e.g.][]{chen2003information}. 
Throughout the paper, we will study a setting where $n/N \rightarrow r$ with a limit $r\in (0,1)$ as $n\rightarrow\infty$ to develop asymptotic theory. 

The estimators from most of the recent PPI literature~\citep{angelopoulos2023prediction,angelopoulos2023ppi++,miao2025assumption,gan2024prediction,gronsbell2024another, xu2025unified} can be written in a unified form. We call the following unifying formula the \emph{PPI estimator}:
\begin{equation}
\label{eqn: PPI}
    \begin{aligned}
\hat{\theta}_g^\ppi = \argmin_{\theta} \frac{1}{n}\sum_{i=1}^{n}\ell_{\theta}(X_i, Y_i) -\lb\frac{1}{n}\sum_{i=1}^{n}g_{\theta}(X_i, \hat Y_i) - \frac{1}{N}\sum_{i=n+1}^{n+N}g_{\theta}(X_i, \hat Y_i)\rb.  
\end{aligned}
\end{equation}
Here, $g_\theta$ is a method-specific function that we call the \emph{imputed loss}; the different estimators from the literature differ in their choice of $g_\theta$. If $\ell_\theta$ is convex, \eqref{eqn: PPI} is essentially equivalent to \eqref{eqn: surrogate estimator} with $s_\theta(X, \hat{Y}) = \nabla g_\theta(X, \hat{Y})$.
Similar to \eqref{eqn: surrogate estimator}, the PPI estimator can be viewed as optimizing the loss function of the AIPW form with known propensity score $p=\frac{n}{n+N}$.

We summarize below the existing choices of $g_\theta$ in the PPI literature and their counterparts from surrogate outcomes and related literature. 
\begin{itemize}
\item The XY-only estimator $\hat{\theta}^\cls$ is a special case of \eqref{eqn: PPI} with $g_\theta(X, \hat{Y}) = 0$. Recall, this is the estimator that ignores the predictions and simply uses the subset of the data where $Y$ is observed.
\item The standard PPI estimator \citep{angelopoulos2023prediction, angelopoulos2023ppi++}, which we denote by $\hat{\theta}^\ppi$, chooses $g_\theta = \ell_\theta$. 
% When $\hat{Y}$ is generated from a probabilistic model that attempts to estimate the distribution of $Y$ given $X$,
This estimator is very similar to the estimator from~ \cite{tang2012efficient} with a single imputation and a constant propensity score $p(X_i)= n/(n+N)$.
% , except that the last term is replaced by $1/(n+N)\sum_{i=1}^{n+N}\ell_\theta(X_i, \hat{Y}_i)$.
\item Followup works choose $g_\theta$ such that $\nabla g_{\theta} = \hat{M} \nabla \ell_\theta$ for some matrix $\hat{M}$ that minimizes the asymptotic variance of the resulting estimator within a class $\mathcal{M}$ . Specifically, $\mathcal{M}$ is chosen as the set of scaled identity matrices in  \cite{angelopoulos2023ppi++}, the set of diagonal matrices in \cite{miao2025assumption}, and the set of all matrices in \cite{xu2025unified}. For one-dimensional targets $\theta^\star$, these proposals are equivalent. 
%We denote the PPI++ estimator from \cite{angelopoulos2023ppi++} by $\hat{\theta}^{\ppiplus}$. \citep{schmutz2022don} formulate a similar objective for semi-supervised learning. 
\item \cite{gan2024prediction} consider a more general class with $\nabla g_\theta = \gamma \hat{M}(\theta)  f_\theta$ where $\gamma$ is a scalar, $f_\theta$ is a given function that takes values in $\R^q$, and  $\hat{M}(\theta)$ is the $d\times q$ matrix that minimizes the asymptotic variance of the resulting estimator. In their numerical studies, Gan et al. choose $f_\theta = \nabla \ell_\theta$, as in PPI++. Their method generalizes the method of~\cite{song2024general}, who leverage the unlabeled data through simple summary statistics such as polynomials, rather than pre-trained models.
% choose $g_\theta(X, \hat{Y}) = \hat{M}(\theta)\hat{Y}$. 
% \item \cite{gronsbell2024another} study the squared loss, $\ell_\theta(X, Y) = (Y - X^\top\theta)^2/2$. They apply the approach proposed by \cite{chen2000unified}, which is closely related to Neyman's $C(\alpha)$ test \citep{neyman1959optimal}. 
% Via simple algebra, we can show their CC estimator is given by \eqref{eqn: PPI} with $g_\theta = (\E[Y\mid X, \hat{Y}] - X^\top \theta)^2$. 
\end{itemize}

All aforementioned estimators but $\hat{\theta}^\ppi$ are guaranteed to be more efficient than the XY-only estimator $\hat{\theta}^\cls$. However, 
none of them achieves the lowest asymptotic variance among the class of PPI estimators defined by \eqref{eqn: PPI}, except in a few special cases.
% , except for the estimator of \cite{gronsbell2024another} for square loss. 

Returning to \cite{robins1994estimation}, this earlier work implies that the optimal imputed loss is given~by 
\begin{equation}\label{eq:optimal_gtheta}
\nabla g_{\theta}^\star(X, \hat{Y}) = \frac{N}{n+N}\E[\nabla \ell_\theta(X, Y)\mid X, \hat{Y}].
\end{equation}
In general, this optimal choice is more complex than a linear transformation applied to $\nabla \ell_\theta$. 
% While the estimator of \cite{chen2000unified} and \cite{gronsbell2024another} can be defined beyond quadratic loss, we show that it is not efficient for general loss functions. 
Recognizing this difficulty, in this paper, we roughly choose $g_\theta$ to be of the form 
\begin{equation}\label{eq:our_gtheta}
\nabla g_\theta(X, \hat{Y}) = \hat{M}  \nabla \hat g^\star_\theta(X,\hat Y),
\end{equation}
where $\nabla \hat g^\star_\theta$ is an estimate of the optimal $\nabla g_\theta^\star$ by \cite{robins1994estimation}, and $\hat{M}$ is a $d\times d$ matrix that serves a similar purpose as in \cite{angelopoulos2023ppi++}, \cite{miao2025assumption}, \cite{xu2025unified}, and, as discussed earlier, \cite{chen2000unified}: to protect against poor estimates $\nabla \hat g^\star_\theta$, ensuring no loss in efficiency compared to the XY-only baseline. We do not restrict $\hat{M}$ to be within a simple class.\footnote{Although the right-hand side of \eqref{eq:our_gtheta} is not necessarily the gradient of some function, we directly work with $\nabla g_\theta$ in our theory, without having to invoke $g_\theta$, so this is not an issue.}

Below, we provide a concrete example to show the explicit form of the PPI estimator. 
%Unlike \cite{chen2008improving}, we allow for machine learning estimates of $g^\star_\theta$, since it is challenging to specify a correct parametric model for the relationship between predictions $\hat{Y}$ and loss gradients.
{

\begin{example}[Mean estimation]
\label{ex:mean}
Consider the mean estimation problem where the goal is to estimate $\theta^\star = \E[Y]$. Equivalently, we can write this estimand as an instantiation of \eqref{eq:thetastar} with $\ell_\theta(X,Y) = (Y-\theta)^2$.
Let \(m^\star(X,\hat Y)=\E[Y\mid X,\hat Y]\). Since
\(\nabla \ell_\theta(X,Y)=2\theta-2Y\), the optimal imputed loss \eqref{eq:optimal_gtheta} can be chosen as
$
    g_\theta^\star(X,\hat Y)
    =\frac{N}{n+N}(m^\star(X,\hat Y)-\theta)^2.
$
The PPI objective \eqref{eqn: PPI} then
becomes
\[
    \frac{1}{n}\sum_{i=1}^n (Y_i-\theta)^2
    -
    \frac{N}{n+N}
    \left\{
        \frac{1}{n}\sum_{i=1}^n (m^\star(X_i,\hat Y_i)-\theta)^2
        -
        \frac{1}{N}\sum_{i=n+1}^{n+N} (m^\star(X_i,\hat Y_i)-\theta)^2
    \right\}.
\]
Therefore, the resulting estimator has a closed form
$
    \hat\theta_{g^\star}^\ppi
    =
    \frac{1}{n+N}\sum_{i=1}^{n+N} m^\star(X_i,\hat Y_i)
    +
    \frac{1}{n}\sum_{i=1}^n
    \{Y_i-m^\star(X_i,\hat Y_i)\}.
$
\end{example}
}
\begin{remark}\label{rmk: W}
We take a moment to discuss the relevance of the additional covariates $W$, which do not appear in the definition of the target \eqref{eq:thetastar}. Technically, we could allow $g_{\theta}$ to depend on $W$. \cite{robins1994estimation} again implies that the optimal $\nabla g_\theta(X, W, \hat{Y})$ would be given by $\E[\nabla \ell_\theta(X, Y)\mid X, W, \hat{Y}]$. However, this would make $\hat{Y}$ redundant since the prediction is simply a function of $(X, W)$, so $\E[\nabla \ell_\theta(X, Y)\mid X, W, \hat{Y}] = \E[\nabla \ell_\theta(X, Y)\mid X, W]$. An implicit assumption in existing PPI works, which we adopt here as well, is that $W$ is unstructured, high-dimensional data that is difficult to access other than through $\hat Y$. For example, $W$ can be text or image data, which is difficult for modeling and learning from scratch since it often requires a large sample size and high computational budget to train state-of-the-art deep learning models. In this scenario, directly estimating $\E[\nabla \ell_\theta(X, Y)\mid X, W]$ is extremely challenging, if not impossible. In contrast, pretrained foundation models, which acquire general knowledge from massive data sources, can process this task-specific information efficiently and produce high-quality predictions $\hat Y$ with low cost -- they essentially provide an informative summary or featurization of $W$. For this reason, we focus on the class of estimators that depend on $W$ only through $\hat{Y}$. We leave other, more flexible uses of $W$ for future research. 
\end{remark}
{

\begin{remark}
In addition to the ``loss-level'' correction \eqref{eqn: PPI}, a few other PPI works follow the formulation of \cite{chen2000unified} and explore an ``estimator-level'' correction: 
\begin{equation}
\label{eqn: PPI estimator correction}
    \begin{aligned}
\hat{\theta}_g^\ppi = \hat{\theta}^\textrm{lab} - M(\hat{\theta}_g^\textrm{lab} - \hat{\theta}_g^\textrm{unlab}),
\end{aligned}
\end{equation}
where $M\in \R^{d \times d}$ is a tuning matrix, and
\begin{equation*}
    \hat{\theta}^\textrm{lab} =  \argmin_{\theta} \frac{1}{n}\sum_{i=1}^{n}\ell_{\theta}(X_i, Y_i), \ \hat{\theta}_g^\textrm{lab} = \argmin_{\theta} \frac{1}{n}\sum_{i=1}^{n}g_{\theta}(X_i, \hat Y_i), \ 
    \hat{\theta}_g^\textrm{unlab} = \argmin_{\theta} \frac{1}{N}\sum_{i=n+1}^{n+N}g_{\theta}(X_i, \hat Y_i). 
\end{equation*}

\cite{miao2024valid, zrnic2024note} choose the original loss function $g=\ell$ and a scalar matrix $M =rI$; \cite{miao2024task, kluger2025prediction} choose the original loss function $g=\ell$ and $M$ as an estimate of $(\Var(\hat{\theta}_g^\textrm{lab})+\Var(\hat{\theta}_g^\textrm{unlab}))^{-1}\Cov(\hat{\theta}_g^\textrm{lab},\hat{\theta}^\textrm{lab})$; \cite{gronsbell2024another} further replace $\hat{\theta}_g^\textrm{unlab}$ with $\hat{\theta}_g^\textrm{all}$, which uses all labeled data and unlabeled data for the estimaton. In particular, \cite{gronsbell2024another} point out the efficient choice of $g$, but they still suggest using $g=\ell$ and optimizing $M$ among all matrices, admitting the practical challenge of finding the optimal $g$. In terms of asymptotic theory, the estimator-level correction \eqref{eqn: PPI estimator correction} is similar to the loss-level correction \eqref{eqn: PPI}, and we will focus on the formulation \eqref{eqn: PPI}.
\end{remark}
}
\subsection{Methodological Contribution}
\label{sec:method_contribution}
{

In this section, we highlight the methodological contribution of our method compared to the existing literatures on surrogate outcomes and prediction-powered inference. 
\begin{enumerate}
    \item \textbf{Cross-fitting}: Our RePPI estimator uses cross-fitting to isolate the score learning and inference. Classical surrogate outcome estimators \cite{chen2000unified, chen2008improving} typically use low-dimensional parametric models for modeling $\E[U_\theta(X, Y)\mid X, \hat{Y}]$, which can achieve efficiency when the model is correctly specified, even if we use the same sample to learn the model and do inference. However, if one replaces the working model with a highly adaptive learner, such as random forests, boosted trees, or neural networks, same-sample fitting can introduce first-order overfitting bias. Cross-fitting makes the evaluation of recalibrated scores for all labeled observations out-of-sample, so that the final estimator can achieve efficiency under an $L^2$-convergence condition for the learned score. Therefore, cross-fitting enables us to use highly flexible machine learning models to handle complex datasets, where the optimal score cannot be captured within a parametric model, to still achieve efficiency. We elaborate on the role of cross-fitting and compare it with fitting the score on the entire labeled sample in Section \ref{sec: cross-fitting}.
    \item \textbf{Linearized implementation}: Our RePPI estimator can be implemented by optimizing the PPI objective \eqref{eqn: PPI} with a \emph{linear} choice of $g_\theta$. This ensures the final estimation procedure is convex as long as the original loss function is convex, regardless of the property of the fitted score. This resolves the potential non-convexity issue of previous PPI estimators \citep{angelopoulos2023prediction,angelopoulos2023ppi++}. Moreover, our $g_\theta$ only estimates the optimal score function at $\theta^\star$, which is sufficient for asymptotic efficiency. In contrast, \cite{chen2000unified, gronsbell2024another} require estimating the optimal score function
    for all values $\theta$, as well as strong assumptions on the model class to enable efficient minimization with the fitted score function.
    % \cite{chen2008improving} requires a separate constrained optimization problem to compute the empirical likelihood weights, which can be empirically fragile or even infeasible when the distribution of fitted scores is not well-behaved;
    \cite{xu2025unified} use a basis-expansion approach to learn the optimal score, and explicitly require the eigenvalues of the empirical covariance matrix for the basis function to be lower bounded. In Appendix \ref{sec:other-comparison}, we show that the basis-expansion approach can be numerically unstable and suffer from approximation error.
\end{enumerate}
To summarize, although semiparametric efficiency is widely established in the literature, our work provides a practical methodology to achieve this efficiency without requiring strong parametric assumptions on the optimal score or fitting the optimal score for every value of $\theta$. This is particularly important in applications arising from complicated datasets with prediction surrogates generated from AI and machine learning models, where classical parametric models fail to capture the relationship between true labels, covariates, and surrogates. We provide a concrete computational guide (Algorithm \ref{alg:crossfit}) to enable the use of any flexible learner, together with theoretical guarantees on safety against poor predictions and efficiency under consistent estimation of the optimal score (Theorem \ref{thm:main}), as outlined above. 
% \wl{Say other computation strategy that achieves efficiency. Chen 2008 need correctly specified parametric model. We are the first one to achieve efficiency, other PPI paper has convexity issue. Xu uses a different strategy}

% Old Ver
% \begin{remark}\label{rmk: W}
% We take a moment to discuss the relevance of the additional covariates $W$, which do not appear in the definition of the target \eqref{eq:thetastar}. Technically, we could allow $g_{\theta}$ to depend on $W$. \cite{robins1994estimation} again implies that the optimal $\nabla g_\theta(X, W, \hat{Y})$ would be given by $\E[\nabla \ell_\theta(X, Y)\mid X, W, \hat{Y}]$. However, this would make $\hat{Y}$ redundant since the prediction is simply a function of $(X, W)$, so $\E[\nabla \ell_\theta(X, Y)\mid X, W, \hat{Y}] = \E[\nabla \ell_\theta(X, Y)\mid X, W]$. An implicit assumption in existing PPI works, which we adopt here as well, is that $W$ is unstructured, high-dimensional data that is difficult to access other than through $\hat Y$. In other words, estimating $\E[\nabla \ell_\theta(X, Y)\mid X, W]$ is challenging, if not impossible. For this reason, we focus on the class of estimators that depend on $W$ only through $\hat{Y}$. We leave other, more flexible uses of $W$ for future research. 
% \end{remark}
}

\section{Our Method: Recalibrated PPI}
\label{sec: REPPI}

\subsection{Optimal Imputed Loss}
\label{sec: optimal rectifier}
Based on the connections established in Section \ref{sec: surrogate and PPI}, we know that the optimal imputed loss, i.e., the one that yields the smallest asymptotic variance of $\hat{\theta}_g^\ppi$, must be given by \eqref{eq:optimal_gtheta}. For the sake of completeness, we state this result formally below and present a self-contained proof in Section \ref{sec: proof REPPI} of the Supplementary Material. Up to minor technical differences in the setup, the proof is almost the same as in \cite{robins1994estimation}.

\begin{theorem}
\label{thm: efficient PPI}
    Let the target $\theta^\star$ defined in \eqref{eq:thetastar} be unique. Assume that $n/N \rightarrow r$ and the objective function \eqref{eqn: PPI} is convex. Let $H_{\theta^\star} = \E[\nabla^2\ell_{\theta^\star}(X, Y)]$. Under regularity conditions (Assumption \ref{asm: regularity} in Section~\ref{sec: proof REPPI} of the Supplementary Material), $\sqrt{n}(\hat{\theta}_g^\ppi - \theta^{\star})\xrightarrow{d} \N(0, \Sigma^\ppi_g)$, where 
    $$
    \Sigma_g^\ppi  = H_{\theta^\star}^{-1}\lb r \Cov(\nabla g_{\theta^\star}(X, \hat Y)) + \Cov(\nabla \ell_{\theta^\star}(X, Y) - \nabla g_{\theta^\star}(X, \hat Y))\rb H_{\theta^\star}^{-1}.
    $$
    Furthermore, if $g_\theta$ satisfies \eqref{eq:optimal_gtheta} at $\theta^\star$, i.e., 
\begin{equation}\label{eq:optimal_gtheta_thetastar}
    \nabla g_{\theta^\star}(X, \hat{Y})= \frac{1}{1+r} s^\star(X, \hat{Y}), \quad \text{where }s^\star(X, \hat{Y}) = \E[\nabla \ell_{\theta^\star}(X, Y)\mid X, \hat{Y}],
    \end{equation}
    then $\Sigma_g^\ppi = H_{\theta^\star}^{-1}\lb \Cov(\nabla \ell_{\theta^\star}(X, Y)) - \frac{1}{1+r}\Cov(\E[\nabla \ell_{\theta^\star}(X, Y)|X, \hat Y])\rb H_{\theta^\star}^{-1}$, and $\Sigma_g^{\ppi}\preceq \Sigma_{g'}^\ppi$ for any~$g'_\theta$.
\end{theorem}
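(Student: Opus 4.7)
The argument has two parts: derive the asymptotic distribution of $\hat\theta_g^\ppi$, then optimize the resulting asymptotic variance over $g_\theta$. I will treat both parts as standard Z-estimation exercises, leveraging the independence between the labeled and unlabeled samples and the tower property for the optimality step.

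For the asymptotic distribution, I would start from the first-order condition of the convex objective \eqref{eqn: PPI},
\[
0 = \frac{1}{n}\sum_{i=1}^n \nabla \ell_{\theta}(X_i, Y_i) - \frac{1}{n}\sum_{i=1}^n \nabla g_{\theta}(X_i, \hat Y_i) + \frac{1}{N}\sum_{i=n+1}^{n+N} \nabla g_\theta(X_i, \hat Y_i),
\]
and Taylor-expand around $\theta^\star$. The population Hessian is $\E[\nabla^2 \ell_{\theta^\star}] - \E[\nabla^2 g_{\theta^\star}] + \E[\nabla^2 g_{\theta^\star}] = H_{\theta^\star}$; uniqueness of $\theta^\star$ plus convexity plus the regularity conditions of Assumption~\ref{asm: regularity} give $\hat\theta \to \theta^\star$ in probability and legitimize the expansion, yielding
\[
\sqrt{n}(\hat\theta_g^\ppi - \theta^\star) = -H_{\theta^\star}^{-1}\sqrt{n}\,\Bigl[\underbrace{\tfrac{1}{n}\sum_{i=1}^n \bigl(\nabla \ell_{\theta^\star}(X_i,Y_i) - \nabla g_{\theta^\star}(X_i,\hat Y_i)\bigr)}_{A_n} + \underbrace{\tfrac{1}{N}\sum_{i=n+1}^{n+N}\nabla g_{\theta^\star}(X_i, \hat Y_i)}_{B_N}\Bigr] + o_p(1).
\]
Since $\E[\nabla \ell_{\theta^\star}(X,Y)]=0$, the mean of $A_n$ is $-\E[\nabla g_{\theta^\star}]$ and the mean of $B_N$ is $+\E[\nabla g_{\theta^\star}]$, so the combined centering is zero. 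The labeled and unlabeled samples are independent, so by the CLT and $n/N\to r$,
\[
n\cdot \Var(A_n + B_N) \to \Cov(\nabla\ell_{\theta^\star}(X,Y) - \nabla g_{\theta^\star}(X,\hat Y)) + r\,\Cov(\nabla g_{\theta^\star}(X,\hat Y)),
\]
giving the stated $\Sigma_g^\ppi$ after sandwiching by $H_{\theta^\star}^{-1}$.

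For the optimality claim, abbreviate $u = \nabla \ell_{\theta^\star}(X,Y)$, $v = \nabla g_{\theta^\star}(X,\hat Y)$, and $s^\star = \E[u\mid X,\hat Y]$. The inner matrix can be rewritten as $M(v) := \Cov(u) + (1+r)\Cov(v) - \Cov(u,v) - \Cov(v,u)$. Set $v^\star = s^\star/(1+r)$ and write any competitor as $v = v^\star + \delta$ where $\delta = \delta(X,\hat Y)$. The key computation is that, by the tower property, $\Cov(u,\delta) = \E[u\,\delta^\top] = \E[s^\star \delta^\top] = \Cov(s^\star,\delta) = (1+r)\Cov(v^\star,\delta)$, and similarly for the transpose. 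Expanding $M(v^\star + \delta)$, the cross terms therefore cancel exactly, leaving
\[
M(v) = M(v^\star) + (1+r)\,\Cov(\delta) \succeq M(v^\star).
\]
Plugging $v^\star$ back in gives $M(v^\star) = \Cov(u) - \frac{1}{1+r}\Cov(s^\star)$ after using $\Cov(u,s^\star)=\Cov(s^\star)$. Conjugation by $H_{\theta^\star}^{-1}$ preserves the Loewner order, proving both the stated variance formula and $\Sigma_g^\ppi \preceq \Sigma_{g'}^\ppi$ for all $g'$.

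The only real obstacle is the standard one: justifying the Taylor expansion and the stochastic equicontinuity of the empirical process involved, which is what Assumption~\ref{asm: regularity} is presumably designed to handle. Everything else is bookkeeping; the conceptual content is the conditional-expectation cancellation $\E[u\delta^\top] = \E[s^\star \delta^\top]$ that makes $v^\star$ the $L^2$ projection of $u$ onto the space of $(X,\hat Y)$-measurable functions, appropriately rescaled by $1/(1+r)$ to account for the relative labeled/unlabeled sample sizes.
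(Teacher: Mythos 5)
Your proposal is correct and arrives at the same influence-function decomposition $-H_{\theta^\star}^{-1}(A_n+B_N)$ and the same asymptotic covariance; the optimality argument (perturb $v=v^\star+\delta$ and use the tower property to kill the cross terms, i.e.\ $v^\star$ is the scaled $L^2$ projection of $u$ onto $(X,\hat Y)$-measurable functions) is the same completing-the-square computation the paper performs, just organized around the perturbation $\delta$ rather than the identity $\Sigma_g^\ppi=H^{-1}\bigl((1+r)\Cov(\nabla g_{\theta^\star}-\tfrac{1}{1+r}\nabla\ell_{\theta^\star})+\tfrac{r}{1+r}\Cov(\nabla\ell_{\theta^\star})\bigr)H^{-1}$.

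The one substantive difference is how asymptotic linearity is justified. You Taylor-expand the first-order condition (a Z-estimation argument), which implicitly requires the score $\nabla\ell_\theta$ and $\nabla g_\theta$ to be smooth in $\theta$ near $\theta^\star$; but Assumption~\ref{asm: regularity} only posits that the \emph{losses} are convex, locally Lipschitz, and differentiable at $\theta^\star$ — it says nothing about smoothness of the gradients (that extra condition, Assumption~\ref{asm: smooth gradient}, is reserved for Theorem~\ref{thm:main}). The paper instead proves asymptotic normality by the M-estimation route: consistency via uniform convergence on a shell plus convexity, then Lemma 19.31 of van der Vaart to get a quadratic expansion of the objective $n(\mathcal{L}_g^\ppi(\theta^\star+h_n/\sqrt n)-\mathcal{L}_g^\ppi(\theta^\star))=\tfrac12 h_n^\top H_{\theta^\star}h_n+h_n^\top(\cdot)+o_P(1)$, evaluated at two choices of $h_n$ to pin down the minimizer. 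This matters for examples such as quantile regression, where the score is discontinuous and your Taylor expansion of the estimating equation would not go through literally. You flag the gap yourself, but attribute its resolution to the wrong mechanism: Assumption~\ref{asm: regularity} closes it only along the paper's convexity-based route, not along the score-expansion route you sketch. Everything else — the centering of $A_n$ and $B_N$, the independence of the two samples, the CLT with $n/N\to r$, and the optimality claim — matches the paper.
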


Here, the asymptotic variance of $\hat{\theta}_g^\ppi$ with optimal $g_\theta$ defined by \eqref{eq:optimal_gtheta_thetastar} matches the efficiency bound of \cite{xu2025unified} in the prediction-restricted PPI model.
{ In this model, the black-box prediction rule \(f\) is treated as fixed, and the auxiliary information available for unlabeled observations is $(X,\hat Y)$, where $\hat Y=f(X,W)$; we cannot use the full high-dimensional or unstructured covariates \(W\) directly. Equivalently, one observes \(n\) labeled points \((Y_i,\hat Y_i, X_i)\) and \(N\) unlabeled points $(X_i, \hat Y_i)$, with both samples drawn from the same marginal distribution. Under this model, the resulting efficiency bound coincides with the corresponding semi-supervised bound of \citet{xu2025unified} when their auxiliary covariate is taken to be \((X,\hat Y)\). The efficient augmentation in the full nonparametric model in which \(W\) is directly available and tractable for inference would condition on \((X,W)\) and result in a smaller bound.}

Theorem \ref{thm: efficient PPI} implies that the asymptotic variance of $\hat{\theta}_g^\ppi$ depends on the \rectifier only through its gradient at $\theta^\star$, suggesting that the optimal \rectifier is non-unique. Although any $g_\theta$ satisfying \eqref{eq:optimal_gtheta_thetastar} is statistically optimal, the computational efficiency of different choices may vary. For example, the \rectifier defined in \eqref{eq:optimal_gtheta} might be a complicated function of $\theta$ that results in a non-convex objective function in \eqref{eqn: PPI}. Moreover, computing $\hat\theta^\ppi_g$ would require estimating the conditional expectation $\E[\nabla \ell_\theta(X, Y)\mid X, \hat Y]$ for \emph{every} value of $\theta$, which is challenging for general losses.

A more convenient choice of the imputed loss is the linear function
\begin{equation}\label{eq:linear_rectifier}
g_\theta(X, \hat{Y}) = \frac{1}{1+r}\theta^\top s^\star(X, \hat{Y}).
\end{equation}
With this choice, the objective function in \eqref{eqn: PPI} simply adds a linear shift to the standard empirical loss; therefore, it remains convex as long as $\ell_\theta$ is convex. 

\begin{example}[Generalized linear models]
\label{ex:GLM} Suppose $\ell_{\theta}$ is given by a generalized linear model (GLM): $\nabla \ell_\theta(X, Y) = X (\mu(X^\top \theta) - Y)$, for some $\mu$. Then, \eqref{eq:optimal_gtheta_thetastar} gives
\[s^\star(X, \hat{Y}) = X \lb \mu(X^\top \theta^\star) - \E[Y\mid X, \hat{Y}]\rb = \nabla \ell_{\theta^\star}(X, \E[Y\mid X, \hat{Y}]).\]
When predictions are calibrated in the sense that $\hat{Y} = \E[Y \mid X]$, then $\E[Y\mid X, \hat{Y}] = \hat{Y}$, and standard PPI is optimal. In general, $\E[Y\mid X, \hat{Y}]$ can be viewed as a recalibrated prediction.
\end{example}

% \begin{example}[Quantile regression] In quantile regression \citep{koenker2005quantile}, $\nabla \ell_\theta(X, Y) = X(\tau - I(Y \le X^\top \theta))$, for some $\tau \in (0, 1)$. Then, \eqref{eq:optimal_gtheta_thetastar} gives
% \[s^\star(X, \hat{Y}) = X \lb\tau - \P(Y\le X^\top \theta^\star\mid X, \hat{Y})\rb.\]
% Unlike in the case of GLMs, $s^\star(X, \hat{Y})$ cannot be expressed through $\nabla \ell_{\theta^\star}$. 
% \end{example}
{

\begin{example}[Smooth Huber-type robust regression]
Consider a smooth Huber-type robust regression loss \citep{huber1992robust,charbonnier1997deterministic},
$
    \ell_\theta(X,Y)
    =
    \delta^2
    \left[
        \left\{1+\left(\frac{Y-X^\top\theta}{\delta}\right)^2\right\}^{1/2}
        -1
    \right].
$
Then
$
    \nabla \ell_\theta(X,Y)
    =
    -X\,
    \frac{Y-X^\top\theta}
    {
        \left\{1+\left(\frac{Y-X^\top\theta}{\delta}\right)^2\right\}^{1/2}
    },
$
and \eqref{eq:optimal_gtheta_thetastar} gives
$
    s^\star(X,\hat Y)
    =
    -X\,
    \E\left[
        \left.
        \frac{Y-X^\top\theta^\star}
        {
            \left\{1+\left(\frac{Y-X^\top\theta^\star}{\delta}\right)^2\right\}^{1/2}
        }
        \right|X,\hat Y
    \right].
$
Unlike in the case of GLMs, this is generally not equal to
\(\nabla\ell_{\theta^\star}(X,\E[Y\mid X,\hat Y])\), so recalibration must target the score
rather than only the outcome.
\end{example}}

\subsection{Recalibrated PPI: An Efficient Implementation}
\label{sec: computation}
Despite the simple form of \eqref{eq:linear_rectifier}, $g_\theta$ is challenging to approximate because (a) $\theta^\star$ is unknown, and (b) $s^\star$ may be complex. Our main idea is to replace $\theta^\star$ by an initial estimator $\hat{\theta}_0$, such as the XY-only estimator $\hat{\theta}^\cls$, and $s^\star$ by $\hat s$, an estimate of $\E[\nabla \ell_{\hat{\theta}_0}(X, Y)\mid X, \hat{Y}]$ produced via a flexible machine learning method. For example, we can apply random forests or gradient boosting, treating $(\nabla \ell_{\hat{\theta}_0}(X_i, Y_i))_{i=1}^{n}$ as the outcomes and $(X_i, \hat{Y}_i)_{i=1}^{n}$ as the covariates. 

When $\hat s$ consistently estimates $s^\star$, we can show that the resulting estimator $\hat{\theta}_g^\ppi$ is asymptotically equivalent to the estimator given by the optimal \rectifier \eqref{eq:linear_rectifier}. However, if the estimate is asymptotically biased---for example, due to the computational complexity of approximating $s^\star$---the asymptotic variance of the resulting estimator would be inflated and could even be worse than the variance of the XY-only estimator. To guarantee an efficiency gain over the XY-only estimator, we apply the idea of optimal control variates \citep{chen2000unified, gan2024prediction, gronsbell2024another} or power tuning \citep{angelopoulos2023ppi++, miao2025assumption}. Specifically, we pre-multiply $\hat{s}(X, \hat{Y})$ with a matrix $\hat{M}$ that captures the correlation between the true gradient $\nabla \ell_{\theta^\star}(X, Y)$ and the estimated score $\hat s(X,\hat Y)$, and we minimize the asymptotic variance of the resulting PPI estimator over $\hat M$. Again, to operationalize the approach, we replace $\theta^\star$ by an initial estimator $\hat{\theta}_0$ and set 
\begin{equation}\label{eq:Mhat}
\hat M = \widehat \Cov(\nabla \ell_{\hat \theta_0}(X,Y), \hat s(X,\hat Y)) \widehat \Cov(\hat s(X,\hat Y))^{-1},
\end{equation}
where $\widehat \Cov$ denotes the sample covariance matrix. This choice of $\hat M$, as we shall soon see, will guarantee an improvement upon the XY-only estimator.

To mitigate the dependencies in these nested estimation steps, we apply a three-fold cross-fitting procedure. The complete recalibrated PPI procedure is given in Algorithm \ref{alg:crossfit}. We justify the procedure theoretically in the following theorem. 

\begin{algorithm}[t]
% \SetAlgoLined
\vspace{0.5em}
\textbf{Step 1:} Randomly split the labeled dataset into three folds $\mathcal{D}_1$, $\mathcal{D}_2$, and $\mathcal{D}_3$ evenly. 

\textbf{Step 2:} On $\mathcal{D}_3$, compute the initial estimator $\hat \theta_0^1 = \argmin_{\theta} \frac{1}{|\mathcal{D}_3|}\sum_{i\in \mathcal{D}_3}\ell_{\theta}(X_i, Y_i)$

\textbf{Step 3:} On $\mathcal{D}_2$, use flexible method to estimate $\hat{s}(X, \hat{Y})\approx \mathbb{E}[\nabla \ell_{\hat{\theta}_0^1}(X, Y) \mid X, \hat{Y}]$

\textbf{Step 4:} On $\mathcal{D}_1$, compute $\hat M$ as in \eqref{eq:Mhat}

\textbf{Step 5:} On $\mathcal{D}_1$ and the unlabeled data, compute $\hat{\theta}_{\hat g}^\ppi$ with $\hat g_\theta(X, \hat{Y}) = \frac{1}{1+n/N} \theta^\top \hat{M}\hat{s}(X, \hat{Y})$;
denote the obtained estimate by $\hat{\theta}^1$

\textbf{Step 6:} Repeat Steps 2--5 with fold rotations: $(\mathcal{D}_1, \mathcal{D}_3, \mathcal{D}_2)$ and $(\mathcal{D}_2, \mathcal{D}_1, \mathcal{D}_3)$; obtain estimates $\hat{\theta}^2$ and $\hat{\theta}^3$

\textbf{Step 7:} Compute the final estimate: $
\hat \theta^\cf = \frac{|\mathcal{D}_1|}{n} \hat{\theta}^1 + \frac{|\mathcal{D}_2|}{n} \hat{\theta}^2 + \frac{|\mathcal{D}_{3}|}{n} \hat{\theta}^3$
\vspace{0.5em}
\caption{Recalibrated Prediction-Powered Inference (RePPI)}
\label{alg:crossfit}
\end{algorithm}

\begin{theorem}
\label{thm:main}
Let the target $\theta^\star$ defined in \eqref{eq:thetastar} be unique and assume $n/N \rightarrow r$. Let $H_{\theta^\star} = \E[\nabla^2\ell_{\theta^\star}(X, Y)]$. If $\E[\|\hat s(X, \hat Y) - s(X, \hat Y)\|^2] \rightarrow 0$ holds for some $s$, then under regularity conditions (Assumptions \ref{asm: regularity} and \ref{asm: smooth gradient} in Section \ref{sec: proof REPPI} of the Supplementary Material), $\sqrt{n}(\hat{\theta}^\cf - \theta^{\star})\xrightarrow{d} \N(0, \Sigma_s^\REPPI)$, where 
\begin{align}\label{eq:Sigma_REPPI}
    \Sigma_s^\REPPI &= H_{\theta^\star}^{-1}\bigg( \Cov(\nabla \ell_{\theta^\star}(X, Y))  -\Delta\bigg) H_{\theta^\star}^{-1}\\
    \text{and } \Delta &= \frac{1}{1+r} \Cov(\nabla \ell_{\theta^\star}(X, Y), s(X, \hat{Y}))\Cov(s(X, \hat{Y}))^{-1}\Cov(s(X, \hat{Y}), \nabla \ell_{\theta^\star}(X, Y)).\nonumber
    \end{align}
\end{theorem}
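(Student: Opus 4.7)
The plan is to analyze each fold-specific estimator $\hat\theta^k$ via asymptotic linearization, combine the three through Step 7, and reduce the resulting expression to $\Sigma_s^\REPPI$. I focus on $\hat\theta^1$, which in Steps 2--5 is built from $\mathcal{D}_3$ as labeled data with nuisances $(\hat M,\hat s)$ fitted on $\mathcal{D}_1,\mathcal{D}_2$; the two rotated copies are symmetric. Because $\hat g_\theta = \frac{1}{1+n/N}\theta^\top\hat M\hat s$ is linear in $\theta$, the gradient $\nabla\hat g(X,\hat Y) := \frac{1}{1+n/N}\hat M\hat s(X,\hat Y)$ does not depend on $\theta$. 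Setting the first-order condition for $\hat\theta^1$ to zero and Taylor-expanding $\frac{1}{|\mathcal{D}_3|}\sum_{i\in\mathcal{D}_3}\nabla\ell_\theta$ around $\theta^\star$ under Assumptions 1--2 produces
\begin{equation*}
\hat\theta^1 - \theta^\star = -H_{\theta^\star}^{-1}\!\left[\frac{1}{|\mathcal{D}_3|}\!\sum_{i\in\mathcal{D}_3}\!\lb\nabla\ell_{\theta^\star}(X_i,Y_i) - \nabla\hat g(X_i,\hat Y_i)\rb + \frac{1}{N}\!\sum_{i=n+1}^{n+N}\!\nabla\hat g(X_i,\hat Y_i)\right] + o_p(n^{-1/2}),
\end{equation*}
which is exactly the linearization behind Theorem 1, except that $\nabla\hat g$ is random.

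The key step is replacing this random $\nabla\hat g$ by a deterministic limit. Conditional on $\mathcal{D}_1,\mathcal{D}_2$, the pair $(\hat M,\hat s)$ is fixed, and the data in $\mathcal{D}_3$ and in the unlabeled pool are i.i.d.\ from the population. The hypothesis $\E\|\hat s - s\|^2\to 0$, together with consistency of the XY-only initial estimator $\hat\theta_0\to\theta^\star$ (standard M-estimation under Assumption 1) and Assumption 2, implies that $\hat M$ converges in probability to its population analog, so $\nabla\hat g\to \nabla g^*$ in $L^2$ with $\nabla g^*(X,\hat Y) = \frac{1}{1+r}\Cov(\nabla\ell_{\theta^\star}(X,Y), s(X,\hat Y))\Cov(s(X,\hat Y))^{-1}s(X,\hat Y)$. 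The two plug-in contributions to the first-order condition---the labeled sum $-\frac{1}{|\mathcal{D}_3|}\sum_{\mathcal{D}_3}(\nabla\hat g - \nabla g^*)$ and the unlabeled sum $+\frac{1}{N}\sum_{n+1}^{n+N}(\nabla\hat g - \nabla g^*)$---have the same conditional mean $\E[\nabla\hat g - \nabla g^*\mid \mathcal{D}_1,\mathcal{D}_2]$ and therefore cancel in expectation; the residual has conditional variance $(|\mathcal{D}_3|^{-1}+N^{-1})\E\|\nabla\hat g - \nabla g^*\|^2 = o(1/n)$, so the plug-in error is $o_p(n^{-1/2})$.

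Combining the three folds through Step 7, with $|\mathcal{D}_k|/n\to 1/3$, the labeled fold-averages telescope into $\frac{1}{n}\sum_{i=1}^n$ and the three unlabeled averages collapse into one:
\begin{equation*}
\hat\theta^\cf - \theta^\star = -H_{\theta^\star}^{-1}\!\left[\frac{1}{n}\!\sum_{i=1}^n\!\lb\nabla\ell_{\theta^\star}(X_i,Y_i) - \nabla g^*(X_i,\hat Y_i)\rb + \frac{1}{N}\!\sum_{i=n+1}^{n+N}\!\nabla g^*(X_i,\hat Y_i)\right] + o_p(n^{-1/2}).
\end{equation*}
Independence of the two samples and the CLT give $\sqrt{n}(\hat\theta^\cf - \theta^\star)\xrightarrow{d}\N(0,V)$ with $V = H_{\theta^\star}^{-1}\lb\Cov(\nabla\ell_{\theta^\star} - \nabla g^*) + r\Cov(\nabla g^*)\rb H_{\theta^\star}^{-1}$. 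Substituting the explicit $\nabla g^*$ and applying the projection identity $\Cov(\nabla\ell_{\theta^\star}, \nabla g^*) = (1+r)\Cov(\nabla g^*)$ collapses the inner bracket to $\Cov(\nabla\ell_{\theta^\star}) - \Delta$, yielding $V = \Sigma_s^\REPPI$.

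The main obstacle is the second step: controlling the double plug-in error arising from the ML estimate $\hat s$ and the sample-covariance-based $\hat M$ without parametric assumptions. Cross-fitting is essential because conditioning on the other folds reduces the analysis to a concentration argument for a conditionally i.i.d.\ sequence whose $L^2$ norm vanishes; the built-in cancellation between the labeled and unlabeled terms then absorbs any residual first-order bias without separately imposing Neyman orthogonality. Assumption 2 is what lets us transfer consistency of $\hat\theta_0$ through the nonlinear map $\theta\mapsto\nabla\ell_\theta$ when forming $\hat M$.
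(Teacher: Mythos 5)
Your proposal follows essentially the same route as the paper: linearize each fold-specific estimator around $\theta^\star$, replace the estimated imputed-loss gradient by its deterministic limit $\nabla g^* = \tfrac{1}{1+r}\Cov(\nabla\ell_{\theta^\star},s)\Cov(s)^{-1}s$, telescope the three folds into a single labeled average plus one unlabeled average, and reduce the variance algebraically to $\Cov(\nabla\ell_{\theta^\star})-\Delta$ (your projection identity and the final computation are correct). The paper packages the single-fold analysis as an intermediate theorem and invokes empirical-process lemmas (Lemmas 19.24/19.31 and 2.12 of van der Vaart) where you give a hands-on conditioning argument; your conditional mean-cancellation plus $o(1/n)$ conditional-variance bound is exactly the content those lemmas deliver here, so this is a cosmetic rather than substantive difference. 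One slip to fix: you assert that $\hat M$ is fitted on $\mathcal{D}_1,\mathcal{D}_2$, but in Algorithm 1 it is computed on $\mathcal{D}_3$, the same fold used in Step 5. Consequently $\nabla\hat g=\tfrac{1}{1+n/N}\hat M\hat s$ is not measurable with respect to $\sigma(\mathcal{D}_1,\mathcal{D}_2)$, and your exact cancellation of conditional means between the $\mathcal{D}_3$ sum and the unlabeled sum does not literally apply to $\nabla\hat g-\nabla g^*$. The repair is short and is what the paper implicitly does: write $\hat M\hat s = M\hat s + (\hat M - M)\hat s$, apply your conditioning argument to the first piece (only $\hat s$ need be conditionally fixed), and dispose of the second by Slutsky, since $\hat M - M = o_P(1)$ multiplies the difference of two sample means of the same function, which is $O_P(n^{-1/2})$. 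With that patch, and the standard step establishing $\sqrt{n}$-consistency before the Taylor expansion is legitimate (which you gloss over but the paper spells out via a self-consistency argument), your proof is complete.
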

{

\begin{remark}
\label{rmk: consistency}
    In Theorem \ref{thm:main}, we require the L2 consistency of fitted score $\E[\|\hat s(X, \hat Y) - s(X, \hat Y)\|^2] \rightarrow 0$ to a fixed target $s$. In practice, for each round $k\in [1,2,3]$ in the fold rotation in Algorithm \ref{alg:crossfit}, the score fitting step would have a different target $\E[\nabla\ell_{\hat \theta_0^k}(X,Y)|X,\hat Y]$, where $\hat \theta_0^k$ is the initial estimator at this round. Denote $s^k(X,\hat Y)$ as the fitted score in round $k$, the L2 consistency assumption is satisfied as long as we have $\E[\|\hat s^k(X,\hat Y)-s_{\hat \theta_0^k}(X,\hat Y)\|^2]\rightarrow0$ and $\E[\|s_{\hat \theta_0^k}(X,\hat Y)-s_{\theta^\star}(X,\hat Y)\|^2]\rightarrow0$ for a fold specific limit $s_{\hat \theta_0^k}$ and a global limit $s_{\theta^\star}$. The first requirement $\E[\|\hat s^k(X,\hat Y)-s_{\hat \theta_0^k}(X,\hat Y)\|^2]\rightarrow0$ is the fixed target consistency of score learning, and the second requirement $\E[\|s_{\hat \theta_0^k}(X,\hat Y)-s_{\theta^\star}(X,\hat Y)\|^2]\rightarrow0$ can be achieved through the continuity for the class of function $s_\theta$ on parameter $\theta$. We impose the assumption $\E[\|\hat s(X, \hat Y) - s(X, \hat Y)\|^2] \rightarrow 0$ in the statement of Theorem \ref{thm:main} for simplicity. 
\end{remark}
}
Theorem \ref{thm:main} demonstrates two key properties of recalibrated PPI. First, if the optimal \rectifier is consistently estimated, meaning $s(X, \hat Y) = s^\star(X,\hat Y) =  \E[\nabla\ell_{\theta^\star}(X, Y)|X, \hat Y]$, then
\begin{equation}
\label{eqn: optimal RePPI var}
\Sigma_s^\REPPI = H_{\theta^\star}^{-1}\lb \Cov(\nabla \ell_{\theta^\star}(X, Y))  - \frac{1}{1+r} \Cov(\E[\nabla\ell_{\theta^\star}(X, Y)|X, \hat Y])\rb H_{\theta^\star}^{-1},    
\end{equation}
matching the efficiency of the optimal PPI estimator from Theorem \ref{thm: efficient PPI}. The second key property is that, even if the optimal \rectifier is inconsistently estimated, i.e. $s(X, \hat Y) \neq s^\star(X, \hat Y)$, recalibrated PPI remains more efficient than the XY-only approach.

Notably, in the extreme case where the prediction model is perfect, i.e., $Y = \hat Y$, then the unlabeled data $(X, \hat Y)$ effectively provides extra labeled data of size $N$. In this scenario, the imputed loss $s^\star (X, \hat Y) = \nabla\ell_{\theta}(X,\hat Y)$ satisfies the optimality condition \eqref{eq:optimal_gtheta}, and the RePPI estimator is equivalent to the empirical risk minimization over $n+N$ samples. As a result, the asymptotic variance of the RePPI estimator \eqref{eq:Sigma_REPPI} becomes $\frac{n}{n+N}H_{\theta^\star}^{-1}\Cov(\nabla \ell_{\theta^\star}(X, Y))H_{\theta^\star}^{-1}$, which equals to the asymptotic variance of the XY-only estimator with sample size $n+N$. Another notable scenario is when the prediction is solely based on the covariates $X$, i.e., $\hat Y = f(X)$ for some function $f$. Then the optimal imputed loss is $s^\star(X,\hat Y) =  \E[\nabla\ell_{\theta^\star}(X, Y)|X, \hat Y] = \E[\nabla\ell_{\theta^\star}(X, Y)|X]$, which implies that the prediction cannot help with inference. This is because $\hat Y$ is simply a measurable function of features $X$, it brings no extra information and it suffices to only use $X$ for recalibration. However, in this case, the RePPI estimator can still improve the asymptotic variance upon the XY-only estimator by $\frac{1}{1+r} H_{\theta^\star}^{-1}\Cov(\E[\nabla\ell_{\theta^\star}(X, Y)|X]) H_{\theta^\star}^{-1}$, due to the recalibration of the score function using covariates $X$.

{

\begin{remark}
The equal three-way split in Algorithm \ref{alg:crossfit} is not essential for the validity of the method; it is used only as a simple and symmetric implementation. The proof of Theorem 2 only requires that each fold has a non-vanishing asymptotic fraction of the labeled sample, i.e.,
\[
    |\mathcal D_k|/n \to \pi_k \in (0,1), \qquad k=1,2,3,
\]
and that the fold-specific estimators are combined with weights \(|\mathcal D_k|/n\), as in Algorithm 1. In practice, the score fitting may require more samples than other steps when a complicated model is used. In this scenario, it is more reasonable to evenly split the data into $K$ folds ($K\geq 3$), use more than one fold to estimate the score in Step 3 (and potentially Step 2, 4, 5), rotate the folds as in Step 6, and aggregate the estimates as in Step 7. It is straightforward to prove the same asymptotic theory for this splitting scheme, as long as the sample size of each fold is proportional to the total sample size. 
\end{remark}
\begin{remark}
For the cross-fitting scheme in Algorithm \ref{alg:crossfit}, it is important to isolate the initial estimation, score fitting, and inference on different folds. The primary reason is to make them independent, so that in each step, one can treat quantities from the other steps as fixed. For example, if the initial estimation and score fitting are on the same fold, then we cannot treat the initial estimator as independent, and one may need to impose a uniform convergence assumption on the algorithm, such as $\sup_{\theta:\|\theta-\theta^\star\|\leq \delta}\E[\|\hat s_{\theta} - s_\theta\|]\rightarrow 0$, which is much stronger than the assumption discussed in Remark \ref{rmk: consistency} and hard to satisfy for many flexible models. Moreover, if the score fitting and the inference step are merged on the same fold, then the distribution of the fitted score on the labeled data and the unlabeled data can be substantially different due to overfitting. 
However, the estimation of $\hat M$ can be merged with the inference fold, because the final estimator depends on the $\hat M$ in a simple algebraic form, and it is straightforward to control the estimation error of $\hat M$ directly. Therefore, we don't split out another fold specifically for $\hat M$ to improve the estimation accuracy on other tasks. 
\end{remark}

Theorem 2 immediately yields a plug-in construction of confidence intervals. After computing the
cross-fitted RePPI estimator \(\hat\theta^\cf\), let
\(\tilde s_i=\hat s^{(-k)}(X_i,\hat Y_i)\) denote the out-of-fold fitted score for each labeled observation
\(i\in\mathcal D_k\), and set \(\hat r=n/N\). Define the empirical covariance matrices on the labeled
sample as
$
    \widehat V_\ell
    =
    \widehat{\operatorname{Cov}}
    \{\nabla\ell_{\hat\theta^\cf}(X_i,Y_i)\},
    \widehat V_s
    =
    \widehat{\operatorname{Cov}}
    \{\tilde s_i\},
$
and
$
    \widehat C_{\ell s}
    =
    \widehat{\operatorname{Cov}}
    \{\nabla\ell_{\hat\theta^\cf}(X_i,Y_i),\tilde s_i\}.
$
We then estimate the asymptotic covariance in Theorem 2 by
\[
    \widehat\Sigma_{\mathrm{RePPI}}
    =
    \widehat H^{-1}
    \left\{
        \widehat V_\ell
        -
        \frac{N}{n+N}
        \widehat C_{\ell s}
        \widehat V_s^{-1}
        \widehat C_{\ell s}^{\top}
    \right\}
    \widehat H^{-1},
    \qquad
    \widehat H
    =
    \frac1n\sum_{i=1}^n
    \nabla^2\ell_{\hat\theta^\cf}(X_i,Y_i).
\]
If \(\widehat V_s\) is nearly singular, we replace \(\widehat V_s^{-1}\) by a ridge-regularized inverse
\((\widehat V_s+\lambda I)^{-1}\) with a small \(\lambda\ge 0\). For the \(j\)th coordinate of
\(\theta^\star\), an asymptotic \(1-\alpha\) confidence interval is
$
    \hat\theta_{j}^\cf
    \pm
    z_{1-\alpha/2}
    \sqrt{
        \frac{
            \widehat\Sigma_{\mathrm{RePPI},jj}
        }{n}
    }.
$

% We showed that recalibrated PPI achieves the lowest asymptotic variance among all PPI estimators \eqref{eqn: PPI} when $s^\star$ is consistently estimated. This raises a natural question: under what conditions, and to what extent, does the recalibration step that estimates $s^\star$ provide a benefit? In next section, we quantify the efficiency gains through three case studies, each representing a typical scenario where machine learning predictions systematically differ from the outcome of interest. 
}
\section{Why is Recalibration Important? Stylized model analysis and applications}
\label{sec: linear model}
We showed that recalibrated PPI achieves the lowest asymptotic variance among all PPI estimators \eqref{eqn: PPI} when $s^\star$ is consistently estimated. This raises a natural question: under what conditions, and to what extent, does the recalibration step that estimates $s^\star$ provide a benefit? In this section, we quantify the efficiency gains through three case studies, each representing a typical scenario where machine learning predictions systematically differ from the outcome of interest. For each case, we compare our method against the XY-only, PPI, and PPI++ estimators, described in Section \ref{sec: surrogate and PPI} under our unified framework. We denote by $\Sigma^\cls, \Sigma^\ppi, \Sigma^\ppiplus, \Sigma^\REPPI$ the respective asymptotic covariance matrices.

For each scenario, we provide an application on a real-world dataset with predictions from pretrained AI models to demonstrate the importance of recalibration. In each application, we have a fully labeled dataset, which we randomly split into a labeled portion and an unlabeled portion. For the purpose of evaluating coverage, we take the value of the estimand on the full labeled dataset as a proxy for $\theta^\star$. We estimate the average width of $95\%$ confidence intervals and coverage over 100 trials. All code is available at \url{https://github.com/Wenlong2000/RePPI/}.

\subsection{Modality Mismatch}\label{sec:modality_mismatch}
In modern applications, researchers can easily access data from various modalities beyond tabular data, including audios, images, and texts. Seeing that such data is often complex and high-dimensional, it is usually not involved in the definition of the target parameter. Nonetheless, the age of AI produces a plethora of prediction models that can turn unstructured data into numerical representations or predictions. In this scenario, $\hat{Y}$ takes as input the unstructured data $W$ and leaves the inferentially relevant covariates $X$ on the table. We call this scenario modality mismatch.
\subsubsection{Stylized Model}
To study the efficiency gain quantitatively, we consider a stylized setting where $Y$ is generated as $
    Y = W^\top \gamma + X^\top \theta + \epsilon,
$ where $X\sim \mathcal{N}(0, \Sigma_X), W\sim \mathcal{N}(0, \Sigma_W)$, and $\epsilon\sim \mathcal{N}(0, \sigma^2)$ are independent Gaussian variables. The predictions $\hat{Y}$ are generated from a misspecified linear model, by regressing $Y$ on $W$ solely. The target $\theta^\star$ is given by the squared loss $\ell_\theta(X, Y) = (Y - X^\top \theta)^2$, i.e. $\theta^\star$ is the best linear approximation of $Y$ from $X$.
The independence assumption implies that $\theta^\star = \theta$ and the predictions converge to $W^\top \gamma$. For simplicity, we ignore the sampling uncertainty of the prediction model and assume $\hat{Y} = W^\top \gamma$. We derive the asymptotic variances of the baselines in the following proposition.  We denote by $\Sigma^\cls, \Sigma^\ppi, \Sigma^\ppiplus, \Sigma^\REPPI$ the respective asymptotic covariance matrices.
\begin{proposition}
\label{prop: lm independent}
Assume that $n/N \rightarrow r$. Then, in the setting described above,
    \begin{equation*}
    \begin{aligned}
        \tr(\Sigma^\cls) = &\lb\sigma^2+\gamma^\top \Sigma_W\gamma\rb\tr(\Sigma_X^{-1}),\\
        \tr(\Sigma^{\ppi}) = &(\sigma^2+ (1+r) \theta^\top \Sigma_X\theta + r\gamma^\top \Sigma_W\gamma) \tr(\Sigma_X^{-1})+(1+r) \|\theta\|^2,\\
        \tr(\Sigma^{\ppiplus}) =&\left(\sigma^2 + \left(\frac{r}{1+r} + \frac{1}{1+r}\frac{1}{1+\frac{\gamma^\top \Sigma_W\gamma}{\|\theta\|^2/\tr(\Sigma_X^{-1}) +\theta^\top \Sigma_X\theta }}\right)\gamma^\top \Sigma_W \gamma\right)\tr(\Sigma_X^{-1}) ,\\
        \tr(\Sigma^{\REPPI}) =& \lb\sigma^2+\frac{r}{1+r}\gamma^\top \Sigma_W\gamma\rb\tr(\Sigma_X^{-1}).
    \end{aligned}
    \end{equation*}
    In particular, $\tr(\Sigma^{\REPPI}) \leq \tr(\Sigma^{\ppiplus}) \leq \min\{\tr(\Sigma^\cls), \tr(\Sigma^\ppi)\}$.
\end{proposition}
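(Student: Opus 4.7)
The plan is to derive each asymptotic covariance from Theorems~\ref{thm: efficient PPI} and~\ref{thm:main} by directly computing the required covariance matrices under the stated linear Gaussian model. With $\ell_\theta(X,Y) = (Y - X^\top\theta)^2$, I have $\nabla\ell_\theta(X,Y) = -2X(Y - X^\top\theta)$ and $H_{\theta^\star} = 2\Sigma_X$. Since $X, W, \epsilon$ are independent and $\theta^\star = \theta$, the residual at the truth is $Y - X^\top\theta^\star = W^\top\gamma + \epsilon$, and $\hat Y = W^\top\gamma$ is independent of $X$. This independence is what makes the computations tractable and causes most cross terms to vanish.

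The cases $\Sigma^\cls$ and $\Sigma^\REPPI$ follow quickly. Independence gives $\Cov(\nabla\ell_{\theta^\star}(X,Y)) = 4(\sigma^2 + \gamma^\top\Sigma_W\gamma)\Sigma_X$, and sandwiching with $H_{\theta^\star}^{-1} = (2\Sigma_X)^{-1}$ yields $\tr(\Sigma^\cls)$. For $\Sigma^\REPPI$, I compute $s^\star(X,\hat Y) = \E[-2X(W^\top\gamma+\epsilon)\mid X,\hat Y] = -2X\hat Y$ using independence of $\epsilon$ and the $\sigma(W)$-measurability of $\hat Y$. Then $\Cov(s^\star) = 4(\gamma^\top\Sigma_W\gamma)\Sigma_X$ and $\Cov(\nabla\ell_{\theta^\star}(X,Y), s^\star) = \E[s^\star s^{\star\top}] = \Cov(s^\star)$ by the tower property, so \eqref{eq:Sigma_REPPI} collapses to $\Sigma^\REPPI = (\sigma^2 + \frac{r}{1+r}\gamma^\top\Sigma_W\gamma)\Sigma_X^{-1}$ and the desired trace.

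The computation for $\Sigma^\ppi$ is where the main obstacle lies. Because $\hat Y$ is not calibrated to $Y$ given $X$, the gradient $\nabla\ell_{\theta^\star}(X,\hat Y) = -2X(W^\top\gamma - X^\top\theta)$ has a nonzero mean $2\Sigma_X\theta$ and its covariance involves a Gaussian fourth-moment term $\E[XX^\top\theta\theta^\top XX^\top] = 2\Sigma_X\theta\theta^\top\Sigma_X + (\theta^\top\Sigma_X\theta)\Sigma_X$, which I will invoke via Isserlis's formula. After cancelling the cross terms by independence, $\Cov(\nabla\ell_{\theta^\star}(X,\hat Y)) = 4[(\gamma^\top\Sigma_W\gamma)\Sigma_X + \Sigma_X\theta\theta^\top\Sigma_X + (\theta^\top\Sigma_X\theta)\Sigma_X]$. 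A parallel calculation of $\Cov(\nabla\ell_{\theta^\star}(X,Y) - \nabla\ell_{\theta^\star}(X,\hat Y))$ via $Y - \hat Y = X^\top\theta + \epsilon$ feeds into Theorem~\ref{thm: efficient PPI} to give $\tr(\Sigma^\ppi)$; the extra $(1+r)\|\theta\|^2$ contribution comes precisely from the mean shift noted above.

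For $\Sigma^\ppiplus$, I parameterize $g_\theta = \lambda \ell_\theta$ and minimize $\tr(\Sigma_g^\ppi(\lambda))$ over $\lambda \in \R$, which is a one-variable quadratic. The only new moment needed is $\Cov(\nabla\ell_{\theta^\star}(X,Y), \nabla\ell_{\theta^\star}(X,\hat Y)) = 4(\gamma^\top\Sigma_W\gamma)\Sigma_X$, another independence-based calculation; substituting the closed-form minimizer back into the quadratic and simplifying yields the stated expression. Finally, the ordering $\tr(\Sigma^\REPPI) \le \tr(\Sigma^\ppiplus) \le \min\{\tr(\Sigma^\cls), \tr(\Sigma^\ppi)\}$ requires no further computation: RePPI is asymptotically optimal within the PPI class when $s^\star$ is consistently estimated (Theorems~\ref{thm: efficient PPI} and~\ref{thm:main}), while the PPI++ family already contains both $\lambda = 0$ (the XY-only estimator) and $\lambda = 1$ (standard PPI), so optimizing over $\lambda$ can only improve upon either.
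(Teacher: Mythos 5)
Your proposal is correct and follows essentially the same route as the paper: compute $H_{\theta^\star}$ and the relevant gradient covariances by exploiting the independence of $X$, $W$, $\epsilon$, handle the quartic term $\Cov(XX^\top\theta)$ via the Gaussian fourth-moment identity (the paper packages this as Lemma~\ref{lem: gaussian moment}, you invoke Isserlis directly), identify $s^\star(X,\hat Y)\propto -X\hat Y$ for RePPI, and obtain the PPI++ trace by minimizing over the scalar tuning parameter (which is exactly how the paper's Lemma~\ref{lem: PPI++ var} arises). The only slight imprecision is attributing the $(1+r)\|\theta\|^2$ term to the ``mean shift'' of $\nabla\ell_{\theta^\star}(X,\hat Y)$: the mean is subtracted out in the covariance, and that term actually comes from the off-diagonal Wick pairing in $\E[XX^\top\theta\theta^\top XX^\top]$; your displayed formulas are nonetheless correct.
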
 
Comparing $\tr(\Sigma^\cls)$ with $\tr(\Sigma^\ppi)$, we can see that PPI can be worse than the XY-only estimator when $\|\theta\|$ is large. Comparing $\tr(\Sigma^\cls)$, $\tr(\Sigma^\ppiplus)$, and $\tr(\Sigma^\REPPI)$, we observe that the contribution of the predictions is to reduce the variance caused by $W$. The maximal reduction is $1/(1+r) \Var(W^\top \gamma)$, achieved by RePPI. The reduction of PPI++ ranges from $0$ to the maximal amount, depending on the relative contribution of $X$ and $W$ in the model of $Y$. When $\theta = 0$, PPI++ is as efficient as RePPI; when $\|\theta\|\rightarrow \infty$, PPI++ approaches the XY-only estimator. The efficiency gain of RePPI compared to PPI++ is given by 
$\frac{1}{1+r}\lb \frac{1}{\gamma^\top \Sigma_W \gamma} + \frac{1}{\|\theta\|^2/\tr(\Sigma_X^{-1}) + \theta^\top \Sigma_X\theta}\rb^{-1}\cdot \tr(\Sigma_X^{-1}).$
Thus, RePPI gains substantially over PPI++ when both $W$ and $X$ are informative.

{

\subsubsection{Application to X-ray image diagnosis}
We next consider a medical-imaging application to demonstrate the gain of RePPI in a modality-mismatch setting. We use the NIH ChestX-ray14 dataset
\citep{wang2017chestx}. The outcome is whether a patient has pleural effusion.
The target parameter is the coefficient of standardized age in a logistic regression of the effusion label on age, sex, and X-ray view position. The surrogate prediction is produced by a DenseNet-121 chest X-ray classifier pretrained
on CheXpert \citep{irvin2019chexpert} and implemented through TorchXRayVision
\citep{Cohen2022xrv}. Importantly, this model only takes
the radiograph as input. It does not use the structured covariates that define the
inferential target. This creates a concrete modality mismatch: the image model provides
useful diagnostic information, but its prediction is not directly calibrated for
age-adjusted inference.

We restrict the analysis to adult patients with frontal-view scans and use one image per patient. After preprocessing, the dataset contains \(29{,}366\) observations. We randomly split the data into labeled and unlabeled portions, and vary the labeled fraction from 10\% to 30\%, and compare XY-only, PPI, PPI++, and RePPI. We use a logistic regression model to fit the recalibration score in RePPI. The full-data estimate is used as a proxy for the target parameter when evaluating coverage. The experimental results are reported in Figure~\ref{fig:nih-cxr} and Table \ref{tab:nih-cxr}. Standard PPI performs poorly because the raw image-only
prediction is not on the right scale for the downstream regression target. PPI++ slightly improves upon the XY-only estimator through optimal power tuning, while RePPI further improves efficiency by recalibrating the image-based surrogate using the structured covariates. Across the labeled fractions considered, RePPI yields the shortest intervals and reduces the sample size requirement by approximately 5\% over PPI++ to reach a targeted confidence interval length.

\begin{figure}[t]
    \centering
    \includegraphics[width=\textwidth]{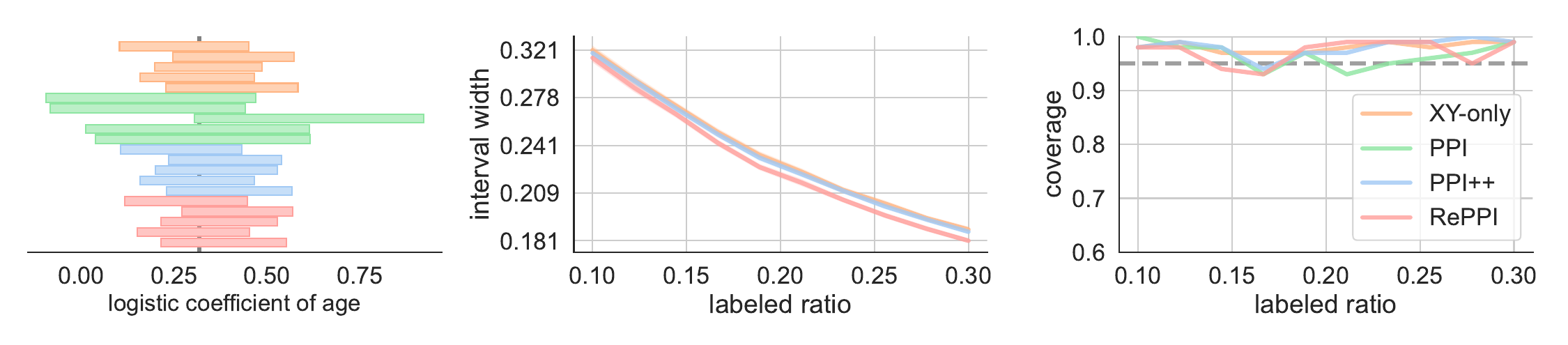}
    \caption{
    NIH ChestX-ray14 experiment.  The left panel shows a representative 95\% confidence interval, the middle panel shows average interval width, and the right panel shows empirical coverage. The horizontal axis represents the ratio of labeled data, $\frac{n}{N+n}$. We omit the interval length curve for PPI because it is significantly wider than the other methods.
    }
    \label{fig:nih-cxr}
\end{figure}
\begin{table}[t]
\centering
\caption{
Required number of labeled observations to achieve a given interval length in the
NIH ChestX-ray14 modality-mismatch experiment. The last column reports the
label reduction of RePPI relative to PPI++. Here ``NA'' means that the target
interval length is not reached within the labeled-sample-size range considered.
}
\label{tab:nih-cxr}
\begin{tabular}{c|cccc|c}
\hline
Interval length
& XY-only & PPI & PPI++ & RePPI
& Reduced samples (\%) \\
\hline
0.20 & 7704 & NA & 7602 & 7164 & 5.74\% \\
0.22 & 6285 & NA & 6178 & 5934 & 3.93\% \\
0.24 & 5374 & NA & 5273 & 4963 & 5.86\% \\
0.26 & 3955 & NA & 4493 & 4315 & 3.95\% \\
\hline
\end{tabular}
\end{table}
\subsection{Distribution Shift}
Pre-trained models like LLMs are often trained on general-purpose data that come from generic sources. However, researchers may target only a specific subpopulation relevant to their study. This distinction between the target subpopulation and the population used for model training surfaces as distribution shift, leading to inaccurate predictions.
}
\subsubsection{Stylized Model}
To study the effect of distribution shift, we consider the same setting as in Section \ref{sec:modality_mismatch}, except that the prediction model takes both $X$ and $W$ as inputs. Again we consider a linear prediction model $
    \hat{Y} = X^\top \td{\theta} + W^\top \tilde\gamma
$, but its coefficients are misspecified:

\begin{proposition}
\label{prop: lm shift}
Assume that $n/N \rightarrow r$. Then, in the setting described above,
\begin{align*}
        \tr(\Sigma^\cls) =& \lb\sigma^2+\gamma^\top \Sigma_W\gamma\rb\tr(\Sigma_X^{-1}),\\
        \tr(\Sigma^{\ppi}) = & (1+r) \|\theta-\tilde \theta\|^2  + \Big(\sigma^2+ (1+r) (\theta -\tilde\theta)^\top \Sigma_X(\theta -\tilde\theta) + r\tilde \gamma^\top \Sigma_W\tilde \gamma \\
        & + (\gamma- \td{\gamma})^\top \Sigma_W(\gamma - \td{\gamma})\Big) \tr(\Sigma_X^{-1}),\\
\tr(\Sigma^{\ppiplus}) = &\left(\sigma^2 + \gamma^\top \Sigma_W \gamma - \frac{1}{1+r}\frac{(\gamma^\top \Sigma_W \td{\gamma})^2}{\td{\gamma}^\top \Sigma_W \td{\gamma} + (\theta -\tilde\theta)^\top \Sigma_X(\theta -\tilde\theta) + \frac{\|\theta -\tilde\theta\|^2}{ \tr(\Sigma_X^{-1})}}\right)\tr(\Sigma_X^{-1}),\\
\tr(\Sigma^{\REPPI}) =& \left(\sigma^2 + \gamma^\top \Sigma_W \gamma - \frac{1}{1+r}\frac{(\gamma^\top \Sigma_W \td{\gamma})^2}{\td{\gamma}^\top \Sigma_W \td{\gamma}}\right)\tr(\Sigma_X^{-1}).
\end{align*}
In particular, $\tr(\Sigma^{\REPPI}) \leq \tr(\Sigma^{\ppiplus}) \leq \min\{\tr(\Sigma^\cls), \tr(\Sigma^\ppi)\}$.
\end{proposition}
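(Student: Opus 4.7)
The starting point is to identify $\theta^\star$ and reduce all four variance formulas to traces of covariance matrices of squared-loss gradients. For $\ell_\theta(X,Y)=(Y-X^\top\theta)^2$, the gradient is $\nabla\ell_\theta(X,Y)=-2X(Y-X^\top\theta)$ and the Hessian is $H_{\theta^\star}=2\Sigma_X$ (the factor $2$ will cancel with the factors of $4$ that appear in every covariance). Because $X\indep W$, one sees $\theta^\star=\theta$ and $\nabla\ell_{\theta^\star}(X,Y)=-2X(W^\top\gamma+\epsilon)$. From here, $\tr(\Sigma^\cls)$ is immediate from Theorem~\ref{thm: efficient PPI}: $\E[XX^\top(W^\top\gamma+\epsilon)^2]=(\gamma^\top\Sigma_W\gamma+\sigma^2)\Sigma_X$ by independence, and conjugation by $H_{\theta^\star}^{-1}$ yields the stated $(\sigma^2+\gamma^\top\Sigma_W\gamma)\tr(\Sigma_X^{-1})$.

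For $\hat\theta^\ppi$ we set $g_\theta=\ell_\theta$, so $\nabla g_{\theta^\star}(X,\hat Y)=2X((\theta-\td\theta)^\top X-W^\top\td\gamma)$ and $\nabla\ell_{\theta^\star}-\nabla g_{\theta^\star}=-2X((\theta-\td\theta)^\top X+(\gamma-\td\gamma)^\top W+\epsilon)$. I then plug both into the formula in Theorem~\ref{thm: efficient PPI}. The only nonroutine ingredient is the fourth-moment identity $\E[XX^\top(a^\top X)^2]=(a^\top\Sigma_X a)\Sigma_X+2\Sigma_Xaa^\top\Sigma_X$, which follows from Isserlis' theorem for Gaussian $X$; combined with independence of $X$, $W$, $\epsilon$ (which kills all cross terms) and subtraction of $\E[\nabla g_{\theta^\star}]=\Sigma_X(\theta-\td\theta)$ to convert second moments into covariances, this delivers $\tr(\Sigma^\ppi)$ exactly as stated.

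For $\hat\theta^\ppiplus$, the imputed loss takes the form $\nabla g_\theta=\lambda\nabla\ell_\theta$ for a scalar $\lambda$ optimized to minimize the trace of the asymptotic variance. Using the Theorem~\ref{thm: efficient PPI} variance and expanding in $\lambda$, $\tr(\Sigma_\lambda)=\tr(H_{\theta^\star}^{-1}\Cov(\nabla\ell_{\theta^\star}(X,Y))H_{\theta^\star}^{-1})+(1+r)\lambda^2 A-2\lambda B$, where $A=\tr(H_{\theta^\star}^{-1}\Cov(\nabla\ell_{\theta^\star}(X,\hat Y))H_{\theta^\star}^{-1})$ and $B=\tr(H_{\theta^\star}^{-1}\Cov(\nabla\ell_{\theta^\star}(X,Y),\nabla\ell_{\theta^\star}(X,\hat Y))H_{\theta^\star}^{-1})$. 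The quantity $A$ is essentially what I already computed for PPI, and $B=\gamma^\top\Sigma_W\td\gamma\,\tr(\Sigma_X^{-1})$ after noticing that the only surviving cross-product in $\Cov(\nabla\ell_{\theta^\star}(X,Y),\nabla\ell_{\theta^\star}(X,\hat Y))$ is $\E[XX^\top(W^\top\gamma)(W^\top\td\gamma)]=\gamma^\top\Sigma_W\td\gamma\,\Sigma_X$. Minimizing at $\lambda^\star=B/((1+r)A)$ gives the minimum $\tr(\Sigma^\cls)-B^2/((1+r)A)$; simplifying by dividing top and bottom by $\tr(\Sigma_X^{-1})$ yields the claimed formula.

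For $\hat\theta^\REPPI$ I apply Theorem~\ref{thm:main} with the consistent choice $s(X,\hat Y)=s^\star(X,\hat Y)=\E[\nabla\ell_{\theta^\star}(X,Y)\mid X,\hat Y]$. The key computation is the conditional mean: since $X\indep W$ and $\hat Y-X^\top\td\theta=W^\top\td\gamma$, Gaussian conditioning gives $\E[W^\top\gamma\mid X,\hat Y]=(\gamma^\top\Sigma_W\td\gamma/\td\gamma^\top\Sigma_W\td\gamma)\,W^\top\td\gamma$, so $s^\star(X,\hat Y)=-cX\cdot W^\top\td\gamma$ with $c=\gamma^\top\Sigma_W\td\gamma/\td\gamma^\top\Sigma_W\td\gamma$. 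A short calculation then gives $\Cov(s^\star)=c^2\td\gamma^\top\Sigma_W\td\gamma\,\Sigma_X$ and $\Cov(\nabla\ell_{\theta^\star}(X,Y),s^\star)=c\,\gamma^\top\Sigma_W\td\gamma\,\Sigma_X$, so the rectifier term in \eqref{eq:Sigma_REPPI} collapses to $\Delta=\frac{1}{1+r}\frac{(\gamma^\top\Sigma_W\td\gamma)^2}{\td\gamma^\top\Sigma_W\td\gamma}\Sigma_X$, yielding the stated $\tr(\Sigma^\REPPI)$. Finally, the inequality chain $\tr(\Sigma^\REPPI)\le\tr(\Sigma^\ppiplus)\le\min\{\tr(\Sigma^\cls),\tr(\Sigma^\ppi)\}$ is transparent from the formulas: the RePPI and PPI++ expressions differ only by the two nonnegative terms $(\theta-\td\theta)^\top\Sigma_X(\theta-\td\theta)+\|\theta-\td\theta\|^2/\tr(\Sigma_X^{-1})$ added to the denominator for PPI++, and the PPI++ bound against $\tr(\Sigma^\cls)$ and $\tr(\Sigma^\ppi)$ follows from $\lambda^\star$ being the minimizer of a scalar quadratic that evaluates to $\tr(\Sigma^\cls)$ at $\lambda=0$ and to $\tr(\Sigma^\ppi)$ at $\lambda=1$.

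The main obstacle is bookkeeping: many terms in the fourth-moment expansions vanish by independence, but those that do not require the Isserlis identity and careful subtraction of rank-one mean contributions to convert second moments into centered covariances. Once this is done the results drop out algebraically.
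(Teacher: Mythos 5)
Your proposal is correct and follows essentially the same route as the paper: identify $\theta^\star=\theta$ and $H_{\theta^\star}\propto\Sigma_X$, reduce everything to traces of gradient covariances via the general variance formulas, handle the Gaussian fourth moments, and obtain the RePPI term from the conditional expectation $\E[W^\top\gamma\mid W^\top\td\gamma]$. The only cosmetic differences are that you derive the fourth-moment identity by Isserlis' theorem where the paper uses an eigendecomposition (its Lemma~\ref{lem: gaussian moment}), and you re-derive the optimal power-tuning quadratic rather than citing the paper's Lemma~\ref{lem: PPI++ var}; both yield identical expressions.
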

Comparing $\Sigma^\ppi$ with $\Sigma^\cls$, we can see that PPI can be worse than the XY-only estimator when $\td{\gamma}$ is very different from $\gamma$. RePPI always improves upon the XY-only estimator unless $W^\top\td{\gamma}$ is orthogonal to $W^\top\gamma$ in the sense that $\gamma^\top \Sigma_W \td{\gamma} = 0$.
Comparing $\Sigma^\ppiplus$ with $\Sigma^\REPPI$, we conclude that the gain of RePPI over PPI++ is increasing in the bias of $\td{\theta}$. This illustrates the importance of recalibration under distribution shifts. 

{

\subsubsection{Application to Toxicity Measurement in Online Comments}
We next consider a text-classification application to demonstrate the gain of RePPI in a distribution shift setting. We use the CivilComments-WILDS dataset
\citep{borkan2019nuanced,wilds2021}, which consists of more than 400,000 public comments from 50 English-language news sites. Each observation consists of an online
comment, a human toxicity score, and identity annotations. The outcome is the human
toxicity score, scaled between 0 and 1. The target parameter is the regression coefficient on whether the comment mentions an identity group, and we use the comment length as a confounder.

As the prediction surrogate, we use a pretrained Distil-Bert model \citep{sanh2019distilbert} downloaded from Hugging Face\footnote{\url{https://huggingface.co/citizenlab/distilbert-base-multilingual-cased-toxicity}}, which outputs a probability that a comment is toxic. This model is trained on the JIGSAW toxic comment classification challenge dataset \citep{jigsaw-toxic-comment-classification-challenge} that collects the comments from Wikipedia and is labeled by a human rater. Therefore, although its prediction is informative, it is trained to detect toxicity in a knowledge-sharing environment instead of news discussion. This creates a natural distribution-shift setting: the surrogate comes from a prediction model trained in a different environment from the population used for inference.

We subsample \(10{,}000\) comments for the experiment. We randomly split the data into
labeled and unlabeled portions, and vary the labeled fraction from 10\% to 30\%, and compare XY-only, PPI, PPI++, and RePPI. We use a random forest model to fit the recalibration score in RePPI. The full-data estimate is used as a proxy for the target parameter when evaluating coverage. The experimental results are reported in Figure~\ref{fig:civilcomments} and Table \ref{tab:civil comments}. Standard PPI produces wider intervals than the XY-only estimator, reflecting the mis-calibration of the raw toxicity-model prediction under distribution shift. PPI++ improves over the XY-only estimator with the use of optimal power tuning. RePPI further improves over PPI++ by recalibrating the surrogate prediction trained on Wikipedia comments to the news comments population. Across the labeled fractions considered, RePPI gives the shortest intervals, with a roughly 5\% reduction in required labels relative to PPI++ for the interval lengths examined.

\begin{figure}[t]
    \centering
    \includegraphics[width=\textwidth]{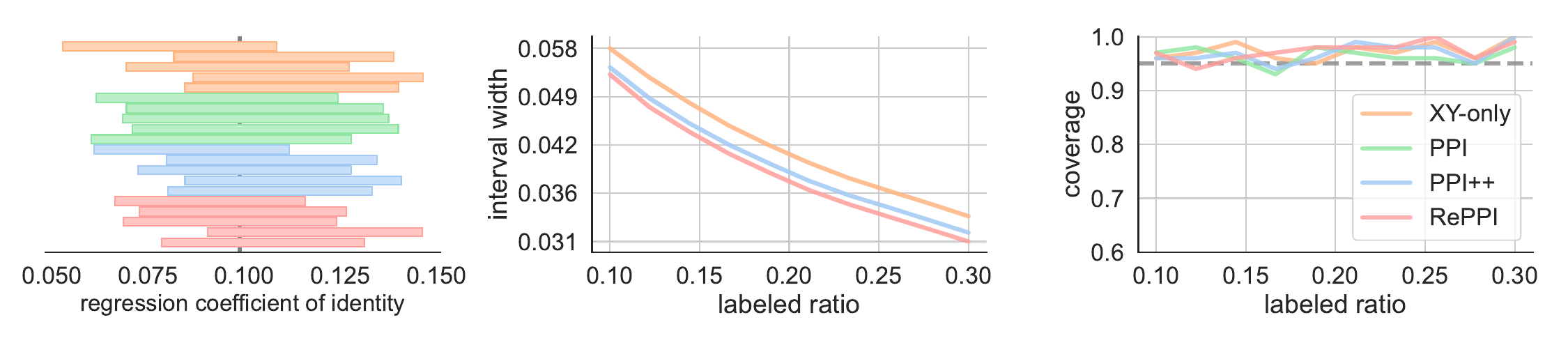}
    \caption{
    CivilComments distribution-shift experiment. The figure is produced similarly to Figure \ref{fig:nih-cxr}.
    }
    \label{fig:civilcomments}
\end{figure}
\begin{table}[t]
\centering
\caption{
CivilComments-WILDS experiment.  The table is produced similarly to Table \ref{tab:nih-cxr}.
}
\label{tab:civil comments}
\begin{tabular}{c|cccc|c}
\hline
Interval length
& XY-only & PPI & PPI++ & RePPI
& Reduced samples (\%) \\
\hline
0.040 & 2075 & NA   & 1847 & 1739 & 5.85\% \\
0.042 & 1885 & NA   & 1666 & 1578 & 5.32\% \\
0.044 & 1723 & 2638 & 1521 & 1433 & 5.77\% \\
0.046 & 1583 & 2314 & 1389 & 1313 & 5.53\% \\
\hline
\end{tabular}
\end{table}
}
\subsection{Discrete Predictions}

Machine learning models often produce only coarse discrete predictions. For example, \cite{hofer2024bayesian} discuss examples where LLMs used as autoraters produce discrete, uncalibrated responses.
In such cases, substituting $\hat Y$ for ${Y}$ is suboptimal, even if $\hat{Y}$ is monotonic in $Y$. Recalibration essentially brings the predictions back to the scale of the true outcomes. 
\subsubsection{Stylized Model}
To illustrate the efficiency gain quantitatively, we consider mean estimation, $\theta^\star = \E[Y]$, obtained by taking $\ell_\theta(Y) = (Y - \theta)^2/2$. Consider the mixture model where $Z\sim \mathrm{Unif}(\{1,2,3\})$ and $Y\mid Z \sim \N(\mu_Z, \sigma^2)$.
For simplicity, we assume the prediction equals the label of the mixture component that $Y$ is generated from: $\hat Y = Z$. We derive the asymptotic variances in the following proposition.
\begin{proposition}
\label{prop: binary}
Assume that $n/N \rightarrow r$. 
    In the setting described above, 
    \begin{equation*}
\begin{aligned}
    &\Sigma^\cls = \sigma^2 + \frac{(\mu_1 - \mu_2)^2 + (\mu_2 - \mu_3)^2 + (\mu_3 - \mu_1)^2}{9}, \Sigma^\ppi = \Sigma^\cls + \frac{2(1+r)}{3} - \frac{2(\mu_3-\mu_1)}{3}, \\
    &\Sigma^\ppiplus = \Sigma^\cls - \frac{1}{1+r}\frac{(\mu_3-\mu_1)^2}{6}, \Sigma^\REPPI = \Sigma^\cls - \frac{1}{1+r}\frac{(\mu_1 - \mu_2)^2 + (\mu_2 - \mu_3)^2 + (\mu_3 - \mu_1)^2}{9}. \\
\end{aligned} 
\end{equation*}
In particular, $\Sigma^\ppiplus - \Sigma^\REPPI =  \frac{1}{1+r}\frac{(2\mu_2 - \mu_3- \mu_1)^2}{18} \geq 0$.
\end{proposition}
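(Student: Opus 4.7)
The plan is to derive each asymptotic variance directly from Theorem \ref{thm: efficient PPI} (for $\Sigma^\cls, \Sigma^\ppi, \Sigma^\ppiplus$) and Theorem \ref{thm:main} (for $\Sigma^\REPPI$), and then reduce the final inequality to an elementary algebraic identity. For mean estimation with the squared loss, $\nabla \ell_\theta(Y) = \theta - Y$ and $H_{\theta^\star} = 1$, so $\Sigma^\ppi_g$ simplifies to a scalar: $r\Var(\nabla g_{\theta^\star}(\hat Y)) + \Var(\nabla \ell_{\theta^\star}(Y) - \nabla g_{\theta^\star}(\hat Y))$. All four formulas therefore reduce to computing variances and covariances of $Y = \mu_Z + \epsilon$ and $\hat Y = Z$, where $\epsilon \indep Z$.

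The key bookkeeping quantities I would compute up front are: $\E[Y] = \bar \mu := (\mu_1+\mu_2+\mu_3)/3$; $\Var(Y) = \sigma^2 + \Var(\mu_Z)$ by the law of total variance, with $\Var(\mu_Z) = \frac{1}{3}\sum_i(\mu_i - \bar\mu)^2 = \frac{(\mu_1-\mu_2)^2 + (\mu_2-\mu_3)^2 + (\mu_3-\mu_1)^2}{9}$ (the last equality being the standard identity $3\sum_i \mu_i^2 - (\sum_i \mu_i)^2 = \sum_{i<j}(\mu_i-\mu_j)^2$); $\Var(Z) = 2/3$; and $\Cov(Y,Z) = \Cov(\mu_Z, Z) = \E[Z\mu_Z] - \E[Z]\bar\mu = (\mu_3-\mu_1)/3$. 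The XY-only variance is then immediate: $\Sigma^\cls = \Var(Y)$, which matches the claimed expression. For $\Sigma^\ppi$, I apply Theorem \ref{thm: efficient PPI} with $g = \ell$, giving $r\Var(Z) + \Var(Y - Z)$. Expanding $\Var(Y-Z) = \sigma^2 + \Var(\mu_Z - Z) = \sigma^2 + \Var(\mu_Z) + \Var(Z) - 2\Cov(\mu_Z, Z)$ and collecting terms yields $\Sigma^\ppi = \Sigma^\cls + \frac{2(1+r)}{3} - \frac{2(\mu_3-\mu_1)}{3}$.

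For $\Sigma^\ppiplus$, I explicitly solve the one-dimensional optimization over the scalar $M$ in $\nabla g_\theta = M(\theta - \hat Y)$. The objective $(1+r)M^2 \Var(\hat Y) - 2M\Cov(Y,\hat Y) + \Var(Y)$ is minimized at $M^\star = \Cov(Y,\hat Y)/((1+r)\Var(\hat Y))$, giving $\Sigma^\ppiplus = \Var(Y) - \Cov(Y,\hat Y)^2/((1+r)\Var(\hat Y))$. Substituting the covariance values produces $\Sigma^\ppiplus = \Sigma^\cls - \frac{(\mu_3-\mu_1)^2}{6(1+r)}$. For $\Sigma^\REPPI$, Theorem \ref{thm:main} with consistent recalibration gives $s^\star(\hat Y) = \theta^\star - \E[Y \mid Z] = \theta^\star - \mu_Z$, so $\Delta = \frac{\Cov(Y, \mu_Z)^2}{(1+r)\Var(\mu_Z)} = \frac{\Var(\mu_Z)}{1+r}$ (since $\Cov(Y,\mu_Z) = \Var(\mu_Z)$), producing the stated formula.

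The last step is purely algebraic: with $a := \mu_1 - \mu_2$ and $b := \mu_2 - \mu_3$, so $\mu_3 - \mu_1 = -(a+b)$, the difference
\begin{equation*}
\Sigma^\ppiplus - \Sigma^\REPPI = \frac{1}{1+r}\left(\frac{a^2 + b^2 + (a+b)^2}{9} - \frac{(a+b)^2}{6}\right) = \frac{2a^2 + 2b^2 - (a+b)^2}{18(1+r)} = \frac{(a-b)^2}{18(1+r)},
\end{equation*}
and $a - b = \mu_1 - 2\mu_2 + \mu_3 = -(2\mu_2 - \mu_1 - \mu_3)$ gives the claimed expression. I don't foresee a substantive obstacle; the only care required is correctly identifying $\Cov(Y,\mu_Z) = \Var(\mu_Z)$ (so that the RePPI improvement is exactly $\Var(\mu_Z)/(1+r)$, not something smaller) and collapsing the symmetric sum of squared mean gaps using the identity above.
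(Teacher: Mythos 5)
Your proof is correct and follows essentially the same route as the paper's: both reduce all four asymptotic variances to the moments $\Var(\mu_Z)$, $\Var(Z)=2/3$, and $\Cov(Y,\hat Y)=(\mu_3-\mu_1)/3$ and then substitute. The only cosmetic difference is that you re-derive the optimally tuned PPI++ variance $\Var(Y)-\Cov(Y,\hat Y)^2/((1+r)\Var(\hat Y))$ by minimizing over the scalar $M$ directly, whereas the paper invokes Lemma \ref{lem: PPI++ var}; the resulting computation, including the final identity $2a^2+2b^2-(a+b)^2=(a-b)^2$, is the same.
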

From Proposition \ref{prop: binary}, we find that the improvement of RePPI over PPI++ is proportional to $((\mu_3-\mu_2) - (\mu_2-\mu_1))^2$, which represents the linearity of the mixture distribution. When $\mu_3 - \mu_2 = \mu_2 - \mu_1$ holds, then PPI++ has sufficient calibration. Otherwise, we need a nonlinear mapping to calibrate the predictions. Therefore, the linear calibration of PPI++ is insufficient and the nonparametric calibration of RePPI is necessary. 

{

\subsubsection{Applications to Wine Reviews}

We next consider a wine-review application that serves as a real-data counterpart
to the discrete-prediction setting. We use the WineEnthusiast wine
review dataset\footnote{\url{https://www.kaggle.com/datasets/mysarahmadbhat/wine-tasting}}, which contains wine characteristics such as country, price, and region,
together with a taster's written review and an integer rating between 80 and 100. The
target parameter is the coefficient of price in a regression of the rating on price,
and we include the wine region as a confounder.

As the surrogate prediction, we use OpenAI's GPT-5.4 nano model to predict the wine
rating from the written review. The model is prompted to output a rating from five categories: (1) very bad, (2) bad, (3) OK, (4) good, (5) very good. And we then use the category index 1-5 as the surrogate prediction. This creates a discrete-outcome setting: the surrogate predictions are coarse ordinal scores rather than continuous measurements. Even when the language model captures the overall sentiment or quality described in the review, its discrete rating may not be calibrated to the human taster's scoring scale. For example, most of the reviews can be considered as good or very good, making the distribution of predictions heavily skewed.

We subsample \(10{,}000\) wines from the United States for the experiment. We randomly split the data into labeled and unlabeled portions, and vary the labeled fraction from 10\% to 30\%, and compare XY-only, PPI, PPI++, and RePPI. We use a two-layer neural network model to fit the recalibration score in RePPI. The full-data estimate is used as a proxy for the target parameter when evaluating coverage. The results are shown in Figure~\ref{fig: wine} and Table~\ref{tab: wine}. RePPI improves uniformly over the other methods. This is consistent with the discrete
prediction mechanism studied above: a raw discrete prediction is informative,
but it need not be on the correct calibrated scale for inference. PPI++ can only apply
a linear correction to the raw rating, while RePPI learns a more flexible recalibration
of the discrete surrogate. In this experiment, RePPI reduces the number of labeled
examples needed to reach the same interval length by about 18--19\% relative to PPI++.

\begin{figure}[h]
    \centering
    \includegraphics[width=0.98\linewidth]{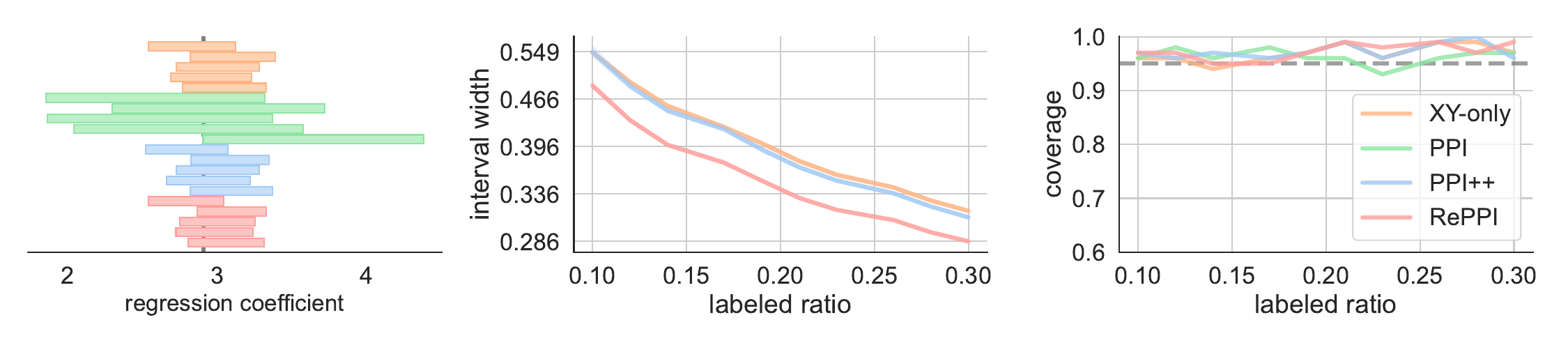}
    \caption{Wine review experiments. The figure is produced similarly to Figure \ref{fig:nih-cxr}.}
    \label{fig: wine}
\end{figure}
\begin{table}[tbh!]
\centering
\caption{Wine review experiment. The table is produced similarly to Table \ref{tab:nih-cxr}.
}
\begin{tabular}{c||ccccc}
         \hline
 Interval Length & XY-only & PPI & PPI++ & RePPI & Reduced Samples (\%) \\
\hline
0.33 & 2744 & NA & 2625 & 2163 & 17.57\% \\
0.35 & 2457 & NA & 2389 & 1875 & 21.50\% \\
0.37 & 2151 & NA & 2110 & 1725 & 18.22\% \\
0.39 & 1902 & NA & 1856 & 1529 & 17.61\% \\
\hline
    \end{tabular}
\label{tab: wine}
\end{table}
}
\section{Conclusion}
\label{sec:discussion}
% {
In this paper, we establish the connection between the classical surrogate outcome model and the recent advancements in prediction-powered inference (PPI). From this unified perspective, we develop recalibrated prediction-powered inference (RePPI), a more efficient approach for leveraging machine learning predictions to improve statistical inference. The RePPI estimator offers both stronger theoretical guarantees in terms of asymptotic efficiency and notable computational benefits. We demonstrate the benefits of the RePPI estimator across various scenarios, including modality mismatch, distribution shift, and discrete predictions, through both theoretical analysis and empirical simulations. Additionally, we demonstrate our method on several real-world datasets, showing that it consistently yields tighter confidence intervals than existing PPI methods. 

This work has several limitations that suggest directions for future research. First, our main formulation uses the high-dimensional or unstructured covariates $W$ only through the prediction $\hat Y=f(X,W)$, which is natural when $W$ consists of text, images, or other difficult-to-model modalities, but may discard information contained in richer model outputs such as embeddings, logits, uncertainty scores, or multiple predictions. Second, our theoretical guarantees are asymptotic, and finite-sample performance depends on the quality of the learned imputed score, the labeled sample size, and the strength of the surrogate signal. Finally, our settings mostly focus on i.i.d. labeled and unlabeled samples. It would be practically important to explore how to integrate unlabeled data from different sources in a practical setting.

\bibliography{Better_PPI}
\bibliographystyle{plainnat}

\appendix

\section{Relationship with broader literature}
{

Besides the line of work on surrogate outcomes, prediction-powered inference has a deep connection with broader literature. We will briefly review these connections below. 

~\\
~\noindent \textbf{Post-Prediction Inference}: The post-prediction inference method \citep{wang2020methods} is the first modern perspective on the problem of inference with prediction. In the same setting as the PPI (Section \ref{sec: surrogate and PPI}), they propose a procedure that first estimates the relationship between observed and predicted outcomes on a validation sample, and uses that relationship to correct downstream
inference through bootstrap or analytical formula. Their method is successfully applied to many scientific problems, such as gene expression and the cause of death. However, their procedure may not preserve the statistical validity of the inference in the general settings. Thus, the PPI methods \cite{angelopoulos2023prediction, angelopoulos2023ppi++} are proposed as an always-valid solution. 

~\\
~\noindent \textbf{Control Variates.}
The variance-reduction mechanism in PPI is also a direct instance of the classical control-variate principle
\citep[e.g.][]{hammersley2013monte, asmussen2007stochastic} from Monte-Carlo simulation. In its simplest form, if one wants
to estimate \(\mathbb E[U]\) and has access to another variable \(H\) with known or accurately estimable
mean, then $U-M\{H-\mathbb E(H)\}$ has the same expectation as \(U\), and the optimal linear coefficient is
\begin{equation}
\label{eqn: control variates coefficient}
M^\star=\operatorname{Cov}(U,H)\operatorname{Var}(H)^{-1}.    
\end{equation}

The loss function of PPI \eqref{eqn: PPI} applies a similar idea for M-estimation. The target here is to minimize the population loss function $\E[\ell_\theta(X, Y)]$, which can be approximated by the empirical loss function $\frac{1}{n}\sum_{i=1}^n\ell_\theta(X_i, Y_i)$. For any function $g_\theta(X,\hat Y)$, the additional unlabeled data allow us to estimate its mean much more accurately than the target loss function $\ell_\theta(X, Y)$. Therefore, $g_\theta(X,\hat Y)$ can be used as a control variates for $\ell_\theta(X, Y)$, and the true mean $\E[g_\theta(X,\hat Y)]$ can be replaced with the empirical estimate $\frac{1}{N}\sum_{i=n+1}^{n+N}g_{\theta}(X_i, \hat Y_i)$, which recovers the PPI loss function \eqref{eqn: PPI}. The optimal coefficient \ref{eqn: control variates coefficient} is exactly the same as the power tuning technique used in PPI++ \citep{angelopoulos2023ppi++} and the correction terms in many works \citep{chen2000unified, gan2024prediction, gronsbell2024another}. We will also use its matrix version in our methodology. 

~\\
~\noindent\textbf{Semi-Parametric Inference.}
The efficient score \eqref{eqn:optimal score} has a standard influence-function projection interpretation. For the
complete-data estimating equation $\mathbb E\{U_{\theta}(X,Y)\}=0,$ define $H_{\theta} = \E[\nabla_\theta U_\theta(X,Y)]$, 
then the complete-data influence function is
\[
    \varphi(X,Y)
    =
    -H_{\theta^\star}^{-1}U_{\theta^\star}(X,Y).
\]
Since only \((X,\hat Y)\) is observed for the unlabeled sample, the relevant projected
component is
\[
    \mathbb E\{\varphi(X,Y)\mid X,\hat Y\}
    =
    -H_{\theta^\star}^{-1}
    \mathbb E\{U_{\theta^\star}(X,Y)\mid X,\hat Y\}.
\]
Thus the optimal score
$
    s^\star(X,\hat Y)=\mathbb E\{U_{\theta^\star}(X,Y)\mid X,\hat Y\}
$
is the score-level projection of the complete-data influence function onto observed information. Under the equivalent missing-label formulation with labeling probability \(p=n/(n+N)\), the efficient observed-data influence function in the semi-parametric inference literature \citep{robins1994estimation,tsiatis2006semiparametric,bickel1993efficient, chen2005measurement, chen2008semiparametric, kennedy2024semiparametric} is given by 
\[
    \varphi_{\mathrm{EIF}}(X,\hat Y, Y, D)
    =
    -H_{\theta^\star}^{-1}
    \left[
        s^\star(X,\hat Y)
        +
        \frac{D}{p}
        \{U_{\theta^\star}(X,Y)-s^\star(X,\hat Y)\}
    \right].
\]
The efficient influence function
is Neyman orthogonal to first-order perturbations of the nuisance function \(s^\star\): if we define $\psi_s(X,\hat Y,Y,D) = s(X,\hat Y)
        +
        \frac{D}{p}
        \{U_{\theta^\star}(X,Y)-s(X,\hat Y)\}$ for any measurable function $s$, then it is straightforward to see that
$$
\frac{d}{dt} \E[\psi_{s^\star + tq}(X,\hat Y,Y,D)]\Big|_{t=0} = \E\left[q(X,\hat Y)-\frac{D}{p}q(X,\hat Y)\right] = 0
$$
for any measurable perturbation $q$. Therefore, in our RePPI estimator, we use the cross-fitting technique \citep{chernozhukov2018double} to enable the use of any flexible machine-learning nuisance estimators without introducing extra bias.

~\\
~\noindent \textbf{Semi-Supervised Inference.}
In semi-supervised inference problems, a
small labeled sample $\{X_i, Y_i\}_{i=1}^n$ is accompanied by a larger sample of always-observed covariates $\{X_i\}_{i=n+1}^{n+N}$. Existing methods exploit the unlabeled
covariate distribution in different ways. For mean estimation, \citet{zhang2019semi} use least-squares and nonparametric regression ideas to estimate the component of \(Y\) predictable from \(X\), and then average this fitted component using the larger covariate sample. For linear regression, \citet{chakrabortty2018efficient} propose an imputation-and-refitting procedure with cross-validation that is designed
to improve efficiency while remaining adaptive to model misspecification. For general M-estimation, \citet{song2024general} formulate the use of unlabeled data
more explicitly as a projection problem and derive optimal weights for combining labeled and unlabeled information. More recently, \citet{testa2025semiparametric}
extends a semiparametric missing-at-random framework with distribution shift and
decaying overlap. The PPI framework can be viewed as a special case of semi-supervised inference, where the covariates $X_i$ are supplemented with a prediction $\hat Y$ that can be regarded as a surrogate of the true label $Y$. 

As a concurrent work, \citet{xu2025unified} provides a broad semiparametric efficiency theory for semi-supervised
inference. When the auxiliary information is taken to be \((X,\hat Y)\), the efficient estimator of \citet{xu2025unified} targets the same optimal imputed score as RePPI. Notably, their efficient estimator uses a basis expansion approach to fit the optimal score and requires the eigenvalue of the empirical covariance matrix of basis functions to be lower bounded, which could suffer from the curse of dimensionality and computational instability in practice (detailed empirical comparison in Appendix \ref{sec:other-comparison}). Our contribution over \citet{xu2025unified} is to provide a practical computational strategy that enables learning the optimal imputed score with a general class of learners. 

~\\
~\noindent \textbf{Design-Based Supervised Learning.}
The PPI framework is closely related to design-based supervised learning (DSL)
\citep{egami2023using,egami2024using,rister2025correcting}, which studies downstream
inference when text, image, or other AI-generated annotations are available for all units and
gold-standard expert labels are collected for a designed subset. DSL constructs a bias-corrected
pseudo-outcome
\[
    \widetilde Y
    =
    \widehat s(X,\hat Y)
    +
    \frac{D}{\pi(X,\hat Y)}\{Y-\widehat s(X,\hat Y)\},
\]
where \(\pi(X,\hat Y)\) is the known labeling probability. Downstream moments are then evaluated with
\(\widetilde Y\) in place of \(Y\). In the special case of constant labeling probability, no distribution
shift, and a generalized linear model, this estimator is algebraically equivalent to the classical AIPW
surrogate-outcome estimator \eqref{eqn: AIPW} and therefore has the same population recalibration target as \eqref{eqn:optimal score}. However, the emphasis of the two approaches is different. DSL focuses on design-based validity for downstream social-science analyses under known, possibly stratified expert-labeling probabilities,
and it guarantees unbiasedness and valid uncertainty quantification even when the AI annotations
are arbitrarily biased. Our method instead focuses on achieving efficiency for general
M-estimation. Its recalibration target \eqref{eqn:optimal score} need not reduce to an outcome-level conditional mean for nonlinear losses. In addition,
RePPI combines flexible cross-fitted score learning with matrix power tuning, so that imperfect
recalibration does not lead to first-order efficiency loss relative to labeled-only inference, and its
linearized imputed-loss formulation preserves convexity of the downstream empirical risk
minimization problem.

~\\
~\noindent \textbf{Seemingly Unrelated Regression.}
There is also a useful analogy between PPI and seemingly unrelated regression (SUR) \citep{zellner1962efficient,zellner1963estimators, schmidt1977estimation}. In SUR, several regression equations are estimated jointly, and generalized least squares improves efficiency when the errors across equations are correlated. In the context of PPI, the SUR problem can be stated as 
$$
Y = X^\top \beta^\star + \epsilon, \quad \hat Y = X^\top \gamma^\star + \epsilon',
$$
where $\epsilon$ and $\epsilon'$ are correlated errors such that $\E[\eps|X] = 0,\E[\eps'|X] = 0,\Cov(\eps,\eps'|X) \neq 0 $. Therefore, it is possible to utilize the information of $\hat Y$ to reduce the uncertainty in the estimation of $\beta^\star$. For example, the estimator from \cite{conniffe1985estimating} can be written as 
$$
\hat \beta^{\textrm{SUR}} = \hat \beta^{\textrm{lab}} - \hat M(\hat \gamma^{\textrm{lab}}-\hat \gamma^{\textrm{all}}),
$$
where $\hat \beta^{\textrm{lab}}$ is the OLS coefficient of $Y$ on $X$ using labeled data, $\hat \gamma^{\textrm{lab}}$ is the OLS coefficient of $\hat Y$ on $X$ using labeled data, $\hat \gamma^{\textrm{all}}$ is the OLS coefficient of $\hat Y$ on $X$ using all labeled and unlabeled data, and $\hat M$ is a tuning matrix. The formulation of the estimator is similar to \eqref{eqn: PPI estimator correction}, while the choice of $\hat M$ can be different since SUR assumes an underlying model. Similar ideas have also been applied recently to genome-wide association studies, such as \citet{mccaw2024synthetic} and \citet{miao2024valid}, under different modeling assumptions.
We refer interested readers to \cite{gronsbell2024another} for a more detailed discussion. 

~\\
~\noindent \textbf{Causal Inference with Multiple Data Sources.} 
% The idea of \cite{chen2000unified} has also been applied to causal inference problems with multiple environments. 
\cite{yang2019combining} considers the problem of average treatment effect estimation using two data sources: one that includes all confounders and another that omits a subset of confounders. Specifically, the first dataset includes $(A_i, X_i, U_i, Y_i)_{i=1}^{n}$, and the second dataset includes $(A_i, X_i, Y_i)_{i=n+1}^{n+N}$. In this setting, $A_i$ is a binary treatment, $Y_i$ is the outcome variable with $Y_i = Y_i(1)A_i + Y_i(0)(1 - A_i)$, where $(Y_i(1), Y_i(0))$ is the pair of potential outcomes, and $(X_i, U_i)$ is the set of confounders satisfying the strong ignorability assumption: $A_i \perp\!\!\!\perp (Y_i(1), Y_i(0))\mid X_i, U_i$. Similarly as in the PPI setting, they assume the distribution of $(A_i, X_i, Y_i)$ is identical in the two datasets. Under their assumptions, the average treatment effect $\theta^\star = \E[Y(1) - Y(0)]$ can be identified through an estimating equation $m_\theta(A, X, U, Y)$. For example, we can set
\begin{equation}\label{eq:IPW}
m_{(\theta, \eta)}(A, X, U, Y) = \frac{AY}{e_\eta(X, U)} - \frac{(1 - A)Y}{1 - e_\eta(X, U)} - \theta, \end{equation}
where $e_\eta(X, U)$ is a correctly specified parametric model for the propensity score $\P(A = 1\mid X, U)$. \cite{yang2019combining} essentially apply the estimator of \cite{chen2000unified} with $h_\gamma$ being another estimating equation $\td{m}(A, X, Y)$, e.g., 
\begin{equation}\label{eq:IPW_tdm}
\td{m}_{(\theta, \td{\eta})}(A, X, Y) = \frac{AY}{\td{e}_{\td{\eta}}(X)} - \frac{(1 - A)Y}{1 - \td{e}_{\td{\eta}}(X)} - \theta,
\end{equation}
where $\td{e}_{\td{\eta}}(X)$ is an error-prone propensity score model. However, they do not study the optimal choice of $\td{m}$. Similarly, the optimal $\td{m}$ in this setting is given by 
$\E[m_{(\theta, \eta)}(A, X, U, Y)\mid A, X, Y].$
With the choice \eqref{eq:IPW}, this reduces to
\begin{equation}\label{eq:IPW_tdm_optimal}
\td{m}_{(\theta, \eta)}(A, X, Y) = AY \E\left[\frac{1}{e_\eta(X, U)}\mid A,X,Y\right] - (1 - A)Y\E\left[\frac{1}{1 - e_\eta(X, U)}\mid A,X,Y\right] - \theta.
\end{equation}
This is different from \eqref{eq:IPW_tdm}, implying that the estimator of \cite{yang2019combining} is inefficient in general and can be improved using \eqref{eq:IPW_tdm_optimal} instead of \eqref{eq:IPW_tdm}.

}
{

% \appendixtwo
\section{The role of cross-fitting}
\label{sec: cross-fitting}

In existing surrogate outcome literature \citep{robins1994estimation, chen2000unified, chen2008improving}, although the semi-parametric efficiency is established when using the optimal score function  $s^\star(X,\hat Y) = \E[\nabla\ell_{\theta^\star}(X, Y)|X, \hat Y]$, the computation strategy for achieving the efficiency is generally unclear. In practice, $s^\star(X,\hat Y)$ is hardly known in prior and needs to be fitted using samples. In our methodology (Algorithm \ref{alg:crossfit}), we use the cross-fitting strategy to divide the samples for estimating the optimal score and making inference, and we prove in Theorem \ref{thm:main} that this strategy can recover the semi-parametric efficiency when the optimal score function is consistently estimated.
In regular settings where the fitted score belongs to a fixed Donsker class (such as the fixed-dimensional parametric models advocated by \cite{chen2000unified, chen2008improving}), classical empirical process theory implies that using the same sample to fit the score and make inference is valid. However, in applications with modern machine learning models, the optimal imputed loss is often complicated and can not be consistently learned through the P-donsker class, such as a fixed-dimensional parametric model. This aligns with the benefits of cross-fitting advocated in \cite{chernozhukov2018double}: it relaxes the strong Donsker class requirements and enabled the usage of flexible learners.

We now make this distinction explicit through a theoretical analysis. For any function $a(X,Y,\hat Y)$, write
\[
    \P_n a = \frac1n\sum_{i=1}^n a(X_i,Y_i,\hat Y_i),
\]
and for any function $b(X,\hat Y)$, write
\[
    \P_nb = \frac1n\sum_{i=1}^n b(X_i,\hat Y_i),
    \quad
    \P_Nb = \frac1N\sum_{i=n+1}^{n+N} b(X_i,\hat Y_i),
    \quad
    \bar \P_{n,N}b
    =
    \frac{n}{n+N}\P_nb
    +
    \frac{N}{n+N}\P_Nb.
\]
For centered empirical processes, we write:
\[
    \G_n a
    =
    \frac1{\sqrt n}\sum_{i=1}^n\{a(X_i,Y_i,\hat Y_i)-\E a(X,Y,\hat Y)\},
\]
\[
    \hat\G_n b
    =
    \frac1{\sqrt n}\sum_{i=1}^n\{b(X_i,\hat Y_i)-\E b(X,\hat Y)\},
    \qquad
    \hat\G_N b
    =
    \frac1{\sqrt N}\sum_{i=n+1}^{n+N}\{b(X_i,\hat Y_i)-\E b(X,\hat Y)\}.
\]

First, consider the fixed-score version of the RePPI estimating equation. For a possibly misspecified score limit $s(X,\hat Y)$ and a fixed matrix $M$, the linearized imputed loss is
\[
    g_\theta(X,\hat Y)=\frac{N}{n+N}\theta^\top M s(X,\hat Y),
\]
so that the corresponding estimating equation is
\begin{equation}
\label{eq:same_section_fixed_score_moment}
    0
    =
    \P_n\nabla\ell_\theta
    +
    \frac{N}{n+N} M\{\P_Ns-\P_ns\}.
\end{equation}
If $s$ is fixed, then the standard Taylor expansion argument (see proof of Theorem~\ref{thm: efficient PPI} in Appendix \ref{sec: proof REPPI} for details) gives
\begin{equation}
\label{eq:same_section_fixed_score_expansion}
    \sqrt n(\hat\theta^\ppi-\theta^\star)
    =
    -H_{\theta^\star}^{-1}
    \left[
        \G_n\nabla\ell_{\theta^\star}
        +
        \frac{N}{n+N} M\left\{\sqrt{\frac nN}\hat\G_Ns-\hat\G_ns\right\}
    \right]
    +o_p(1).
\end{equation}
When $M=\Cov(\nabla\ell_{\theta^\star},s)\Cov(s)^{-1}$, the variance in \eqref{eq:same_section_fixed_score_expansion} is exactly $\Sigma_s^\REPPI$ in Theorem~\ref{thm:main}.

The issue is what happens if the same labeled observations are used both to fit the score and to evaluate the labeled correction term. Let $\hat s_\theta^\all$ denote a score learner trained on all labeled observations, and let $\hat s_{\theta^\star}^\all$ be its value at $\theta^\star$. A same-sample linearized implementation uses the moment
\begin{equation}
\label{eq:same_section_same_score_moment}
    0
    =
    \P_n\nabla\ell_\theta
    +
    \frac{N}{n+N} M\{\P_N\hat s_\theta^\all-\P_n\hat s_\theta^\all\}.
\end{equation}
This formulation also captures the local behavior of a same-sample flexible extension of the CC-estimator \eqref{eqn: CC estimator}. Indeed, if a whole fitted score family $\hat s_\theta^\all$ is used in the CC construction, define
\[
    \hat\theta_{\hat s}^\cls
    \in
    \left\{\theta:\P_n\hat s_\theta^\all=0\right\},
    \qquad
    \hat\theta_{\hat s}^\all
    \in
    \left\{\theta:\bar \P_{n,N}\hat s_\theta^\all=0\right\},
\]
and
\[
    \hat\theta_{\hat s}^\cc
    =
    \hat\theta^\cls
    -
    \hat M_s(\hat\theta_{\hat s}^\cls-\hat\theta_{\hat s}^\all).
\]
A Taylor expansion of the two-score equations shows that the same empirical-process term derived below also enters this CC implementation.

\begin{proposition}[Same-sample score learning and the extra term]
\label{prop:same_sample_extra_term}
Let $\hat s_{\theta^\star}^\all$ be trained on the labeled sample and suppose that,
\[
    \E\left[
        \|\hat s_{\theta^\star}^\all(X,\hat Y)-s(X,\hat Y)\|^2
    \right]
    =o_p(1).
\]
Suppose further that the same-sample empirical process has the expansion
\begin{equation}
\label{eq:same_section_leakage_expansion}
    R_{n,s}^\all
    :=
    \G_n
    \{\hat s_{\theta^\star}^\all-s\}
    =
    b_n+\frac1{\sqrt n}\sum_{i=1}^n\zeta(X_i,Y_i,\hat Y_i)+o_p(1),
\end{equation}
where $b_n\to b$ and $\zeta$ has mean zero and finite second moment. Assume also that $\hat M\to_pM$ and Assumption \ref{asm: regularity} and \ref{asm: smooth gradient} hold. Then the solution $\hat\theta^\mathrm{Same}$ to \eqref{eq:same_section_same_score_moment} satisfies
\begin{align}
\label{eq:same_section_same_expansion}
    \sqrt n(\hat \theta^\mathrm{Same}-\theta^\star)
    =
    -H_{\theta^\star}^{-1}
    \Bigg[
        \G_n\nabla\ell_{\theta^\star}
        +
        \frac{N}{n+N} M\lb \left\{\sqrt{\frac nN}\hat\G_Ns-\hat\G_ns\right\}
        -
        b
        -
        \frac1{\sqrt n}\sum_{i=1}^n\zeta(X_i,Y_i,\hat Y_i)\rb
    \Bigg]
    +o_p(1).
\end{align}
\end{proposition}

\begin{proof}
Set $\Delta_n=\hat s_{\theta^\star}^\all-s$. Expanding \eqref{eq:same_section_same_score_moment} at $\theta^\star$ gives
\begin{align*}
    &\sqrt n\left[
        \P_n\nabla\ell_{\theta^\star}
        +
        \frac{N}{n+N}M\{\P_N\hat s_{\theta^\star}^\all-\P_n\hat s_{\theta^\star}^\all\}
    \right] \\
    &\quad =
    \G_n\nabla\ell_{\theta^\star}
    +
    \frac{N}{n+N} M\left\{\sqrt{\frac nN}\hat\G_Ns-\hat\G_ns\right\}
    +
    \frac{N}{n+N}M\sqrt n(\P_N-\P)\Delta_n
    -
    \frac{N}{n+N}M R_{n,s}^\all
    +o_p(1).
\end{align*}
The unlabeled sample is independent of $\Delta_n$ conditional on the fitted score. Therefore,
\[
    \E\left[
        \left\|\sqrt n(\P_N-\P)\Delta_n\right\|^2
    \right]
    =
    \frac nN
    \|\Delta_n\|_{L_2(\P)}^2
    =o_p(1).
\]
Substituting \eqref{eq:same_section_leakage_expansion} gives the expansion of the moment at $\theta^\star$. A Taylor expansion of the moment in $\theta$ and nonsingularity of $H_{\theta^\star}$ yield \eqref{eq:same_section_same_expansion}.
\end{proof}

Proposition~\ref{prop:same_sample_extra_term} separates the regular and flexible regimes. If $\hat s_{\theta^\star}^\all$ lies with probability tending to one in a fixed $P$-Donsker class and $\|\hat s_{\theta^\star}^\all-s\|_{L_2(P)}\to_p0$, then stochastic equicontinuity gives $R_{n,s}^\all=o_p(1)$. Hence same-sample fitting and cross-fitting are first-order equivalent in regular parametric or Donsker settings. For flexible learners, however, $R_{n,s}^\all$ need not vanish, which may introduce extra bias and variance. Cross-fitting removes this term by construction, because each labeled observation is evaluated by a score learner trained without that observation.

Below, we provide a concrete example to show the failure of the same-sample fitting strategy when a flexible score learner is used.  
\begin{example}[A simple example for the failure of same sample score estimates]
\label{ex:tree-leakage}
Consider scalar mean estimation with loss
\[
    \ell_\theta(X,Y)=\frac{1}{2}(Y-\theta)^2 .
\]
Let 
\(\hat Y,\epsilon,X\) be independent $N(0,1)$, and $Y=\hat Y+\epsilon$. 
Then \(\theta^\star=\E Y=0\), and the oracle imputed score is
\[
    s_\theta(X,\hat Y)
    =
    \E\{\nabla_\theta \ell_{\theta}(X,Y)\mid X,\hat Y\}
    =
    \theta-\hat Y.
\]

Let \(\mathcal D_n=\{(X_i,Y_i,\hat Y_i)\}_{i=1}^n\) denote the labeled
sample. Given \(\mathcal D_n\), define the same-sample
interpolating score learner
\[
    \hat s_n(X,\hat Y)
    =
    \begin{cases}
        \theta - Y_i,
        &\text{if } X=X_i \text{ for some } i\in\{1,\ldots,n\},\\
        \theta -\hat Y + T_n(X,\hat Y),
        &\text{otherwise.}
    \end{cases}
\]
Since \(X\) has a continuous distribution, \(X_1,\ldots,X_n\) are distinct
almost surely, so the above definition is unambiguous almost surely. Here, the learner $s_n(X, \hat Y)$ interpolates the training samples to represent the common overfitting phenomenon in modern machine learning. On test samples, we assume it learns the optimal score with some error $T_n(X,\hat Y)$. If $\E[\|T_n(X,\hat Y)\|^2]\rightarrow 0$, then the fitted score is perfectly out-of-sample consistent:
\begin{equation}
\label{eqn: out of sample consistency}
    \E\left[
        \lb\hat s_n(X,\hat Y)-s(X,\hat Y)\rb^2
    \right]
    \rightarrow0
    \qquad\text{a.s.}
\end{equation}
Thus, Theorem \ref{thm:main} implies that the RePPI estimator achieves the optimal efficiency. On the contrary, the PPI estimator \eqref{eqn: PPI} (or the CC estimator \eqref{eqn: CC estimator} from \cite{chen2000unified}) using the same sample fitting strategy with score $\hat s_n$ leads to the estimator 
\begin{equation}
    \hat \theta^{\mathrm{Same}} = \frac{1}{n}\sum_{i=1}^n Y_i - \hat M \lb \frac{1}{n}\sum_{i=1}^n Y_i - \frac{1}{n+N}\lb\sum_{i=1}^{n}Y_i+\sum_{i=n+1}^{n+N} (\hat Y_i+T_n(X_i,\hat Y_i)))\rb\rb,
\end{equation}
\end{example}
where $\hat M = \widehat\Cov(\theta - Y, \hat s_n(X, \hat Y)) / \widehat\Var(\hat s_n(X, \hat Y))$. There are several problems with this approach: 
\begin{enumerate}
    % \item The tuning parameter $\hat M =\widehat\Cov(\theta - Y, \hat s_n(X, \hat Y)) / \widehat\Var(\hat s_n(X, \hat Y))$ is no longer a consistent estimator of the population level parameter $M =\Cov(\theta - Y, \hat s_n(X, \hat Y)) / \Var(\hat s_n(X, \hat Y))$, since the joint distribution of $Y$ and $\hat s_n(X, \hat Y))$ on labeled data (which are used fro training) is significantly different from the population distribution. 
    \item The excessive term $R_{n,s}^\all=-\frac1{\sqrt n}\sum_{i=1}^n\epsilon_i-\sqrt n\,\E\{T_n(X,\hat Y)\}$ in \eqref{eq:same_section_leakage_expansion} can bring additional error in asymptotics. For example, if $T_n(X, \hat Y) = \frac{1}{\sqrt{n}}$, the model is biased, then the L2 consistency condition \eqref{eqn: out of sample consistency} still holds, $M=q$, and we have 
    \begin{equation}
        \hat \theta^{\mathrm{Same}} = \frac{1}{n+N}\lb\sum_{i=1}^{n}Y_i+\sum_{i=n+1}^{n+N} (\hat Y_i+T_n(X_i,\hat Y_i)))\rb,
    \end{equation}
    and thus $\sqrt{n} (\hat \theta^{\mathrm{Same}} - \theta^\star ) \sim \N(\frac{1}{1+r}, (1+\frac{r}{1+r})\frac{r}{1+r})$, it is biased in the root-n regime. 
    \item Even if we assume the asymptotic distribution is still correct (the same as Theorem \ref{thm:main}). Without sample splitting, it is impossible to consistently estimate the asymptotic variance \eqref{eq:Sigma_REPPI}, which involves the covariance of $Y$ and $\hat s_n(X,\hat Y)$ and thus has a different value on labeled data and the true population. Therefore, the final confidence intervals can be invalid. 
\end{enumerate}

Example~\ref{ex:tree-leakage} is deliberately simple, but it captures the practical failure mode: a highly adaptive learner can behave almost correctly on new observations while memorizing training data from labeled observations. Cross-fitting rules out this own-observation contribution by construction. Hence, the cross-fitting procedure in RePPI is not to improve over regular parametric CC fitting; it is to make flexible recalibration of the optimal imputed-loss gradient compatible with the fixed-score variance theory under weak out-of-sample convergence conditions.

In addition, we illustrate this idea using a straightforward simulation example of the mean estimation,  where the target is $\theta^\star=\E(Y)=0$ and
\[
    \hat Y\sim \mathrm{Unif}[0,1],\qquad
    Y=f(\hat Y)+\epsilon,\qquad
    f(x)=\sin(2\pi x),
    \qquad
    \epsilon\sim \N(0,1).
\]
The optimal score is $s^\star(\hat Y)=\theta - f(\hat Y)$.  We use two different learners to learn $m$:
\begin{enumerate}
    \item Parametric learner: A ridge regression using degree-5 polynomial features of $\hat Y$. 
    \item Flexible learner: A decision tree model using $\hat Y$. 
\end{enumerate}
Since the flexible learner has a much larger model capacity, it is very likely to overfit to the noise and perform differently on the training data (labeled data) and testing data (unlabeled data), which makes the inference procedure invalid. 

After we get an estimate $\hat f$, we can plug in and obtain the estimate for the optimal score $\hat s = \theta - \hat f$. 
For a fitted function $\hat f$, the estimator is
\[
    \hat\theta=\frac{1}{n}\sum_{i=1}^nY_i-\hat M\lb\frac{1}{n}\sum_{i=1}^n\hat f(\hat Y_i) - \frac{1}{n+N}\sum_{i=1}^{n+N}\hat f(\hat Y_i)\rb,
    \qquad
    \hat M=\frac{\widehat\Cov(Y,\hat f(\hat Y_i))}{\widehat\Var(\hat f(\hat Y_i))}.
\]
The no-split version uses in-sample fitted values $\hat m(\hat Y_i)$ on the labeled data; the cross-fitted version uses out-of-fold fitted values.  Table~\ref{tab:cross-fitting-simulation} reports the experiment result with $n=500$, $N=5000$, $\alpha = 0.95$, and 200 Monte Carlo repetitions. Here, the empirical s.d. is the standard deviation of $\hat \theta$ over 200 repetitions, and it represents the true uncertainty. The reported standard error is the usual fixed-score standard error that is used to construct confidence intervals
\[
    \left[\frac{1}{n}\lb\widehat\Var(Y)-\frac{1}{1+r}\frac{\Cov(Y_i,\hat f(Y_i))}{\Var(\hat f(Y_i))}\rb\right]^{1/2}.
\]
As we can see, when we use a decision tree model and the same sample fitting strategy, as the model complexity increases, the reported standard error significantly underestimates the true uncertainty, and the confidence intervals are invalid. This is because the report's standard error is computed using labeled data, which can not capture the out-of-sample uncertainty when the labeled data are used as training samples. On the contrary, the cross-fitting method can always measure the uncertainty correctly and provide a valid confidence interval regardless of the model complexity. For the low-dimensional parametric model, both methods can yield a valid confidence interval, which explains why the importance of cross-fitting is not emphasized in existing literature. 

\begin{table}[t]
\centering
\begin{tabular}{llccc}
\hline
Learner & Method & Empirical s.d. & Reported s.e. & Coverage \\
\hline
Polynomial Regression & Same-sample & 0.0449 & 0.0491 & 0.955 \\
Polynomial Regression & Cross-fit & 0.0450 & 0.0495 & 0.955 \\
\hline
Decision Tree (max depth=5) & Same-sample  & 0.0470 & 0.0422 & 0.930 \\
Decision Tree (max depth=5) & Cross-fit & 0.0481 & 0.0494 & 0.950 \\
\hline
Decision Tree (max depth=10) & Same-sample  & 0.0507 & 0.0318 & 0.790 \\
Decision Tree (max depth=10) & Cross-fit & 0.0580 & 0.0569 & 0.950 \\
\hline
Decision Tree (max depth=None) & Same-sample  & 0.0546 & 0.0165 & 0.445 \\
Decision Tree (max depth=None) & Cross-fit & 0.0614 & 0.0620 & 0.940 \\
\hline
\end{tabular}
\caption{A simulation illustrating the role of cross-fitting.  With a regular parametric model, no-split and cross-fitted estimation are first-order equivalent.  With a decision tree, the same-sample fitted values may interpolate labeled noise, leading to an incorrect inference procedure. As model complexity increases, the confidence interval's coverage decreases as overfitting becomes more severe. Cross-fitting restores approximately valid fixed-score inference.}
\label{tab:cross-fitting-simulation}
\end{table}
}
% \appendixthree
\section{Proofs from Section \ref{sec: REPPI}}
\subsection{Proof of Theorem \ref{thm: efficient PPI}}
\label{sec: proof REPPI}
We first state the regularity conditions required in Theorem \ref{thm: efficient PPI}. 
\begin{assumption}
\label{asm: regularity}
The loss function $\ell_\theta(X, Y )$ and the \rectifier $g_\theta(X, \hat Y )$  satisfy the following conditions:
\begin{enumerate}
    \item $\ell_\theta(X, Y)$ and $g_\theta(X, \hat Y)$ are convex in $\theta$ and differentiable at $\theta^\star$;
    \item $\ell_\theta(X, Y)$ and $g_\theta(X, \hat Y)$ are locally Lipschitz around $\theta^\star$, i.e., there exist a neighborhood of $\theta^\star$ such that $\ell_\theta(X, Y)$ is $M_1(x,y)$ Lipschitz and $g_\theta(X, \hat Y)$ is $M_2(x,y)$ Lipschitz in $\theta$, and $\E[M_1(X,Y)+M_2(X, \hat Y)] < \infty$;
    \item The population losses $L(\theta) = \E[\ell_\theta(X, Y)]$ have positive definite Hessian matrices at $\theta^\star$;
    \item The covariance matrices $\Cov(\nabla \ell_{\theta^\star}(X,Y))$ and $\Cov(\nabla g_{\theta^\star}(X,\hat Y))$ are both semipositive definite. 
\end{enumerate}
\end{assumption}
Assumption \ref{asm: regularity} is a standard assumption to prove asymptotic normality, and a similar assumption has been imposed in \cite{angelopoulos2023ppi++}. Below we provide a self-contained proof of Theorem~\ref{thm: efficient PPI}, though it can be derived as a corollary of \cite{robins1994estimation} with slight modification.

\begin{proof}[of Theorem \ref{thm: efficient PPI}]
    First, we prove that $\hat{\theta}_g^\ppi$ is consistent. Denote the loss function in \eqref{eqn: PPI} by
$$
\mathcal{L}_g^\ppi(\theta) := \frac{1}{n}\sum_{i=1}^{n}\ell_{\theta}(X_i, Y_i) - \lb \frac{1}{n}\sum_{i=1}^{n}g_{\theta}(X_i, \hat Y_i) - \frac{1}{N}\sum_{i=n+1}^{n+N}g_{\theta}(X_i, \hat Y_i)\rb.
$$ Since $(X_i, \hat{Y}_i)$ has the same distribution in the labeled and unlabeled data, we know $\E[\mathcal{L}_g^\ppi(\theta)] = L(\theta).$
  Using a standard covering argument (e.g., see Theorem 9.2 in \cite{keener2010theoretical}), Assumption~\ref{asm: regularity} implies that there exist $\epsilon>0$ such that 
    $$
    \sup_{\theta: \|\theta - \theta^\star\|\leq \epsilon} |\mathcal{L}_g^\ppi(\theta) - L(\theta)| \xrightarrow{p} 0.
    $$
    Since $\theta^\star$ is unique, for all $\epsilon > 0$, we know there exists a $\delta > 0$ such that $L(\theta) - L(\theta^\star) \geq \delta$ for all $\theta$ on the $\epsilon$-shell $\{\theta \mid \|\theta - \theta^\star\| = \epsilon\}$ around $\theta^\star$. With this we can write
    \begin{align*}
    &\inf_{\|\theta - \theta^\star\| = \epsilon} \mathcal{L}_g^\ppi(\theta) - \mathcal{L}_g^\ppi(\theta^\star)\\
    & = \inf_{\|\theta - \theta^\star\| = \epsilon} ((\mathcal{L}_g^\ppi(\theta) - L(\theta)) + (L(\theta) - L(\theta^\star)) + (L(\theta^\star) - \mathcal{L}_g^\ppi(\theta^\star)))\\
    & \ge \delta - 2\sup_{\theta: \|\theta - \theta^\star\|\leq \epsilon} |\mathcal{L}_g^\ppi(\theta) - L(\theta)|\\
 & \geq \delta - o_P(1).
    \end{align*}
Then, for any $\theta$ such that $\|\theta - \theta^\star\| \geq \epsilon$, let $\theta_1 = \theta^\star + \frac{\theta - \theta^\star}{\|\theta - \theta^\star\|} \epsilon$. The convexity of $\mathcal{L}_g^\ppi(\theta)$ implies

$$
\mathcal{L}_g^\ppi(\theta) - \mathcal{L}_g^\ppi(\theta^\star) \geq \frac{\|\theta - \theta^\star\|}{\|\theta_1 - \theta^\star\|}\lb\mathcal{L}_g^\ppi(\theta_1) - \mathcal{L}_g^\ppi(\theta^\star)\rb \geq \delta - o_P(1).
$$
Therefore, no $\theta$ with $\|\theta - \theta^\star\| \geq \epsilon$ can minimize $\mathcal{L}_g^\ppi$, which implies 
\[
\P(\|\hat{\theta}_g^\ppi - \theta^\star\| < \epsilon) \to 1. 
\]
Since this holds for any $\eps > 0$, we can conclude that $\hat{\theta}_g^\ppi$ is consistent: $\hat{\theta}_g^\ppi\xrightarrow{p} \theta^\star$.

    Now we turn to the asymptotic normality. For any function $h$, we use the following shorthand notation
    \begin{equation*}
        \begin{aligned}
            &\mathbb{E}_n h := \frac{1}{n} \sum_{i=1}^n h(X_i, Y_i), \quad \G_n h = \sqrt{n} (\mathbb{E}_n h - \mathbb{E}[h(X, Y)]);\\
            &\hat\E_N h := \frac{1}{N} \sum_{i=n+1}^{N+n} h(X_i,\hat Y_i), \quad \hat \G_N h := \sqrt{N} (\hat\E_N h - \mathbb{E}[h(X, \hat Y)]);\\
            &\hat\E_n h := \frac{1}{n} \sum_{i=1}^n h(X_i,\hat Y_i), \quad \hat \G_n h := \sqrt{n} (\hat\E_n h - \mathbb{E}[h(X, \hat Y)]). 
        \end{aligned}
    \end{equation*}

By Lemma 19.31 in \cite{van2000asymptotic}, Assumption \ref{asm: regularity} implies that for every (possibly random) sequence \(h_n = O_P(1)\), 
\[
\G_n \left[\sqrt{n} \left(\ell_{\theta^\star + \frac{h_n}{\sqrt{n}}} - \ell_{\theta^\star} \right) - h_n^\top \nabla \ell_{\theta^\star} \right] \xrightarrow{P} 0;
\]
\[
\hat \G_N \left[\sqrt{n} \left(g_{\theta^\star + \frac{h_n}{\sqrt{n}}} - g_{\theta^\star} \right) - h_n^\top \nabla g_{\theta^\star} \right] \xrightarrow{P} 0;
\]
\[
\hat \G_n \left[\sqrt{n} \left(g_{\theta^\star + \frac{h_n}{\sqrt{n}}} - g_{\theta^\star} \right) - h_n^\top \nabla g_{\theta^\star} \right] \xrightarrow{P} 0.
\]
Applying a second-order Taylor expansion and using the fact that $\nabla L(\theta^\star) = 0$, we obtain that
\begin{equation}
\label{eqn: expansion ell}
    \begin{aligned}
        n \mathbb{E}_n \left(\ell_{\theta^\star + \frac{h_n}{\sqrt{n}}} - \ell_{\theta^\star} \right) =& n \left(L\left(\theta^\star + \frac{h_n}{\sqrt{n}} \right) - L(\theta^\star) \right) +  h_n^\top \G_n \nabla \ell_{\theta^\star}+ o_P(1) \\
        =& \frac{1}{2} h_n^\top H_{\theta^\star} h_n + h_n^\top \G_n \nabla \ell_{\theta^\star} + o_P(1).
    \end{aligned}
\end{equation}
Similarly, we have 
\begin{equation}
\label{eqn: expansion g}
    \begin{aligned}
        n \hat \E_N \left(g_{\theta^\star + \frac{h_n}{\sqrt{n}}} - g_{\theta^\star} \right) =& n \left(G\left(\theta^\star + \frac{h_n}{\sqrt{n}} \right) - G(\theta^\star) \right) + \sqrt\frac{n}{N} h_n^\top \hat\G_N \nabla g_{\theta^\star} + o_P(1),\\
        n \hat \E_n \left(g_{\theta^\star + \frac{h_n}{\sqrt{n}}} - g_{\theta^\star} \right) =& n \left(G\left(\theta^\star + \frac{h_n}{\sqrt{n}} \right) - G(\theta^\star) \right) +  h_n^\top \hat\G_n \nabla g_{\theta^\star} + o_P(1).
    \end{aligned}
\end{equation}
Combining \eqref{eqn: expansion ell} and \eqref{eqn: expansion g}, we obtain 
\begin{align*}
&n \left( \mathcal{L}_g^\ppi\left(\theta^\star+\frac{h_n}{\sqrt{n}}\right) - \mathcal{L}_g^\ppi(\theta^\star) \right)\\
& = \frac{1}{2} h_n^\top H_{\theta^\star} h_n + h_n^\top \left( \G_n \nabla \ell_{\theta^\star} + \sqrt{\frac{n}{N}} \hat \G_N \nabla g_{\theta^\star} - \hat \G_n \nabla g_{\theta^\star} \right)  + o_P(1).
\end{align*}
We now consider two particular choices of the sequence \( h_n \). First, consider
\[
h_n^\star = \sqrt{n} (\hat \theta_g^\ppi - \theta^\star).
\]
By Corollary 5.53 of \cite{van2000asymptotic} and Assumption \ref{asm: regularity}, the consistency \( \hat \theta_g^\ppi \xrightarrow{p} \theta^\star \) implies that \( h_n^\star = O_P(1) \). This yields
\begin{equation*}
    \begin{aligned}
n \left(\mathcal{L}_g^\ppi(\hat \theta_g^\ppi) - \mathcal{L}_g^\ppi(\theta^\star) \right)
= &\frac{1}{2} h_n^{\star\top} H_{\theta^\star} h_n^\star\\ +& h_n^{\star\top} \left( \G_n \nabla \ell_{\theta^\star} +\sqrt{\frac{n}{N}} \hat \G_N \nabla g_{\theta^\star} - \hat \G_n \nabla g_{\theta^\star} \right) + o_P(1).        
    \end{aligned}
\end{equation*}

Similarly, we can choose 
\[
h_n = -H_{\theta^\star}^{-1} \left( \G_n \nabla \ell_{\theta^\star} + \sqrt{\frac{n}{N}} \hat \G_N \nabla g_{\theta^\star} - \hat \G_n \nabla g_{\theta^\star} \right),
\]
which is \( O_P(1) \) by the central limit theorem, and this gives
\[
n \left( \mathcal{L}_g^\ppi\left(\theta^\star+\frac{h_n}{\sqrt{n}}\right) - \mathcal{L}_g^\ppi(\theta^\star) \right) = -\frac{1}{2} h_n^\top H_{\theta^\star} h_n + o_P(1).
\]
Since $\hat \theta_g^\ppi$ is defined as the minimizer of $\mathcal{L}_g^\ppi$, we know that 

\begin{equation*}
    \begin{aligned}
        &n \left(\mathcal{L}_g^\ppi(\hat \theta_g^\ppi) - \mathcal{L}_g^\ppi(\theta^\star) \right) \leq n \left( \mathcal{L}_g^\ppi(\theta^\star+\frac{h_n}{\sqrt{n}}) - \mathcal{L}_g^\ppi(\theta^\star) \right)\\
        \Rightarrow\quad &  \frac{1}{2} h_n^{\star\top} H_{\theta^\star} h_n^\star - h_n^{\star\top}H_{\theta^\star} h_n\leq -\frac{1}{2} h_n^\top H_{\theta^\star} h_n + o_P(1)\\
        \Rightarrow\quad &  \frac{1}{2} (h_n^\star - h_n)^\top H_{\theta^\star} (h_n^\star - h_n) = o_P(1)\\
        \Rightarrow\quad & h_n^\star = h_n + o_P(1)\\
        \Rightarrow\quad &\sqrt{n} (\hat \theta_g^\ppi - \theta^\star) = -H_{\theta^\star}^{-1} \left( \G_n \nabla \ell_{\theta^\star} + \sqrt{\frac{n}{N}} \hat \G_N \nabla g_{\theta^\star} - \hat \G_n \nabla g_{\theta^\star} \right) + o_P(1).
    \end{aligned}
\end{equation*}
By the central limit theorem, the right-hand side is asymptotically normal:
\begin{equation*}
    \begin{aligned}
        &\G_n \nabla \ell_{\theta^\star} + \sqrt{\frac{n}{N}} \hat \G_N \nabla g_{\theta^\star} - \hat \G_n \nabla g_{\theta^\star} \\
        =&\sqrt{\frac{n}{N}} \frac{1}{\sqrt{N}} \sum_{i=n+1}^{n+N} \lb\nabla g_{\theta^\star}(X_i, \hat Y_i) -\E[\nabla g_{\theta^\star}(X, \hat Y)]\rb \\
        &+\frac{1}{\sqrt{n}} \sum_{i=1}^n \lb\nabla \ell_{\theta^\star}(X_i, Y_i) - \nabla g_{\theta^\star}(X_i, \hat Y_i) - \E[\nabla \ell_{\theta^\star}(X, Y) - \nabla g_{\theta^\star}(X, \hat Y)]\rb \\ 
        \xrightarrow{d}& ~\mathcal{N}(0, r \Cov(\nabla g_{\theta^\star}(X, \hat Y)) + \Cov(\nabla \ell_{\theta^\star}(X, Y) - \nabla g_{\theta^\star}(X, \hat Y))).
    \end{aligned}
\end{equation*}
Thus
\begin{equation*}
    % \label{eqn: asymptotic expansion}
    \sqrt{n} (\hat \theta_g^\ppi - \theta^\star) = -H_{\theta^\star}^{-1} \left( \G_n \nabla \ell_{\theta^\star} + \sqrt{\frac{n}{N}} \hat \G_N \nabla g_{\theta^\star} - \hat \G_n \nabla g_{\theta^\star} \right) + o_P(1)
\xrightarrow{d} \mathcal{N}(0, \Sigma_g^\ppi),
\end{equation*}
where
\begin{equation}
    \label{eqn: sigma g ppi appendix}
    \Sigma_g^\ppi = H_{\theta^\star}^{-1}\lb r \Cov(\nabla g_{\theta^\star}(X, \hat Y)) + \Cov(\nabla \ell_{\theta^\star}(X, Y) - \nabla g_{\theta^\star}(X, \hat Y))\rb H_{\theta^\star}^{-1}.
\end{equation}

From \eqref{eqn: sigma g ppi appendix}, we can identify the optimal $g$, i.e. the one that minimizes the asymptotic variance. Notice that 
\begin{equation*}
    \begin{aligned}
        \Sigma_g^\ppi =H_{\theta^\star}^{-1}\Big(&  (1+r) \Cov(\nabla g_{\theta^\star}(X, \hat Y)) + \Cov(\nabla \ell_{\theta^\star}(X, Y)) \\
        &-2 \Cov(\nabla \ell_{\theta^\star}(X, Y), \nabla g_{\theta^\star}(X, \hat Y))\Big) H_{\theta^\star}^{-1} \\ 
        =H_{\theta^\star}^{-1}\Big(&  (1+r) \Cov\left(\nabla g_{\theta^\star}(X, \hat Y) - \frac{1}{1+r}\nabla \ell_{\theta^\star}(X, Y)\right) \\&+ \frac{r}{1+r}\Cov(\nabla \ell_{\theta^\star}(X, Y))\Big) H_{\theta^\star}^{-1} 
    \end{aligned}
\end{equation*}
Since the covariance matrix is positive semi-definite, the minimum is achieved when 
\[
\nabla g_{\theta^\star} = \frac{1}{1+r}\E[\nabla\ell_{\theta^\star}(X, Y)|X,\hat Y].\]
That is, for any other function $g'$, we have 
$
\Sigma_{g'}^\ppi \succeq \Sigma_{g}^\ppi. 
$
Under the optimal $g$, we have 
\begin{equation*}
    \begin{aligned}
    % \label{eqn: optimal variance}
        \Sigma_{g}^\ppi =& H_{\theta^\star}^{-1}\lb \frac{1}{1+r} \E\left[\Cov\left(\nabla \ell_{\theta^\star}(X, Y)|X,\hat Y\right)\right] + \frac{r}{1+r}\Cov(\nabla \ell_{\theta^\star}(X, Y))\rb H_{\theta^\star}^{-1} \\ 
        = &H_{\theta^\star}^{-1}\lb \E\left[\Cov\left(\nabla \ell_{\theta^\star}(X, Y)|X,\hat Y\right)\right] + \frac{r}{1+r}\Cov\lb\E\left[\nabla \ell_{\theta^\star}(X, Y)|X,\hat Y\right]\rb\rb H_{\theta^\star}^{-1}\\
        = &  H_{\theta^\star}^{-1}\lb \Cov(\nabla \ell_{\theta^\star}(X, Y)) - \frac{1}{1+r}\Cov(\E[\nabla \ell_{\theta^\star}(X, Y)|X, \hat Y])\rb H_{\theta^\star}^{-1}.
    \end{aligned}
\end{equation*}
\end{proof}
\subsection{Proof of Theorem \ref{thm:main}}
\label{sec: proof linear}
% We introduce an extra smoothness assumption on the gradient of the loss function $g$.

We introduce an additional assumption on the gradient of the loss function $\ell_\theta$. 
\begin{assumption}
    \label{asm: smooth gradient}
    The gradient $\nabla \ell_\theta(X, \hat Y)$ is differentiable and locally Lipschitz around $\theta^\star$, i.e., there exist a neighborhood of $\theta^\star$ such that $\nabla \ell_\theta(X, \hat Y)$ is $M_3(X, \hat Y)$ Lipschitz in $\theta$, and $\E[M_3(X, \hat Y)]<\infty$. 
\end{assumption}
% \begin{assumption}
%     The estimated \rectifier converges in probability
%     $$
%     \E\|\nabla\hat g_{\hat \theta_0} - \nabla g_{\theta^\star}\|^2 \xrightarrow{p} 0
%     $$
% \end{assumption}
% To derive the asymptotic variance of the cross-fit estimator, we first show the estimator obtained on each fold is consistent and asymptotically linear. 

We start with an intermediate result that serves as a building block for Theorem \ref{thm:main}. 
\begin{theorem}
\label{thm: power tune RePPI}
    Let the target $\theta^\star$ defined in \eqref{eq:thetastar} be unique and assume $n/N\rightarrow r$. Assume that $\hat s(X, \hat Y)$ is a random function such that $\E[\|\hat s(X, \hat Y) - s(X, \hat Y)\|^2] \xrightarrow{p} 0$ as $n\rightarrow \infty$, for some fixed function $s$. Let $\hat g_\theta(X,\hat Y) = \frac{1}{1+n/N}\theta^\top \hat M \hat s(X, \hat Y)$, where $\hat M$ is computed by \eqref{eq:Mhat}. Then, under Assumptions \ref{asm: regularity} and \ref{asm: smooth gradient}, $\hat \theta_{\hat g}^\ppi \xrightarrow{p} \theta^\star$ and $\sqrt{n}(\hat \theta_{\hat g}^\ppi - \theta^\star) \xrightarrow{d} \N(0, \Sigma_s^\REPPI)$, where $\Sigma_s^\REPPI$ is defined in Theorem \ref{thm:main}.
\end{theorem}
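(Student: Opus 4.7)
The plan is to reduce the statement to Theorem~\ref{thm: efficient PPI} by passing to an oracle version of $\hat g_\theta$ in which the random pair $(\hat s, \hat M)$ is replaced by its population counterpart $(s, M)$, where $M = \Cov(\nabla\ell_{\theta^\star}(X,Y), s(X,\hat Y))\,\Cov(s(X,\hat Y))^{-1}$, and then showing that $\hat\theta_{\hat g}^\ppi$ and this oracle estimator have the same asymptotic distribution. Because $\hat g_\theta$ is linear in $\theta$, the objective in \eqref{eqn: PPI} remains convex (it only adds a linear term to $\ell_\theta$), and its conditional expectation given $(\hat s,\hat M)$ equals $L(\theta)$. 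The uniform-convergence-plus-convexity argument used in the proof of Theorem~\ref{thm: efficient PPI} then applies verbatim to yield $\hat\theta_{\hat g}^\ppi \xrightarrow{p}\theta^\star$.

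For asymptotic normality, I would first establish $\hat M \xrightarrow{p} M$. Standard M-estimator theory under Assumption~\ref{asm: regularity} gives consistency of the initial XY-only estimator $\hat\theta_0$; combined with the local Lipschitz property from Assumption~\ref{asm: smooth gradient}, this transfers to the sample covariances involving $\nabla\ell_{\hat\theta_0}$, while $\E[\|\hat s - s\|^2]\to 0$ controls the sample covariances involving $\hat s$, so the continuous mapping theorem yields $\hat M \xrightarrow{p} M$. Running the Taylor expansion from the proof of Theorem~\ref{thm: efficient PPI} on the convex objective then gives
\[\sqrt{n}(\hat\theta_{\hat g}^\ppi - \theta^\star) = -H_{\theta^\star}^{-1}\sqrt{n}\,\bar V_n(\hat s,\hat M) + o_P(1), \qquad \bar V_n(s',M') := \mathbb{E}_n \nabla\ell_{\theta^\star} - \tfrac{M'}{1+n/N}\bigl(\hat\E_n s' - \hat\E_N s'\bigr),\]
with the $\mathbb{E}_n,\hat\E_n,\hat\E_N$ shorthands defined in Appendix~\ref{sec: proof REPPI}.

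The main obstacle is the oracle replacement $\sqrt{n}\,\bar V_n(\hat s,\hat M) = \sqrt{n}\,\bar V_n(s,M) + o_P(1)$. I would split the difference into two pieces: (i) $\sqrt{n}(\hat M - M)(\hat\E_n\hat s - \hat\E_N\hat s) = o_P(1)\cdot O_P(1) = o_P(1)$, since the empirical difference is $O_P(n^{-1/2})$ by a CLT and $\hat M - M = o_P(1)$ by the previous step; and (ii) $\sqrt{n}\,M\bigl(\hat\E_n(\hat s - s) - \hat\E_N(\hat s - s)\bigr)$. For piece (ii), because $\hat s$ is estimated on a fold independent of the data used in the empirical means (this independence is precisely why Algorithm~\ref{alg:crossfit} uses sample splitting), conditioning on that training fold makes the quantity mean-zero with conditional variance of order $(1+n/N)\,\E[\|\hat s - s\|^2]\to 0$, so Chebyshev's inequality gives $o_P(1)$. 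This cross-fitting step is what allows $\hat s$ to be fit by a flexible nonparametric method and is the crux of the proof.

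Finally, a multivariate CLT applied to the deterministic $\sqrt{n}\,\bar V_n(s,M)$ yields a Gaussian limit; using the identity $M\Cov(s) = \Cov(\nabla\ell_{\theta^\star}, s)$, the three $M$-containing terms in the formula from Theorem~\ref{thm: efficient PPI} collapse to $-\frac{1}{1+r}\Cov(\nabla\ell_{\theta^\star},s)\Cov(s)^{-1}\Cov(s,\nabla\ell_{\theta^\star})$, producing
\[\sqrt{n}(\hat\theta_{\hat g}^\ppi - \theta^\star) \xrightarrow{d} \N\bigl(0,\,H_{\theta^\star}^{-1}\bigl(\Cov(\nabla\ell_{\theta^\star}) - \tfrac{1}{1+r}\Cov(\nabla\ell_{\theta^\star},s)\Cov(s)^{-1}\Cov(s,\nabla\ell_{\theta^\star})\bigr)H_{\theta^\star}^{-1}\bigr),\]
which is exactly $\Sigma_s^\REPPI$ as defined in Theorem~\ref{thm:main}.
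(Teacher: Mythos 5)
Your proposal is correct, and its overall skeleton matches the paper's: consistency via convexity of the shifted objective, $\hat M\xrightarrow{p}M$, an asymptotic linear expansion, replacement of $(\hat s,\hat M)$ by the oracle $(s,M)$, a CLT, and the algebraic collapse of the $M$-terms via $M\Cov(s)=\Cov(\nabla\ell_{\theta^\star},s)$. The genuine difference is how you kill the crux term $\sqrt{n}\bigl(\hat\E_n(\hat s-s)-\hat\E_N(\hat s-s)\bigr)$. The paper invokes Lemma 19.24 of van der Vaart, i.e.\ the asymptotic-equicontinuity statements $\hat\G_n[\hat s-s]\xrightarrow{p}0$ and $\hat\G_N[\hat s-s]\xrightarrow{p}0$, which implicitly requires the estimates $\hat s$ to be confined to a Donsker class; you instead condition on the training fold, note that the difference is conditionally mean-zero because $(X,\hat Y)$ has the same law in the labeled and unlabeled samples, and apply Chebyshev with conditional variance of order $(1+n/N)\,\E[\|\hat s-s\|^2]\to 0$. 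Your route is the cross-fitting argument of double machine learning: it imposes no complexity restriction on $\hat s$, at the price of needing the independence that Algorithm \ref{alg:crossfit}'s sample splitting provides (the theorem statement only posits a ``random function,'' so you are reading in the structure under which the theorem is actually invoked; the paper's Donsker route does not need that independence). A second, minor difference: you reuse the quadratic expansion of the convex objective from Theorem \ref{thm: efficient PPI}---legitimate here because $\hat g_\theta$ is exactly linear in $\theta$, so that term contributes no Taylor remainder---whereas the paper switches to a Z-estimator argument via Lemma 2.12 of van der Vaart. The only presentational loose end is your piece (i), where $\hat\E_n\hat s-\hat\E_N\hat s=O_P(n^{-1/2})$ is asserted ``by a CLT'' before the $\hat s-s$ contribution has been disposed of; piece (ii) supplies exactly the missing bound, so the two pieces just need to be ordered accordingly.
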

\begin{proof}
    For notational convenience, we write $\nabla \ell_\theta, \hat{s}, s$ for $\nabla \ell_\theta(X, Y), \hat{s}(X, \hat{Y}), s(X, \hat{Y})$. By the law of large number, the estimated weight matrix $\hat M$ converges to the population optimal matrix $M$:
    \begin{equation*}
        \hat M = \widehat \Cov(\nabla \ell_{\hat \theta_0}, \hat s) \widehat \Cov(\hat s)^{-1} \xrightarrow{p} \Cov(\nabla\ell_{\theta^\star}, s)\Cov(s)^{-1} = M.
    \end{equation*}
    The consistency of $\hat{\theta}_{\hat g}^\ppi$ then follows from the same argument as in the proof of Theorem \ref{thm: efficient PPI}.
    
    Now we turn to the proof of asymptotic normality. Since $\hat \theta_{\hat g}^{\ppi}$ is a consistent estimator of $\theta^\star$, by Assumption \ref{asm: smooth gradient} and Lemma 19.24 of \cite{van2000asymptotic}, we have 
    \begin{equation}
        \begin{aligned}
        \label{eqn: ell consistent}
            &\hat \G_n \left[\lb \nabla \ell_{\hat \theta_{\hat g}^{\ppi}} - \nabla \ell_{\theta^\star} \rb \right] \xrightarrow{p} 0,  
        \end{aligned}
    \end{equation}
    Moreover, since $\E\|\hat s - s\|^2 \xrightarrow{p} 0$,  Lemma 19.24 of \cite{van2000asymptotic} implies that 
\begin{equation}
        \begin{aligned}
        \label{eqn: g consistent}
            \hat \G_N \left[\lb \hat s - s \rb \right] \xrightarrow{p} 0, \,\, \hat \G_n \left[\lb \hat s - s \rb \right] \xrightarrow{p} 0.
        \end{aligned}
    \end{equation}
By the convexity of $\ell$, $\hat \theta_{\hat g}^{\ppi}$ solves the estimating equation 
    \begin{equation*}
        \Phi_{\hat g}^{\initial}(\theta) := \frac{1}{n}\sum_{i=1}^{n}\nabla\ell_{\theta}(X_i, Y_i) + \frac{1}{1+n/N}\hat M \lb \frac{1}{N}\sum_{i=n+1}^{n+N}\hat s(X_i, \hat Y_i)- \frac{1}{n}\sum_{i=1}^{n} \hat s(X_i, \hat Y_i)\rb = 0.
    \end{equation*}
    Define $\Psi(\theta) := \E[\nabla\ell_{\theta}(X, Y)] = \nabla L(\theta)$, then by the convexity of the loss function $\ell$, $\Psi(\theta^\star) = 0$. By Assumption \ref{asm: smooth gradient}, $\Psi(\theta)$ is differentiable, we can apply Lemma 2.12 in \cite{van2000asymptotic} and obtain that 
    $$
    \sqrt{n}H_{\theta^\star}(\hat \theta_{\hat g}^{\ppi}-\theta^\star) + \sqrt{n}\cdot o_P(\|\hat \theta_{\hat g}^{\ppi}-\theta^\star\|) = \sqrt{n}(\Psi(\hat \theta_{\hat g}^{\ppi}) - \Psi(\theta^\star)) + o_P(1).
    $$    
    Notice that $\Phi_{\hat g}^{\initial}(\hat \theta_{\hat g}^{\ppi})=0$, and therefore
    \begin{equation*}
        \begin{aligned}
            &\sqrt{n}(\Psi(\hat \theta_{\hat g}^{\ppi}) -  \Psi(\theta^\star)) \\=&  \sqrt{n}(\Psi(\hat \theta_{\hat g}^{\ppi}) - \Phi_{\hat g}^{\initial}(\hat \theta_{\hat g}^{\ppi}))\\
            =& -  \lb\G_n \nabla \ell_{\hat \theta_{\hat g}^{\ppi}} + \frac{1}{1+n/N}\hat M \lb\sqrt{\frac{n}{N}} \hat \G_N \hat s - \hat \G_n \hat s \rb\rb \\
            = & -  \lb\G_n \nabla \ell_{\theta^{\star}} + \left(\frac{1}{1+r}M+o_P(1)\right)\lb\sqrt{\frac{n}{N}} \hat \G_N  s - \hat \G_n  s + o_P(1) \rb\rb \\ 
            = & - \lb\G_n \nabla \ell_{\theta^{\star}} + \frac{1}{1+r} M\lb\sqrt{\frac{n}{N}} \hat \G_N  s - \hat \G_n  s \rb\rb +  o_P(1) 
        \end{aligned}
    \end{equation*}
    By the central limit theorem,  the right-hand side is asymptotically normal:
\begin{equation*}
    \begin{aligned}
        &\G_n \nabla \ell_{\theta^\star} + \frac{1}{1+r} M\lb\sqrt{\frac{n}{N}} \hat \G_N s - \hat \G_n s\rb \\
        =&\sqrt{\frac{n}{N}} \frac{1}{\sqrt{N}} \sum_{i=n+1}^{n+N} \frac{1}{1+r} M\lb s(X_i, \hat Y_i) -\E[s(X, \hat Y)]\rb \\
        &+\frac{1}{\sqrt{n}} \sum_{i=1}^n \lb\nabla \ell_{\theta^\star}(X_i, Y_i) - \frac{1}{1+r} M s(X_i, \hat Y_i) - \E[\nabla \ell_{\theta^\star}(X, Y) - \frac{1}{1+r} M s(X, \hat Y)]\rb \\ 
        \xrightarrow{d}& \mathcal{N}(0, \Lambda) = O_P(1),
    \end{aligned}
\end{equation*}
where the variance 
\begin{equation}
\label{eqn: variance power tuned}
    \begin{aligned}
        &\Lambda = \frac{r}{(1+r)^2} \Cov(Ms) + \Cov\left(\nabla \ell_{\theta^\star} - \frac{1}{1+r} Ms\right)\\
        = & \Cov(\nabla \ell_{\theta^\star}) - \frac{1}{1+r} \Cov(\nabla \ell_{\theta^\star}, s)\Cov(s)^{-1}\Cov(s, \nabla \ell_{\theta^\star}).
    \end{aligned}
\end{equation}
    Therefore, 
    $$
    \sqrt{n}\|\hat \theta_{\hat g}^{\ppi}-\theta^\star\| \leq \|H_{\theta^\star}^{-1}\|\sqrt{n}\|H_{\theta^\star}(\hat \theta_{\hat g}^{\ppi}-\theta^\star)\| = O_P(1) + o_P(\sqrt{n}\|\hat \theta_{\hat g}^{\ppi}-\theta^\star\|).
    $$
    This implies that $\sqrt{n}\|\hat \theta_{\hat g}^{\ppi}-\theta^\star\| = O_P(1)$, and we conclude that 
    \begin{equation}
    \label{eqn: linear expansion}
        \begin{aligned}
            &\sqrt{n}(\hat \theta_{\hat g}^{\ppi}-\theta^\star)  = \sqrt{n}H_{\theta^\star}^{-1}(\Psi(\hat \theta_{\hat g}^{\ppi}) - \Psi(\theta^\star)) + o_P(1) \\
            =&- H_{\theta^\star}^{-1}  \lb\G_n \nabla \ell_{\theta^{\star}} + \frac{1}{1+r}M\lb\sqrt{\frac{n}{N}} \hat \G_N  s - \hat \G_n  s\rb\rb+o_P(1) \xrightarrow{d} \N(0, \Sigma_s^\REPPI),
        \end{aligned}
    \end{equation}
    as desired.
\end{proof} 
Now we are ready to prove Theorem \ref{thm:main}.
\begin{proof}[of Theorem \ref{thm:main}]
   By Theorem \ref{thm: power tune RePPI}, each of the estimates $\hat \theta^1, \hat \theta^2, \hat \theta^3$ is consistent, and therefore
    $$
    \hat \theta^\cf = \frac{|\mathcal{D}_1|\hat \theta^1+ |\mathcal{D}_2|\hat \theta^2+ |\mathcal{D}_3|\hat \theta^3}{n} \xrightarrow{p}\theta^\star. 
    $$
    Moreover, applying the asymptotic linear expansion \eqref{eqn: linear expansion} in the proof of Theorem \ref{thm: power tune RePPI}, we know that 
    \begin{equation*}
    \begin{aligned}
        &\sqrt{|\mathcal{D}_k|}(\hat \theta^k -\theta^\star) = -  H_{\theta^\star}^{-1}  \lb\G_n^k \nabla \ell_{\theta^{\star}} + \frac{1}{1+r}M\lb\sqrt{\frac{|\mathcal{D}_k|}{N}} \hat \G_N  s - \hat \G_n^k  s\rb\rb+o_P(1) \\
        = & -H_{\theta^\star}^{-1}\sqrt{\frac{|\mathcal{D}_k|}{N}} \frac{1}{\sqrt{N}} \sum_{i=n+1}^{n+N} \frac{1}{1+r} M\lb s(X_i, \hat Y_i) -\E[s(X, \hat Y)]\rb \\
        &-H_{\theta^\star}^{-1}\frac{1}{\sqrt{|\mathcal{D}_k|}} \sum_{i\in \mathcal{D}_k } \lb\nabla \ell_{\theta^\star}(X_i, Y_i) - \frac{1}{1+r}  M s(X_i, \hat Y_i) - \E[\nabla \ell_{\theta^\star}(X, Y) - \frac{1}{1+r} M s(X, \hat Y)]\rb \\
        &+ o_P(1),\\
    \end{aligned}
    \end{equation*}
    where the notation $\G_n^k, \hat \G_n^k$ is the analogy of $\G_n, \hat \G_n$ defined on the respective fold. Therefore, combining all three linear expansions together, we obtain that 
    \begin{equation*}
    \begin{aligned}
        &\sqrt{n}(\hat \theta^\cf -\theta^\star)  = \sum_{k = 1, 2, 3} \sqrt{\frac{|\mathcal{D}_k|}{n}} \sqrt{|\mathcal{D}_k|} (\hat \theta^k -\theta^\star)\\
        = & -H_{\theta^\star}^{-1}\sqrt{\frac{n}{N}} \frac{1}{\sqrt{N}} \sum_{i=n+1}^{n+N} \frac{1}{1+r} M\lb s(X_i, \hat Y_i) -\E[s(X, \hat Y)]\rb \\
        &-H_{\theta^\star}^{-1}\frac{1}{\sqrt{n}} \sum_{i=1}^n \lb\nabla \ell_{\theta^\star}(X_i, Y_i) - \frac{1}{1+r} M s(X_i, \hat Y_i) - \E[\nabla \ell_{\theta^\star}(X, Y) - \frac{1}{1+r} M s(X, \hat Y)]\rb \\
        &+ o_P(1)\\ 
        \xrightarrow{d}&  \N\lb 0, H_{\theta^\star}^{-1}\lb \Cov(\nabla \ell_{\theta^\star}) - \frac{1}{1+r} \Cov(\nabla \ell_{\theta^\star}, s)\Cov(s)^{-1}\Cov(s, \nabla \ell_{\theta^\star})\rb H_{\theta^\star}^{-1}\rb,
    \end{aligned}
    \end{equation*}
    where the last equation is obtained from \eqref{eqn: variance power tuned}, and this concludes the proof. 
\end{proof}

% \appendixfour
\section{Proofs from Section \ref{sec: linear model}}
\label{sec: proof lm}
This section provides a detailed computation of the asymptotic variance for the three examples from Section \ref{sec: linear model}. The asymptotic variances of the XY-only and PPI estimators are given by $\Sigma_g^{\ppi}$ in Theorem~\ref{thm: efficient PPI} with $g = 0$ and $g = \nabla \ell_\theta$, respectively. The asymptotic variance of the PPI++ estimator is derived in \cite{angelopoulos2023ppi++} (Theorem 1 and Proposition 2) and we state it below for completeness. Here, we focus on the PPI++ estimator with the optimal power tuning. We summarize these results in Lemma \ref{lem: PPI++ var}. 

\begin{lemma}
\label{lem: PPI++ var}
In the same setting as in Theorem \ref{thm: efficient PPI}, 
\begin{align*}
\tr(\Sigma^\cls) =& \tr(H_{\theta^\star}^{-1}\Cov(\nabla\ell_{\theta^\star}(X, Y)) H_{\theta^\star}^{-1} );\\
\tr(\Sigma^\ppi) =&\tr(H_{\theta^\star}^{-1}(r\Cov(\nabla\ell_{\theta^\star}(X, \hat Y))+\Cov(\nabla\ell_{\theta^\star}(X, Y) - \nabla\ell_{\theta^\star}(X, \hat Y)))H_{\theta^\star}^{-1});\\
\tr(\Sigma^\ppiplus) =& \tr(H_{\theta^\star}^{-1}\Cov(\nabla\ell_{\theta^\star}(X, Y)) H_{\theta^\star}^{-1} )\\
&-\frac{1}{1+r} \frac{\mathrm{Tr} \left( H_{\theta^\star}^{-1}  \mathrm{Cov}(\nabla \ell_{\theta^\star}(X, Y), \nabla \ell_{\theta^\star}(X, \hat{Y})) 
 H_{\theta^\star}^{-1} \right)^2}
{ \mathrm{Tr}\left(H_{\theta^\star}^{-1} \, \mathrm{Cov}(\nabla \ell_{\theta^\star}(X, \hat{Y})) \, H_{\theta^\star}^{-1}\right)}.
\end{align*}
\end{lemma}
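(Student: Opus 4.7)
The plan is to derive all three trace formulas as direct specializations of Theorem \ref{thm: efficient PPI}, applied to the particular choices of imputed loss $g_\theta$ that define each of the three estimators in Section~\ref{sec: surrogate and PPI}.

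For the XY-only estimator, I take $g_\theta\equiv 0$, so $\nabla g_{\theta^\star}=0$. Substituting into the variance formula in Theorem \ref{thm: efficient PPI} collapses the first covariance and removes the $\nabla g_{\theta^\star}$ terms from the second, giving $\Sigma^\cls = H_{\theta^\star}^{-1}\Cov(\nabla\ell_{\theta^\star}(X,Y))H_{\theta^\star}^{-1}$; taking the trace yields the stated formula. For standard PPI, I take $g_\theta=\ell_\theta$, so $\nabla g_{\theta^\star}=\nabla\ell_{\theta^\star}(X,\hat Y)$, and plug in directly. No simplification is needed—the formula is exactly the expression in Theorem \ref{thm: efficient PPI}.

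The only substantive step is PPI++. Here the class $\mathcal{M}$ is the set of scaled identities, so $\nabla g_\theta=\lambda\nabla\ell_\theta(X,\hat Y)$ for some scalar $\lambda$, and $\lambda$ is chosen so as to minimize the asymptotic variance of the resulting estimator. Using the shorthand $C_{YY}=\Cov(\nabla\ell_{\theta^\star}(X,Y))$, $C_{\hat Y\hat Y}=\Cov(\nabla\ell_{\theta^\star}(X,\hat Y))$, and $C_{Y\hat Y}=\Cov(\nabla\ell_{\theta^\star}(X,Y),\nabla\ell_{\theta^\star}(X,\hat Y))$, Theorem~\ref{thm: efficient PPI} gives
\begin{equation*}
\Sigma_\lambda = H_{\theta^\star}^{-1}\bigl(C_{YY}-2\lambda C_{Y\hat Y}+(1+r)\lambda^2 C_{\hat Y\hat Y}\bigr)H_{\theta^\star}^{-1},
\end{equation*}
which is quadratic in $\lambda$ after taking the trace. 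Setting the derivative of $\tr(\Sigma_\lambda)$ in $\lambda$ to zero yields the optimizer
\begin{equation*}
\lambda^\star = \frac{\tr\bigl(H_{\theta^\star}^{-1} C_{Y\hat Y} H_{\theta^\star}^{-1}\bigr)}{(1+r)\,\tr\bigl(H_{\theta^\star}^{-1} C_{\hat Y\hat Y} H_{\theta^\star}^{-1}\bigr)},
\end{equation*}
and substituting $\lambda^\star$ back in produces the stated formula for $\tr(\Sigma^\ppiplus)$.

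The computation itself is essentially a one-variable least-squares optimization and poses no real difficulty. The one point that requires care is justifying that the data-driven $\hat\lambda$ used in PPI++ yields the same asymptotic trace as the oracle $\lambda^\star$. This follows from a standard Slutsky-type argument: $\hat\lambda\xrightarrow{p}\lambda^\star$ by the law of large numbers applied to the sample analogs of $C_{Y\hat Y}$ and $C_{\hat Y\hat Y}$, and then the asymptotic normality in Theorem \ref{thm: efficient PPI} can be extended to random $\hat\lambda$ via the usual delta-method/continuous-mapping reasoning already carried out in \cite{angelopoulos2023ppi++}. I would simply cite that argument rather than redo it.
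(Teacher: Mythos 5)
Your proposal is correct and follows essentially the same route as the paper, which simply specializes the variance formula of Theorem \ref{thm: efficient PPI} to $g_\theta=0$ and $g_\theta=\ell_\theta$ and cites \cite{angelopoulos2023ppi++} for the power-tuned case; your explicit one-variable quadratic minimization over $\lambda$ reproduces exactly the cited PPI++ formula (note only that at the matrix level the cross term should be $-\lambda(C_{Y\hat Y}+C_{Y\hat Y}^\top)$, which collapses to $-2\lambda\tr(H_{\theta^\star}^{-1}C_{Y\hat Y}H_{\theta^\star}^{-1})$ after taking the trace, so your conclusion is unaffected).
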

Moreover, we provide the following lemma useful in the computation of closed-form expressions.
\begin{lemma}
\label{lem: gaussian moment}
    If $X\sim \N(0, \Sigma_X)\in \R^d$, and $\theta\in \R^d$ is a fixed vector, then $$\tr(\Sigma_X^{-1}\Cov(XX^\top \theta)\Sigma_X^{-1}) =\|\theta\|^2 + \tr(\Sigma_X^{-1})\cdot \theta^\top \Sigma_X \theta.$$
\end{lemma}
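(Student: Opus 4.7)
The plan is to first obtain a clean closed form for $\Cov(XX^\top \theta)$ via Isserlis' theorem (Wick's formula for Gaussian moments), and then compute the required trace by the cyclic property.

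First, I would note that $\E[XX^\top \theta] = \Sigma_X \theta$, so
$$
\Cov(XX^\top \theta)_{ij} = \E[X_i X_j (X^\top \theta)^2] - (\Sigma_X \theta)_i (\Sigma_X \theta)_j.
$$
Since $X$ is mean-zero Gaussian, Isserlis' theorem applied to the fourth moment $\E[X_i X_j (X^\top \theta)^2]$ gives the three-pairing sum
$$
\E[X_i X_j (X^\top \theta)^2] = (\Sigma_X)_{ij}\,(\theta^\top \Sigma_X \theta) + 2 (\Sigma_X \theta)_i (\Sigma_X \theta)_j,
$$
so subtracting the mean contribution yields the compact identity
$$
\Cov(XX^\top \theta) = (\theta^\top \Sigma_X \theta)\, \Sigma_X + (\Sigma_X \theta)(\Sigma_X \theta)^\top.
$$

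Plugging this into the target trace and using $\tr(AB) = \tr(BA)$, the first summand collapses to $(\theta^\top \Sigma_X \theta)\tr(\Sigma_X^{-1} \Sigma_X \Sigma_X^{-1}) = (\theta^\top \Sigma_X \theta)\tr(\Sigma_X^{-1})$, while the second summand becomes
$$
\tr\!\bigl(\Sigma_X^{-1}(\Sigma_X \theta)(\Sigma_X \theta)^\top \Sigma_X^{-1}\bigr) = \tr(\theta \theta^\top) = \|\theta\|^2.
$$
Adding these two pieces produces the claimed equality.

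The only nontrivial step is the Isserlis calculation, which is routine but easy to mis-pair; everything after that is algebraic. An alternative I would keep in reserve is to whiten via $X = \Sigma_X^{1/2} Z$ with $Z \sim \N(0, I_d)$, reducing the problem to computing $\Cov(ZZ^\top u)$ for the fixed vector $u = \Sigma_X^{1/2}\theta$ and then conjugating back by $\Sigma_X^{1/2}$; this separates the fourth-moment calculation from the metric $\Sigma_X$ but ultimately invokes the same Gaussian moment identity.
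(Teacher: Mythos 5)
Your proof is correct, and it takes a genuinely different route from the paper's. The paper diagonalizes $\Sigma_X = QDQ^\top$, writes $X = QZ$ with independent coordinates $Z \sim \N(0,D)$, and computes the matrix $\E[ZZ^\top D^{-2} ZZ^\top] = 2I + \tr(D^{-1})D$ entry by entry from the moments $\E[z_i^4] = 3\sigma_i^4$ and $\E[z_i^2 z_k^2] = \sigma_i^2\sigma_k^2$; it then only ever extracts the trace. You instead apply Isserlis' theorem directly in the original coordinates to obtain the closed form
$$
\Cov(XX^\top\theta) = (\theta^\top\Sigma_X\theta)\,\Sigma_X + (\Sigma_X\theta)(\Sigma_X\theta)^\top,
$$
and your pairing count is right: the sum $\sum_{k,l}\theta_k\theta_l\bigl((\Sigma_X)_{ij}(\Sigma_X)_{kl} + (\Sigma_X)_{ik}(\Sigma_X)_{jl} + (\Sigma_X)_{il}(\Sigma_X)_{jk}\bigr)$ gives $(\Sigma_X)_{ij}\theta^\top\Sigma_X\theta + 2(\Sigma_X\theta)_i(\Sigma_X\theta)_j$, and subtracting the outer product of the mean removes exactly one copy of the rank-one term. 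Your approach buys a stronger intermediate result --- an explicit expression for the full covariance matrix, not just its conjugated trace --- and the final step is then immediate from cyclicity; the paper's approach buys elementarity, needing only fourth moments of independent scalar Gaussians rather than the general Wick formula, at the cost of more index bookkeeping and a result that is only about the trace. Both yield the same answer, and your reserve plan (whitening via $\Sigma_X^{1/2}$) is essentially the paper's argument in disguise.
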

\begin{proof}
    We write the eigendecomposition $\Sigma_X = QDQ^\top$, where $Q$ is an orthonormal matrix and $D = \mathrm{diag}(\sigma_1^2, \cdots, \sigma_d^2)$ is a diagonal matrix. Then we can write $X = QZ$, where $Z\sim \N(0, D)$ is a normal vector with independent coordinates. Notice that 
    \begin{align*}
        \Cov(XX^\top \theta)& = \E[XX^\top \theta\theta^\top XX^\top]  - \E[XX^\top]\theta\theta^\top \E[XX^\top] \\
        &= \E[XX^\top \theta\theta^\top XX^\top] - \Sigma_X \theta\theta^\top\Sigma_X,
    \end{align*}
    and 
    \begin{align*}
        \tr(\Sigma_X^{-1}\E[XX^\top \theta\theta^\top XX^\top]\Sigma_X^{-1}) = &\tr(\theta\theta^\top\E[XX^\top \Sigma_X^{-2}XX^\top]) \\
        =& \tr(\theta\theta^\top Q\E[ZZ^\top D^{-2} ZZ^\top] Q^\top).
    \end{align*}
    Let $S = \E[ZZ^\top D^{-2} ZZ^\top] = \E[(Z^\top D^{-2} Z) ZZ^\top]$, then for $i\neq j$,
    $$
    S_{i, j} = \E\left[\lb\sum_{k=1}^d \frac{z_k^2}{\sigma_k^4}\rb z_iz_j\right] = 0,
    $$
    and
    $$
    S_{i, i} = \E\left[\lb\sum_{k=1}^d \frac{z_k^2}{\sigma_k^4}\rb z_i^2\right] = \frac{\E[z_i^4]}{\sigma_i^4} + \sum_{k\neq i}\frac{\E[z_k^2 z_i^2]}{\sigma_k^4} = 3 + \sum_{k\neq i}\frac{\sigma_i^2}{\sigma_k^2} = 2 + \tr(D^{-1})\sigma_i^2.
    $$
    Putting everything together, we have $S = 2I + \tr(D^{-1})D$, and 
    \begin{align*}
&\tr(\Sigma_X^{-1}\E[XX^\top \theta\theta^\top XX^\top]\Sigma_X^{-1}) = \tr(\theta\theta^\top Q(2I + \tr(D^{-1})D) Q^\top)\\
& = 2\tr(\theta \theta^\top) + \tr(D^{-1})\tr(\theta\theta^\top \Sigma_X) = 2\|\theta\|^2 + \tr(\Sigma_X^{-1})\cdot \theta^\top \Sigma_X \theta.
    \end{align*}
    Thus we can conclude that 
    $$
    \tr(\Sigma_X^{-1}\Cov(XX^\top \theta)\Sigma_X^{-1}) = \|\theta\|^2 + \tr(\Sigma_X^{-1})\cdot \theta^\top \Sigma_X \theta.     $$
\end{proof}
\subsection{Modality Mismatch}
\label{sec: proof independent}
\begin{proof}[of Proposition \ref{prop: lm independent}]
In this setting, we have
$$
\hat{Y} = W^\top \gamma, \quad \theta^\star = \theta, \quad H_{\theta^\star} = \E[XX^\top] = \Sigma_X.
$$
Then,
\begin{align*}      &\tr(H_{\theta^\star}^{-1}\Cov(\nabla\ell_{\theta^\star}(X, Y))H_{\theta^\star}^{-1}) \\
= &\tr(\Sigma_X^{-1}\Cov(X(X^\top \theta - Y))\Sigma_X^{-1})\\ =& \tr(\Sigma_X^{-1}\Cov(X(W^\top \gamma + \epsilon))\Sigma_X^{-1}) \\=& \tr(\Sigma_X^{-1}\Cov(XW^\top \gamma)\Sigma_X^{-1}) + \tr(\Sigma_X^{-1}\Cov(X\epsilon)\Sigma_X^{-1}) \\=& \tr(\Sigma_X^{-1})(\sigma^2 + \gamma^\top \Sigma_W \gamma);\\
\end{align*}
\begin{align*}
    &\tr(H_{\theta^\star}^{-1}\Cov(\nabla\ell_{\theta^\star}(X, \hat Y))H_{\theta^\star}^{-1})\\
    =& \tr(\Sigma_X^{-1}\Cov(X(X^\top \theta - W^\top \gamma))\Sigma_X^{-1}) \\=& \tr(\Sigma_X^{-1}\Cov(XX^\top \theta)\Sigma_X^{-1}) + \tr(\Sigma_X^{-1}\Cov(XW^\top \gamma)\Sigma_X^{-1})\\
        =& \|\theta\|^2+\tr(\Sigma_X^{-1})(\theta^\top \Sigma_X\theta+\gamma^\top \Sigma_W\gamma);\\
\end{align*}
        \begin{align*} 
        &\tr(H_{\theta^\star}^{-1}\Cov(\nabla\ell_{\theta^\star}(X, Y), \nabla\ell_{\theta^\star}(X, \hat Y))H_{\theta^\star}^{-1})\\
        =& \tr(\Sigma_X^{-1}\Cov(X(X^\top\theta - Y), X(X^\top\theta - \hat Y))\Sigma_X^{-1}) \\= & \tr(\Sigma_X^{-1}\Cov(-X(W^\top\gamma+\epsilon), X(X^\top\theta - W^\top\gamma))\Sigma_X^{-1})\\
        =& \tr(\Sigma_X^{-1}\Cov(XW^\top\gamma)\Sigma_X^{-1}) = \tr(\Sigma_X^{-1})\gamma^\top \Sigma_W\gamma,
\end{align*}
where the second expression is derived using Lemma \ref{lem: gaussian moment}. Using Lemma \ref{lem: PPI++ var}, we can now derive the asymptotic variance of the XY-only, PPI, and PPI++ estimators:

\begin{align*}
  \tr(\Sigma^\cls) = &\tr(H_{\theta^\star}^{-1}\Cov(\nabla\ell_{\theta^\star}(X, Y))H_{\theta^\star}^{-1}) = (\sigma^2 + \gamma^\top \Sigma_W \gamma)\tr(\Sigma_X^{-1});\\
      \tr(\Sigma^\ppi) =& \tr(H_{\theta^\star}^{-1}(r\Cov(\nabla\ell_{\theta^\star}(X, \hat Y))+\Cov(\nabla\ell_{\theta^\star}(X, Y) - \nabla\ell_{\theta^\star}(X, \hat Y)))H_{\theta^\star}^{-1})\\
    =& \tr\bigg(H_{\theta^\star}^{-1}\Big((1+r)\Cov(\nabla\ell_{\theta^\star}(X, \hat Y))\\
    & \quad +\Cov(\nabla\ell_{\theta^\star}(X, Y))-2\Cov(\nabla\ell_{\theta^\star}(X, Y), \nabla\ell_{\theta^\star}(X, \hat Y))\Big)H_{\theta^\star}^{-1}\bigg)\\
    = & (1+r) \|\theta\|^2 + (\sigma^2+ (1+r) \theta^\top \Sigma_X\theta + r\gamma^\top \Sigma_W\gamma) \tr(\Sigma_X^{-1});\\
      \tr(\Sigma^\ppiplus) =& \tr(H_{\theta^\star}^{-1}\Cov(\nabla\ell_{\theta^\star}(X, Y)) H_{\theta^\star}^{-1} ) \\&- \frac{1}{1+r}\frac{\mathrm{Tr}\left(H_{\theta^\star}^{-1}\mathrm{Cov}(\nabla \ell_{\theta^\star}(X, Y), \nabla \ell_{\theta^\star}(X, \hat{Y}))  H_{\theta^\star}^{-1} \right)^2}
{\mathrm{Tr}\left(H_{\theta^\star}^{-1} \, \mathrm{Cov}(\nabla \ell_{\theta^\star}(X, \hat{Y})) \, H_{\theta^\star}^{-1}\right)} \\
=&(\sigma^2 + \gamma^\top \Sigma_W \gamma)\tr(\Sigma_X^{-1}) - \frac{1}{1+r} \frac{(\tr(\Sigma_X^{-1})\cdot \gamma^\top \Sigma_W\gamma )^2}{\|\theta\|^2+\tr(\Sigma_X^{-1})(\theta^\top \Sigma_X\theta+\gamma^\top \Sigma_W\gamma)}\\
= &\left(\sigma^2 + \left(\frac{r}{1+r} + \frac{1}{1+r}\frac{1}{1+\frac{\tr(\Sigma_X^{-1})\gamma^\top \Sigma_W\gamma}{\|\theta\|^2 +\tr(\Sigma_X^{-1})\theta^\top \Sigma_X\theta }}\right)\gamma^\top \Sigma_W \gamma\right)\tr(\Sigma_X^{-1}).
\end{align*}
For our recalibrated PPI estimator, using Theorem  \ref{thm: efficient PPI}, we can obtain that 
\begin{equation*}
    \begin{aligned}
        \Sigma^\cls - \Sigma^\REPPI  =& \frac{1}{1+r} \Cov(\E[H_{\theta^\star}^{-1}\nabla\ell_{\theta^\star}(X, Y)|X, \hat Y]) \\
        =& \frac{1}{1+r} \Cov(\Sigma_X^{-1}\E[X(X^\top\theta - Y)|X, \hat Y]) \\
        =& \frac{1}{1+r} \Cov(\Sigma_X^{-1}(X(X^\top\theta - X^\top\theta - W^\top\gamma))) = \frac{1}{1+r} \gamma^\top \Sigma_W \gamma\cdot \Sigma_X^{-1},
    \end{aligned}
\end{equation*}
which implies 
$$
\tr(\Sigma^\REPPI) =  \left(\sigma^2 + \frac{r}{1+r} \gamma^\top \Sigma_W\gamma\right )\tr(\Sigma_X^{-1}),
$$
concluding the proof.  
\end{proof}

\subsection{Distribution Shift}
\begin{proof}[of Proposition \ref{prop: lm shift}]
In this setting, we have
$$
\hat{Y} = X^\top\td{\theta} + W^\top \td{\gamma}, \quad \theta^\star = \theta, \quad H_{\theta^\star} = \E[XX^\top] = \Sigma_X.
$$
Following the same calculation as in the proof of Proposition \ref{prop: lm independent}, we have 
\begin{align*}        \tr(H_{\theta^\star}^{-1}\Cov(\nabla\ell_{\theta^\star}(X, Y))H_{\theta^\star}^{-1}) =& (\sigma^2 + \gamma^\top \Sigma_W \gamma)\tr(\Sigma_X^{-1});\\
\end{align*}
        \begin{align*} 
&\tr(H_{\theta^\star}^{-1}\Cov(\nabla\ell_{\theta^\star}(X, \hat Y))H_{\theta^\star}^{-1})\\
=& \tr(\Sigma_X^{-1}\Cov(X(X^\top \theta -X^\top \tilde\theta -  W^\top \td{\gamma}))\Sigma_X^{-1}) \\=& \tr(\Sigma_X^{-1}\Cov(XX^\top (\theta- \tilde \theta))\Sigma_X^{-1}) + \tr(\Sigma_X^{-1}\Cov(XW^\top \td{\gamma})\Sigma_X^{-1})\\
=& \|\theta-\tilde\theta\|^2+\tr(\Sigma_X^{-1})((\theta - \tilde\theta)^\top \Sigma_X(\theta - \tilde\theta)+\td{\gamma}^\top \Sigma_W\td{\gamma});\\    
\end{align*}
        \begin{align*} 
&\tr(H_{\theta^\star}^{-1}\Cov(\nabla\ell_{\theta^\star}(X, Y), \nabla\ell_{\theta^\star}(X, \hat Y))H_{\theta^\star}^{-1})\\
=& \tr(\Sigma_X^{-1}\Cov(X(X^\top\theta - Y), X(X^\top\theta - \hat Y))\Sigma_X^{-1}) \\= & \tr(\Sigma_X^{-1}\Cov(-X(W^\top\gamma+\epsilon), X(X^\top\theta -X^\top\tilde\theta- W^\top\td{\gamma}))\Sigma_X^{-1})\\
=& \tr(\Sigma_X^{-1}\Cov(XW^\top\gamma, XW^\top\td{\gamma})\Sigma_X^{-1})\\
=& \tr(\Sigma_X^{-1})\gamma^\top \Sigma_W\td{\gamma}.
\end{align*}
Similarly to Proposition \ref{prop: lm independent},
\begin{align*}
\tr(\Sigma^\cls) =& \tr(H_{\theta^\star}^{-1}\Cov(\nabla\ell_{\theta^\star}(X, Y))H_{\theta^\star}^{-1}) = (\sigma^2 + \gamma^\top \Sigma_W \gamma)\tr(\Sigma_X^{-1});\\
\end{align*}
        \begin{align*}
&\tr(\Sigma^\ppi) = \tr(H_{\theta^\star}^{-1}(r\Cov(\nabla\ell_{\theta^\star}(X, \hat Y))+\Cov(\nabla\ell_{\theta^\star}(X, Y) - \nabla\ell_{\theta^\star}(X, \hat Y)))H_{\theta^\star}^{-1})\\
=& \tr\bigg(H_{\theta^\star}^{-1}\Big((1+r)\Cov(\nabla\ell_{\theta^\star}(X, \hat Y))\\
& \quad +\Cov(\nabla\ell_{\theta^\star}(X, Y))-2\Cov(\nabla\ell_{\theta^\star}(X, Y), \nabla\ell_{\theta^\star}(X, \hat Y))\Big)H_{\theta^\star}^{-1}\bigg)\\
= & (1+r) \|\theta-\tilde \theta\|^2 \\
& \quad + \lb\sigma^2+ (1+r) (\theta -\tilde\theta)^\top \Sigma_X(\theta -\tilde\theta) + r\tilde \gamma^\top \Sigma_W \tilde\gamma + (\gamma- \td{\gamma})^\top \Sigma_W(\gamma - \td{\gamma})\rb \tr(\Sigma_X^{-1});\\
\end{align*}
        \begin{align*}
&\tr(\Sigma^\ppiplus) \\\
=& \tr(H_{\theta^\star}^{-1}\Cov(\nabla\ell_{\theta^\star}(X, Y)) H_{\theta^\star}^{-1} )- \frac{1}{1+r}\frac{\mathrm{Tr}\left(H_{\theta^\star}^{-1}\mathrm{Cov}(\nabla \ell_{\theta^\star}(X, Y), \nabla \ell_{\theta^\star}(X, \hat{Y}))  H_{\theta^\star}^{-1} \right)^2}
{\mathrm{Tr}\left(H_{\theta^\star}^{-1} \, \mathrm{Cov}(\nabla \ell_{\theta^\star}(X, \hat{Y})) \, H_{\theta^\star}^{-1}\right)} \\
= &\left(\sigma^2 + \gamma^\top \Sigma_W \gamma - \frac{1}{1+r}\frac{(\gamma^\top \Sigma_W \td{\gamma})^2}{\td{\gamma}^\top \Sigma_W \td{\gamma} + (\theta -\tilde\theta)^\top \Sigma_X(\theta -\tilde\theta) + \|\theta -\tilde\theta\|^2 / \tr(\Sigma_X^{-1})}\right)\tr(\Sigma_X^{-1}).
\end{align*}
For our recalibrated PPI estimator, using Theorem  \ref{thm: efficient PPI}, again we can obtain that 
\begin{align*}
    \Sigma^\cls - \Sigma^\REPPI  =& \frac{1}{1+r} \Cov(\E[H_{\theta^\star}^{-1}\nabla\ell_{\theta^\star}(X, Y)|X, \hat Y ])\\
        = &\frac{1}{1+r} \Cov(\Sigma_X^{-1}\E[X(X^\top\theta - Y)|X, X^\top\tilde\theta+W^\top\td{\gamma}]) \\
        = &\frac{1}{1+r} \Cov(\Sigma_X^{-1}\E[X(W^\top\gamma + \eps)|X, W^\top\td{\gamma}]) \\
        = &\frac{1}{1+r} \Cov(\Sigma_X^{-1}X\E[W^\top\gamma| W^\top\td{\gamma}]) \\        
 = &\frac{1}{1+r} \frac{(\gamma^\top \Sigma_W \td{\gamma})^2}{\td{\gamma}^\top \Sigma_W \td{\gamma}}\cdot \Sigma_X^{-1},
\end{align*}
where the last line uses the fact that $(W^\top \gamma, W^\top \td{\gamma})$ are bivariate Gaussian.
This implies 
$$
\tr(\Sigma^\REPPI) =  \left(\sigma^2 + \gamma^\top \Sigma_W\gamma - \frac{1}{1+r} \frac{(\gamma^\top \Sigma_W \td{\gamma})^2}{\td{\gamma}^\top \Sigma_W \td{\gamma}}\right )\tr(\Sigma_X^{-1}).
$$
\end{proof}

\subsection{Discrete Predictions}
\begin{proof}[of Proposition \ref{prop: binary}]
    Using Lemma \ref{lem: PPI++ var} and Theorem  \ref{thm: efficient PPI}, we know that for mean estimation, 
\begin{align*}
    &\Sigma^\cls = \Var(Y), \\
    &\Sigma^\ppi = \Var(Y-\hat Y) + r\Var(\hat Y), \\
    &\Sigma^\ppiplus = \Var(Y) - \frac{1}{1+r}\frac{\Cov(Y, \hat Y)^2}{\Var(\hat{Y})}, \\
    &\Sigma^\REPPI = \Var(Y) - \frac{1}{1+r}\Var(\E[Y|\hat Y]). 
\end{align*}
Under the Gaussian mixture model,
\begin{equation*}
    \begin{aligned}
        &\Var(Y) = \sigma^2 + \Var(\mu_Z) = \sigma^2 + \frac{1}{3}\sum_{i=1}^3 \mu_i^2 - \lb\frac{1}{3}\sum_{i=1}^3 \mu_i\rb^2 \\
        &= \sigma^2 + \frac{(\mu_1 - \mu_2)^2 + (\mu_2 - \mu_3)^2 + (\mu_3 - \mu_1)^2}{9}, \\
        &\Var(\hat Y) = \Var(Z) = \frac{2}{3}, \\ 
        & \Cov(Y, \hat Y) = \frac{\mu_1+ 2\mu_2+3\mu_3}{3} - 2\frac{\mu_1+\mu_2+\mu_3}{3} = \frac{\mu_3-\mu_1}{3},\\
        & \Var(\E[Y|\hat Y]) = \Var(\mu_Z) = \frac{(\mu_1 - \mu_2)^2 + (\mu_2 - \mu_3)^2 + (\mu_3 - \mu_1)^2}{9}.
    \end{aligned}
\end{equation*}
Therefore, the asymptotic variances are
\begin{align*}
    &\Sigma^\cls = \sigma^2 + \frac{(\mu_1 - \mu_2)^2 + (\mu_2 - \mu_3)^2 + (\mu_3 - \mu_1)^2}{9}, \\
    &\Sigma^\ppi = \Sigma^\cls + \frac{2(1+r)}{3} - \frac{2(\mu_3-\mu_1)}{3}, \\
    &\Sigma^\ppiplus = \Sigma^\cls - \frac{1}{1+r}\frac{(\mu_3-\mu_1)^2}{6}, \\
    &\Sigma^\REPPI = \Sigma^\cls - \frac{1}{1+r}\frac{(\mu_1 - \mu_2)^2 + (\mu_2 - \mu_3)^2 + (\mu_3 - \mu_1)^2}{9}. 
\end{align*}
This in turn implies that $\Sigma^\ppiplus - \Sigma^\REPPI =  \frac{1}{1+r}\frac{(2\mu_2 - \mu_3- \mu_1)^2}{18} \geq 0$.
\end{proof}

% \appendixfive
\section{Simulation studies}
\label{sec:additional-simulations}
\subsection{Simulation in Stylized models}
\subsubsection{Modality Mismatch}
\label{sec:sim_modality}
\begin{figure}[t]
    \centering
    \includegraphics[width=0.49\linewidth]{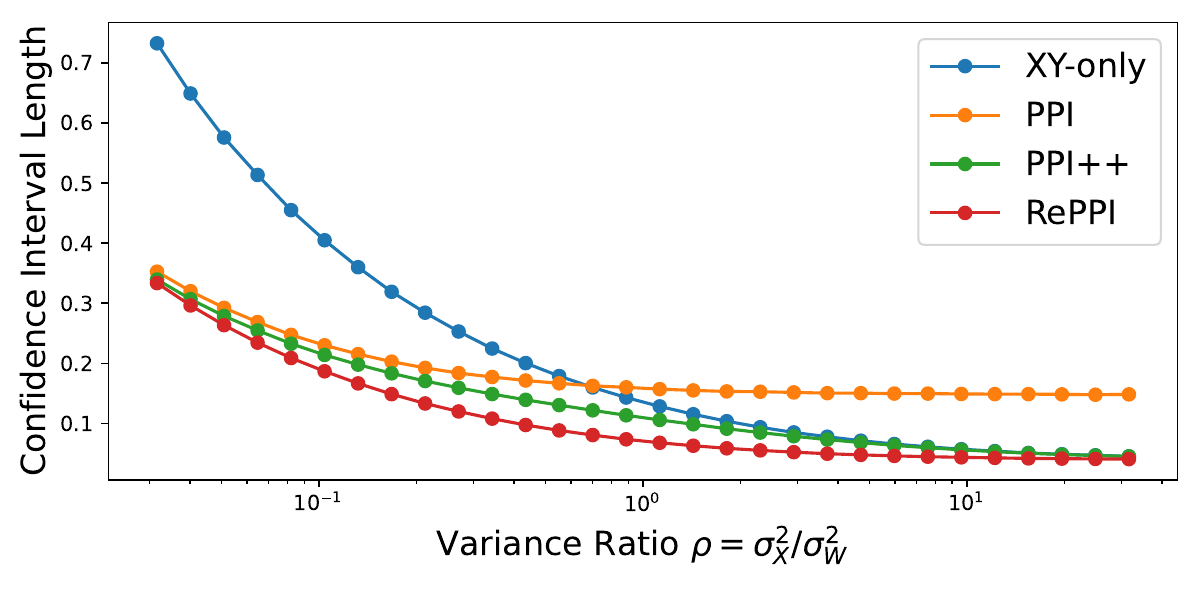}
    \includegraphics[width=0.49\linewidth]{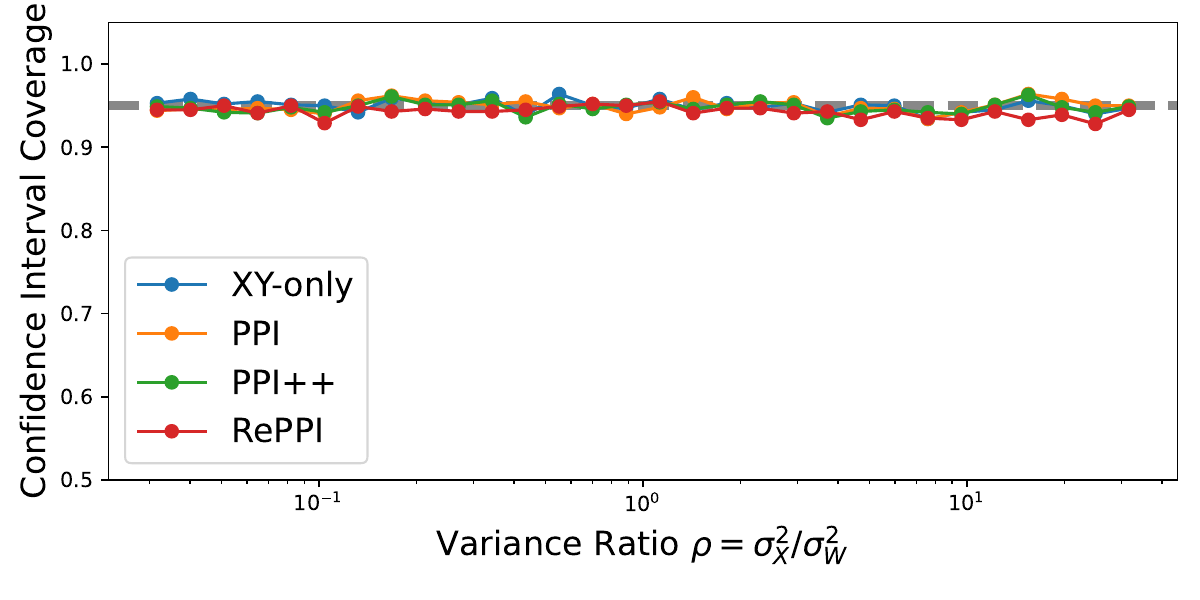}
    \caption{Average length of confidence intervals (left) and coverage (right) in modality mismatch simulation. The horizontal axis represents the variance ratio $\rho = \sigma_X^2 / \sigma_W^2$.}
    \label{fig:lm}
\end{figure}
We run a simple simulation to compare the asymptotic variances numerically. We choose $d=5$, $n=1000$, $N=9000$, $\sigma^2 = 1$, $\Sigma_X = \sigma_X^2 I_d, \Sigma_W = \sigma_W^2 I_d$ and $\sigma_W^2+\sigma_X^2 = 10$. In each trial, we generate $\theta$ and $\gamma$ randomly from the unit sphere $\mathbb{S}^{d-1}$. We vary the variance ratio $\rho = \sigma_X^2 / \sigma_W^2 \in [10^{-1.5}, 10^{1.5}]$. For each value of $\rho$ and each of $1000$ trials, we
%generate $100$ samples and, on each sample, 
compute $95\%$ confidence intervals for each method. We report the average interval lengths and coverage rates in Figure \ref{fig:lm}.

From Figure \ref{fig:lm} we find that, as expected, all methods achieve the desired coverage approximately and RePPI uniformly improves upon PPI and PPI++. As discussed above, the gain of RePPI over PPI++ is inverse U-shaped and maximized in the middle, when $\rho=\sigma_X^2/\sigma_W^2$ is close to 1, in which case both $X$ and $W$ are informative in explaining $Y$. 
\subsubsection{Distribution Shift}
\label{sec:sim_shift}

Again, we compare the estimators numerically using a stylized simulation study in Figure \ref{fig:lm shift}. We choose $d=5$, $n=1000$, $N=9000$, $\sigma^2 = 1$, $\Sigma_X = 10 I_d, \Sigma_W = 10 I_d$ and fixed the shift on $\gamma$ to be $\|\gamma-\tilde \gamma\| = 1$. In each trial, we generate $\theta$, $\gamma$, and the shift direction $\frac{\theta-\tilde\theta}{\|\theta-\tilde\theta\|}, \gamma-\tilde\gamma$ randomly from the unit sphere $\mathbb{S}^{d-1}$. 
As in Section \ref{sec:modality_mismatch}, we report the average $95\%$ confidence interval lengths and coverage rates for different values of $\|\theta - \td{\theta}\|$ over 1000 trials. As indicated by Proposition~\ref{prop: lm shift}, the asymptotic variance of RePPI does not vary with the bias of $\td{\theta}$, and the PPI estimator is worse than the XY-only estimator as $\|\gamma-\tilde\gamma\|$ is large.
\begin{figure}[t]
    \centering
    \includegraphics[width=0.49\linewidth]{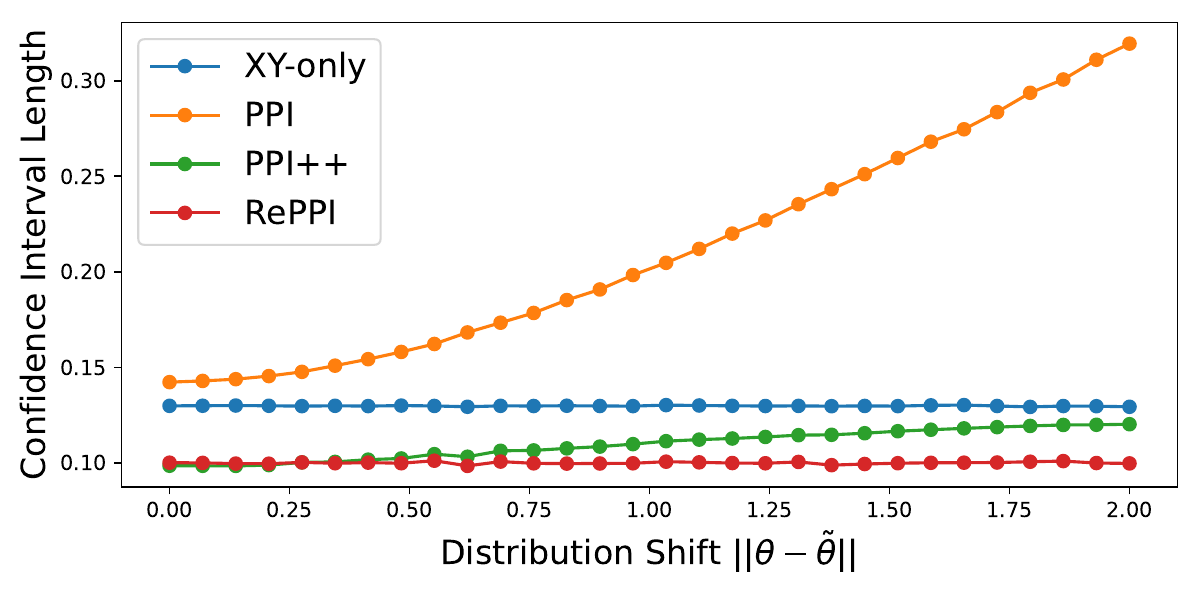}
    \includegraphics[width=0.49\linewidth]{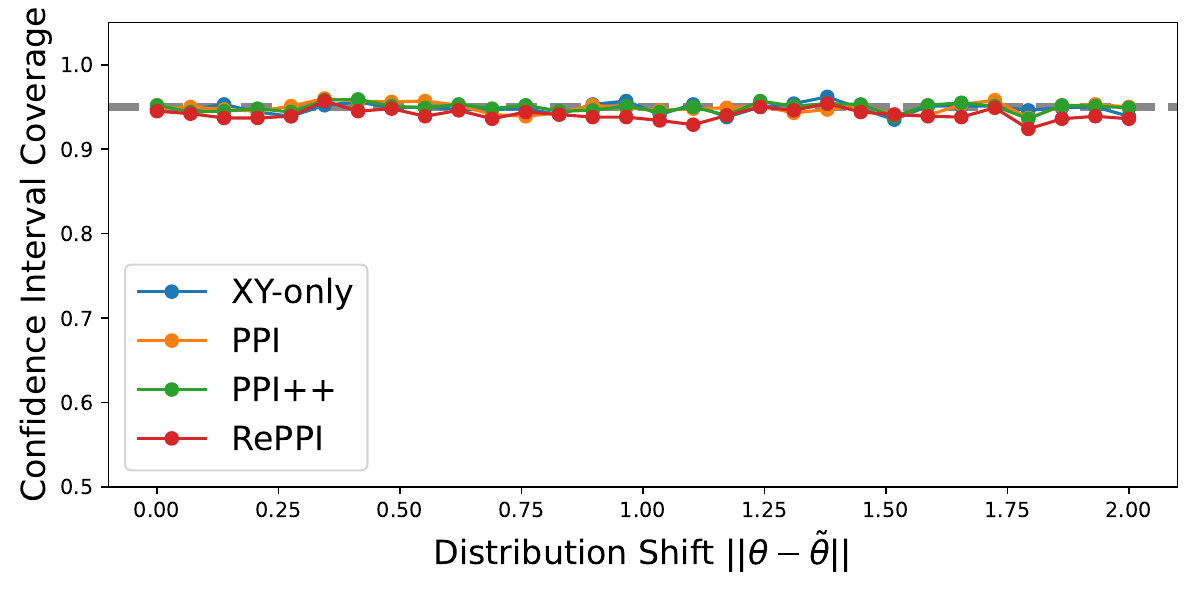}
    \caption{Average length of confidence intervals (left) and coverage (right) in distribution shift simulation. The horizontal axis represents the bias $\|\theta - \td{\theta}\|$.}
    \label{fig:lm shift}
\end{figure}
\subsubsection{Discrete Predictions}
\label{sec:sim_discrete}
\begin{figure}[ht]
    \centering
    \includegraphics[width=0.49\linewidth]{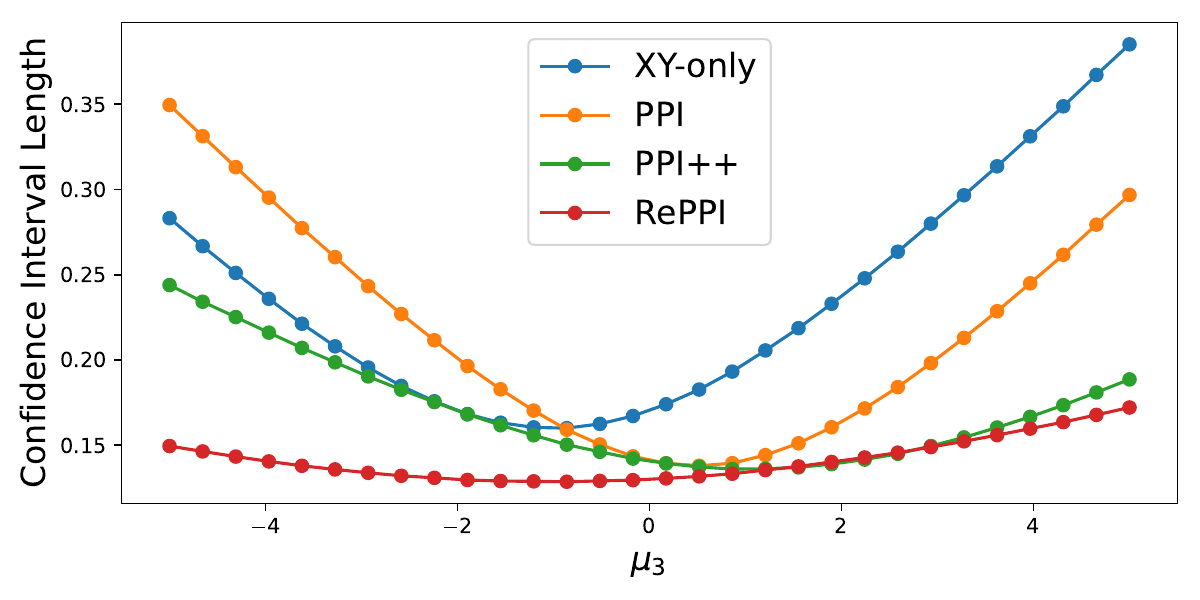}
    \includegraphics[width=0.49\linewidth]{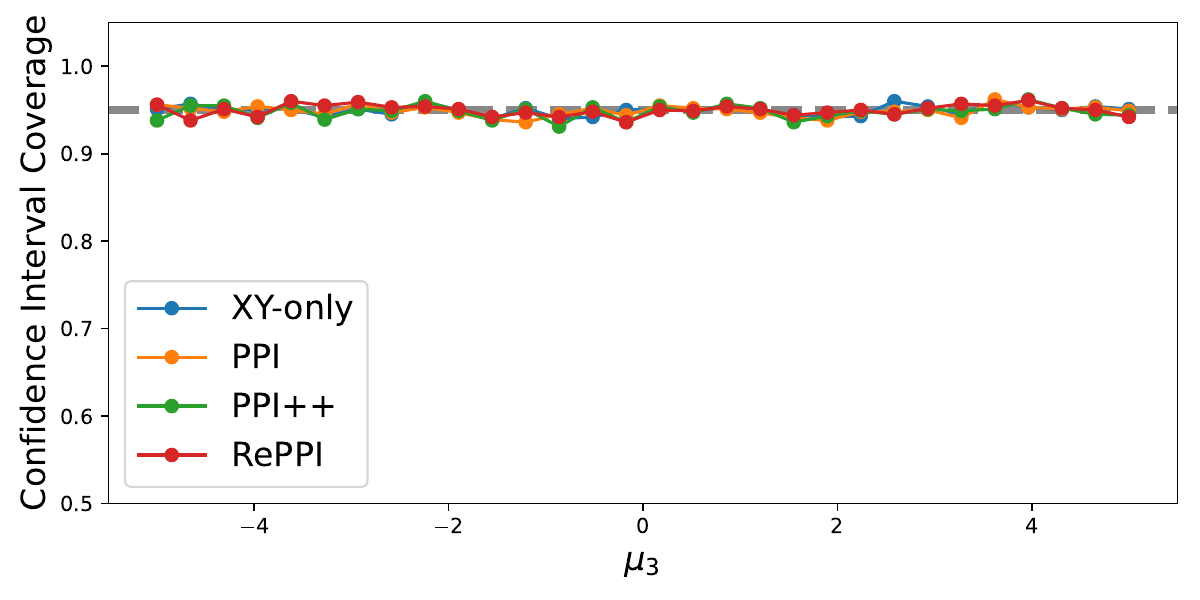}
    \caption{Average length of confidence intervals (left) and coverage (right) in simulation with discrete predictions. The horizontal axis represents $\mu_3$.}
    \label{fig:lm binary}
\end{figure}
We evaluate the performance of the estimators via a stylized simulation study where $\mu_1 = -2$, $\mu_2 = 0$, $\sigma^2 = 1$, and $\mu_3$ is varied. We set $n=1000$, $N=9000$, and repeat the simulation for 1000 trials. The average $95\%$ confidence interval length and coverage rates are reported in Figure \ref{fig:lm binary}. As suggested by Proposition \ref{prop: binary}, the efficiency gap is zero when $\mu_3 - \mu_2 = \mu_2 - \mu_1$ and increases as the means become more nonlinear in the predicted label.

{

\subsection{Performance of RePPI under Different Parameters}
In this section, we provide additional simulation studies to compare the performance of RePPI and XY-only, PPI, and PPI++ under various choices of labeled data ratio, score fitting method, and target estimand. 

We will use the following data-generating processes.  Let
\[
    Z\sim \N(0,I_5),\qquad X=Z+\xi,\qquad A=Z+\nu,
\]
where $\xi,\nu\sim \N(0,I_5)$ are independent of $Z$. The auxiliary prediction is generated from
the nonlinear signal
\[
    s(A)=A_1+\sin(A_2)+A_3A_4 .
\]
The variable $A$ should be interpreted as auxiliary information that is accessible to the prediction
model but not directly used in the definition of the inferential target. The prediction
$\hat Y$ is therefore informative, but deliberately not calibrated for the target score.

We consider three nonlinear target specifications. For mean estimation, we set
\[
    \hat Y = s(A)+\eta,\qquad
    Y=1+\hat Y+\sin(\hat Y)+\varepsilon,
\]
where $\eta,\varepsilon\sim \N(0,1)$, and the target is $\theta^\star=E(Y)$. For linear regression,
we set
\[
    \hat Y=s(A)+\eta,\qquad
    Y=X^\top\gamma+\sin(\hat Y)+\hat Y\tanh(X_2)+\varepsilon,
    \qquad
    \gamma=(1,-1,1,0,0)^\top .
\]
The target is the first coordinate of the population least-squares projection of $Y$ on $X$.
For logistic regression, we set
\[
    \hat Y=\sigma\{s(A)\},
\]
and generate
\[
    Y\mid X,A
    \sim
    \mathrm{Bernoulli}\left[
    \sigma\left\{
        X^\top\beta+(2\hat Y-1)X_1+(2\hat Y-1)\tanh(X_2)
    \right\}
    \right],
    \qquad
    \beta=(1,-1,1,0,0)^\top ,
\]
where $\sigma(t)=1/(1+e^{-t})$. The target is the first coordinate of the population logistic
projection. In each case, the population truth used to evaluate coverage is approximated using an
independent Monte Carlo reference sample of size $200{,}000$.

We compare four estimators: the XY-only estimator, standard PPI, PPI++ with scalar power tuning, and RePPI. We report the average confidence interval length and empirical coverage. All confidence intervals are nominal $95\%$ intervals, and each configuration is repeated over
100 Monte Carlo replications. We will use three types of models to learn the imputed score for RePPI, including linear model (Linear), random forest (RF), and neural network (NN). 

We first study the nonlinear linear-regression target while fixing $N=10{,}000$ and varying
$n\in\{500,1000,1500,2000,2500,3000\}$. Figure~\ref{fig:sim-sample-size} reports the average
confidence interval length and coverage as a function of the label ratio $n/N$. RePPI consistently
has the shortest intervals over the full grid, and all methods have approximately the target level of coverage. The PPI++ method has no improvement compared to the XY-only method due to the nonlinear structure of the signal in $\hat Y$, which can be learned 

\begin{figure}[t]
    \centering
    \includegraphics[width=.95\linewidth]{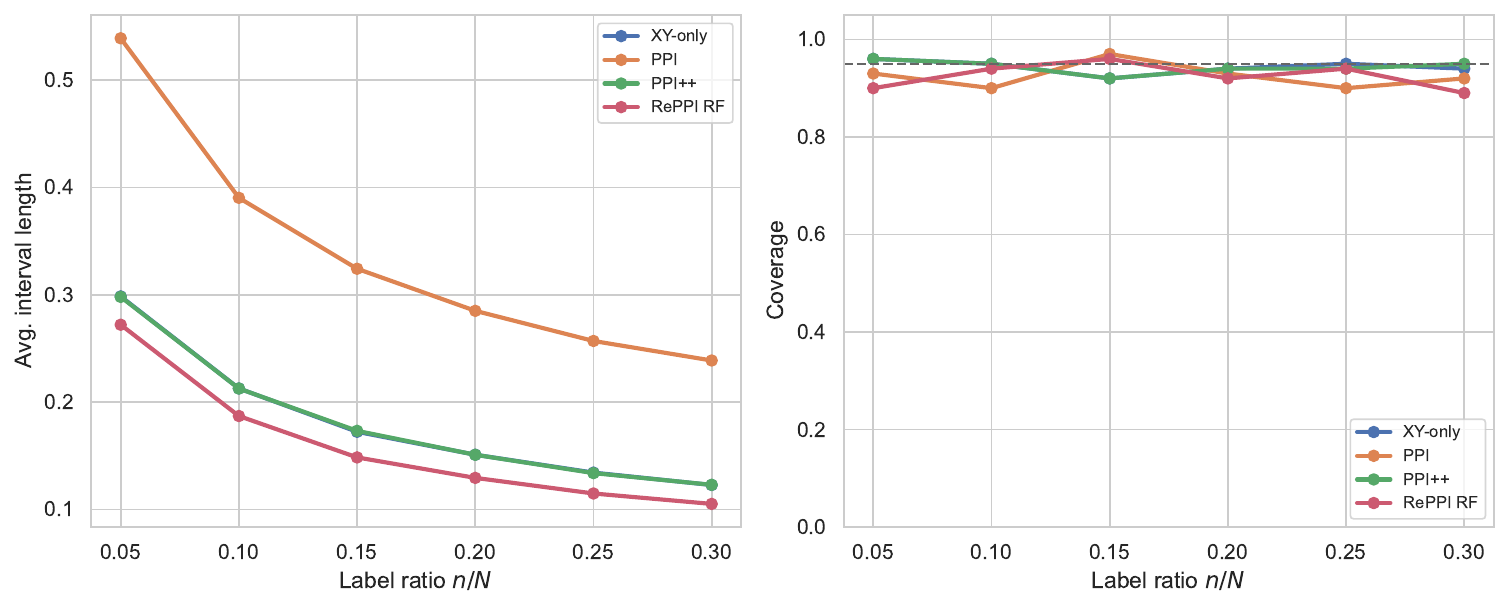}
    \caption{
    Average confidence interval length and coverage for the nonlinear
    linear-regression target with fixed $N=10{,}000$ and varying labeled sample size $n$.
    }
    \label{fig:sim-sample-size}
\end{figure}

We next fix $n=2000$ and $N=10{,}000$ under the same nonlinear linear-regression target and compare three learners for estimating the RePPI imputed score: Linear, RF, and NN. The results are shown in Figure~\ref{fig:sim-learners}. The linear recalibration gives only a small improvement over PPI++ because the optimal score is nonlinear in $(X,\hat Y)$. The flexible learners are more effective: the average interval length is $0.109$ for the neural-network recalibration and $0.128$ for the random-forest recalibration, compared with approximately $0.150$ for the labeled-only estimator, PPI++, and linear recalibration. The empirical coverages all remain valid for different models. 

\begin{figure}[t]
    \centering
    \includegraphics[width=.85\linewidth]{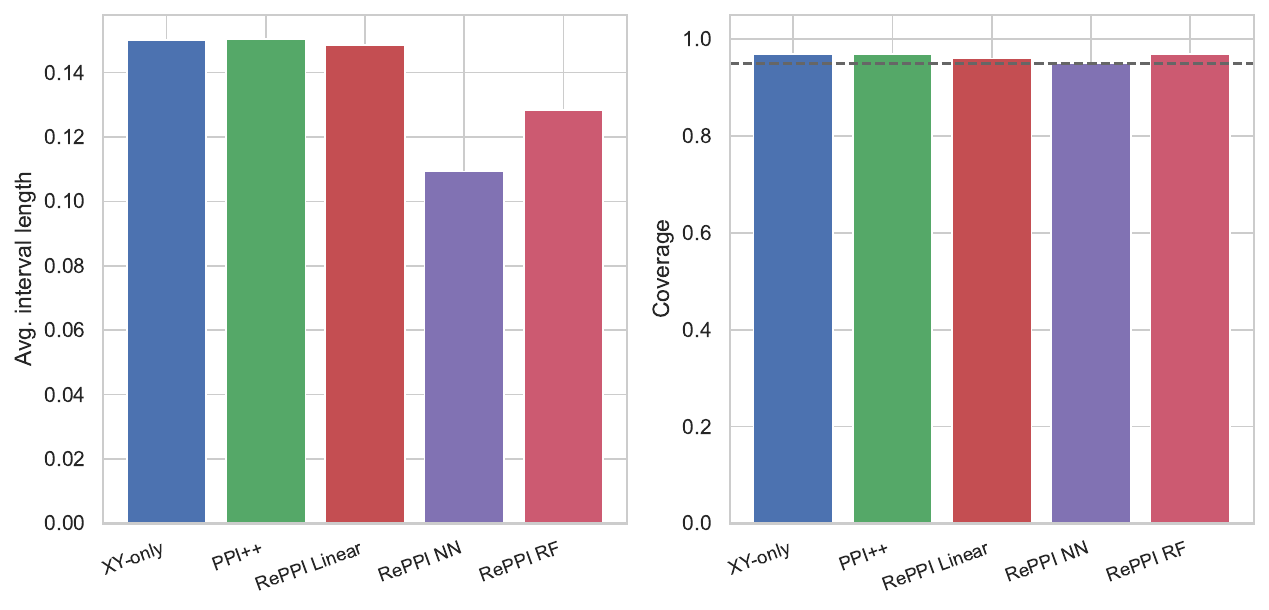}
    \caption{
    Comparison of different learners for estimating the RePPI imputed score
    under the nonlinear linear-regression target with $n=2000$ and $N=10{,}000$.}
    \label{fig:sim-learners}
\end{figure}

We then compare the methods across different estimands in mean estimation, linear regression, and logistic regression, again with $n=2000$ and $N=10{,}000$. Figure~\ref{fig:sim-estimands} reports average confidence
interval length and coverage. RePPI improves over the labeled-only estimator in all three target specifications, and preserves the target coverage level.

\begin{figure}[t]
    \centering
    \includegraphics[width=.95\linewidth]{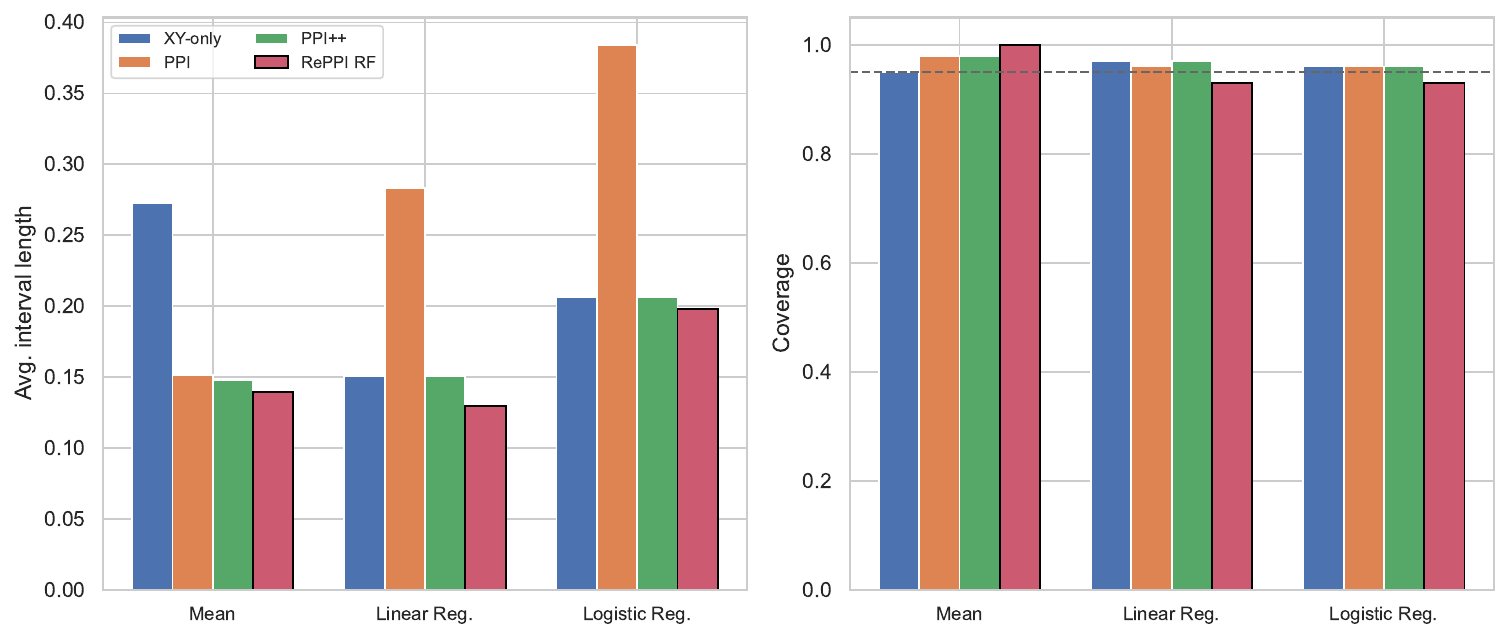}
    \caption{
    Comparison across target specifications. The left panel reports average
    confidence interval length, and the right panel reports empirical coverage.
    }
    \label{fig:sim-estimands}
\end{figure}
\subsection{Comparison with Other Methods}
\label{sec:other-comparison}
In this section, we conduct an additional simulation study for linear regression to compare RePPI with a broader set of recently proposed prediction-assisted inference methods, including:

\begin{itemize}
    \item The post-prediction adaptive inference (PSPA) method of \citet{miao2025assumption}.
    \item The prediction de-correlated inference (PDC) method of \citet{gan2024prediction}.
    \item The CC estimator of \cite{gronsbell2024another}.
    \item The safe and efficient estimator of \cite{xu2025unified}.
\end{itemize}
We consider a linear regression setting, where the target is defined as  $\theta^\star=\argmin_{\theta}\mathbb E\{(Y-X^\top\theta)^2\}$. We generate the data through the following model: let \(X=(1,\widetilde X^\top)^\top\), \(\widetilde X\sim \N(0,I_2)\), $W \sim \N(0,1)$, and
\[
    \hat Y = W, \quad Y=X^\top\theta^\star+W+W^2-1+\varepsilon,
\]
where the true parameter is set to be $\theta^\star = (1,1,1)^\top$. 
For each replication, we generate \(n=1000\) labeled observations \((X_i,Y_i,\hat Y_i)\) and \(N=5000\)
unlabeled observations \((X_i,\hat Y_i)\). The prediction surrogate \(\hat Y\) is informative but the optimal score $s^\star (X,\hat Y) = \E[X(X^\top \theta^\star  - Y)|X,\hat Y] = -X(\hat Y+\hat Y^2-1)$ is nonlinear; therefore, recalibration is necessary to achieve efficiency. We conduct 500 replications in total to report the average 95\% confidence interval and coverage for the regression coefficient of $\tilde X_1$ among all the methods mentioned above, and the XY-only and PPI++\cite{angelopoulos2023ppi++} methods as benchmarks. The experiment results are reported in Figure \ref{fig:method_comparison}.

\begin{figure}
    \centering
    \includegraphics[width=0.95\linewidth]{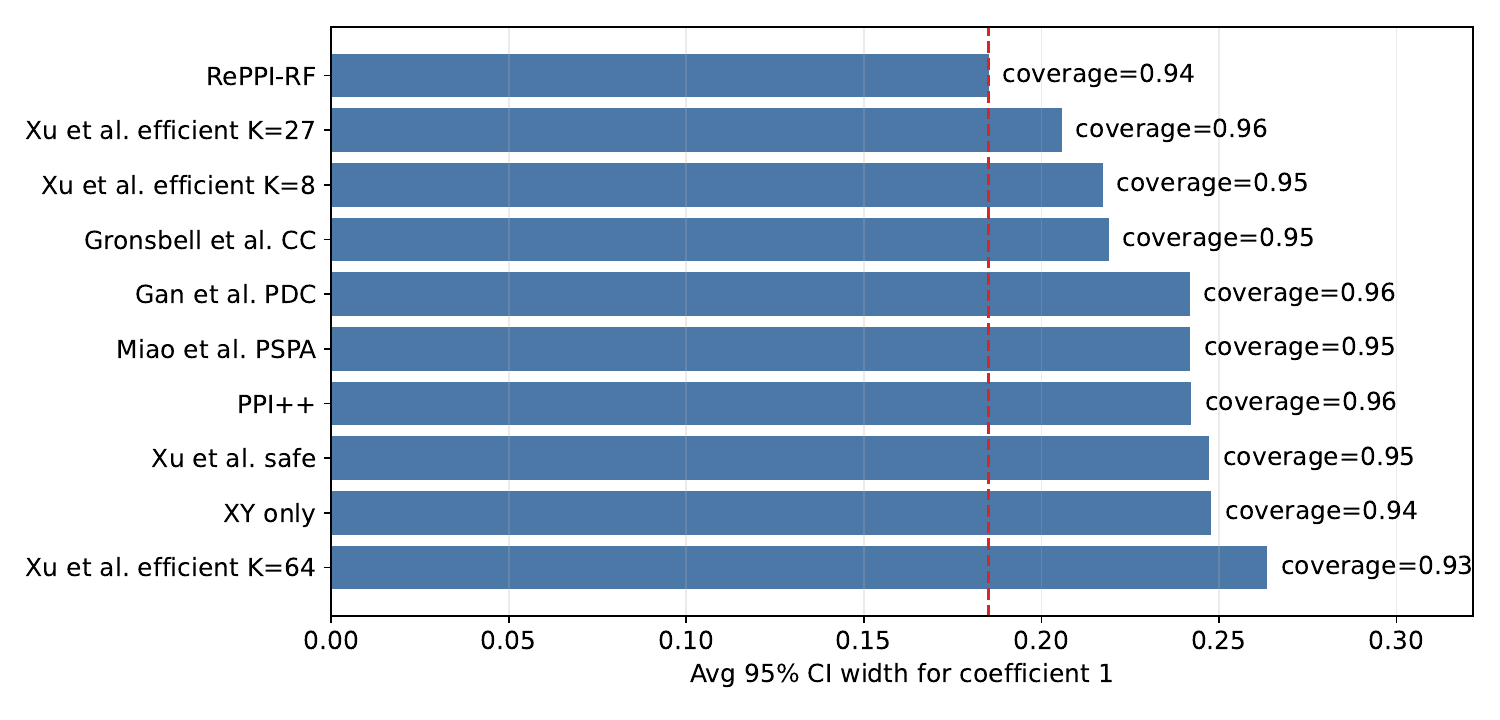}
    \caption{Comparison of our RePPI method with methods in \citet{miao2025assumption,gan2024prediction,gronsbell2024another,xu2025unified}. We use a random forest model to fit the optimal score for RePPI. For \citet{xu2025unified}, we choose the basis function to be the tensor product of natural cubic splines as the authors suggested, and we choose the number of basis $K=8,27,64$, which corresponds to $2,3,4$ basis on each dimension. We repeat the experiments for 500 times and report the average confidence interval length and coverage.}
    \label{fig:method_comparison}
\end{figure}
From Figure \ref{fig:method_comparison}, we find that our RePPI methods have the shortest confidence interval among all the methods. This is because \cite{gronsbell2024another, gan2024prediction, miao2025assumption} all use the original loss function, and therefore have suboptimal efficiency since the optimal imputed loss is nonlinear. \cite{xu2025unified} achieves better efficiency since it uses a basis expansion method to approximate the optimal score. However, we find that among $K=8,27,64$, the $K=27$ achieves the best performance, and $K=64$ is significantly worse than the XY-only estimator. This is because \cite{xu2025unified} explicitly requires that the minimum eigenvalue of the covariance matrix of basis functions be bounded. But empirically, we observe that in $K=64$, the condition number of the empirical covariance matrix for basis functions is very large, leading to poor numerical stability. Therefore, although theoretically \cite{xu2025unified} can achieve the semiparametric efficiency, their computational strategy may perform badly due to the computational instability and approximation efficiency of basis expansion. Our RePPI method doesn't restrict the model class used to estimate the optimal imputed score, and therefore can perform better in more general problems.

}
\section{Additional Applications}
\subsection{US Census Data}
We consider the relationship between age and wage rates, measured by the coefficient of age in the regression of log-income on age based on US Census
data, following \cite{angelopoulos2023prediction, angelopoulos2023ppi++}. We fit a prediction model of log-income with XGBoost using 14 other covariates, including education, marital status, citizenship, and race, among others. To demonstrate the effect of distribution shift, we restrict the training data to contain only those individuals who have a college degree or above, but seek inference on the whole population. The data used for inference contains 377,575 observations in total; we vary the fraction of labeled instances from 2\%-10\% and treat the remaining instances as unlabeled. We use linear regression to fit the imputed loss. The length and coverage of the computed confidence intervals are shown in Figure \ref{fig: us census}, and the number of labels required to achieve a given interval length is shown in Table \ref{tab: us census}. We find that all methods achieve the desired coverage, and RePPI consistently outperforms the other methods under different ratios $\frac{n}{N+n}$: it saves over 24\% of the labels required to achieve the same interval length as PPI and PPI++.

\begin{figure}[t]
    \centering
    \includegraphics[width=0.98\linewidth]{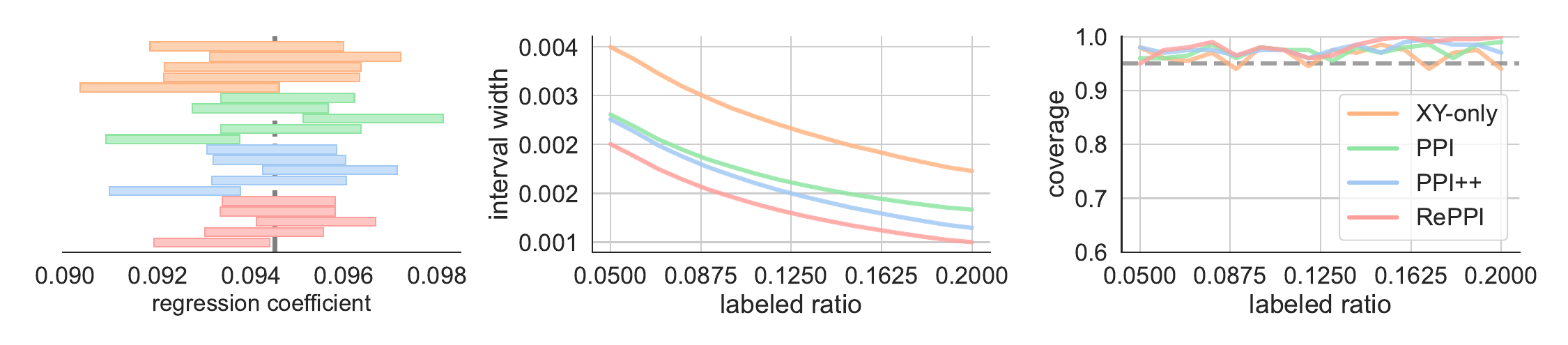}
       \caption{US Census experiments. The left panel shows a representative 95\% confidence interval, the middle panel shows average interval width, and the right panel shows empirical coverage. The horizontal axis represents the ratio of labeled data, $\frac{n}{N+n}$.}
    \label{fig: us census}
\end{figure}

\begin{table}[t]
    \centering
    \caption{Required number of labeled observations to achieve a given interval length in the US census experiment. The last column reports the
label reduction of RePPI relative to PPI++. Here ``NA'' means that the target 
interval length is not reached within the labeled-sample-size range considered.}
    \begin{tabular}{c||ccccc}
         \hline
 Interval Length & XY-only & PPI & PPI++ & RePPI & Reduced Samples (\%) \\
\hline
0.0019 & NA & 18696 & 15847 & 11918 & 24.79\% \\
0.0020 & NA & 16186 & 14062 & 10560 & 24.90\% \\
0.0021 & NA & 14165 & 12610 & 9405 & 25.41\% \\
0.0022 & 24883 & 12607 & 11344 & 8412 & 25.84\% \\
0.0023 & 22432 & 11262 & 10287 & 7594 & 26.18\% \\
\hline
    \end{tabular}
    \label{tab: us census}
\end{table}
\subsection{Politeness of Online Requests}
\label{sec: politeness}
We study the relationship between politeness and the linguistic device of hedging, following \citep{gligoric2024can}. The dataset \citep{danescu2013computational} contains texts from 5512 online requests posted on Stack Exchange and Wikipedia. For each request, the politeness score is obtained from an average score of 5 human evaluators on a scale of 1-25. The politeness prediction is obtained from OpenAI's GPT-4o mini model by prompting it to rate the politeness of the given text on a scale of 1-25 as well. The parameter of interest is the regression coefficient obtained by regressing the politeness score on the binary indicator of hedging in the request.  We again use linear regression to fit the imputed loss. The experimental results are shown in Figure \ref{fig: politeness} and Table \ref{fig: politeness}. We again observe a uniform improvement of RePPI over the other methods. Intuitively, this is because the language model's predictions are not well-calibrated to match the correct distribution of human scores. Therefore, the recalibration improves the estimation accuracy.
\begin{figure}[b]
    \centering
    \includegraphics[width=0.98\linewidth]{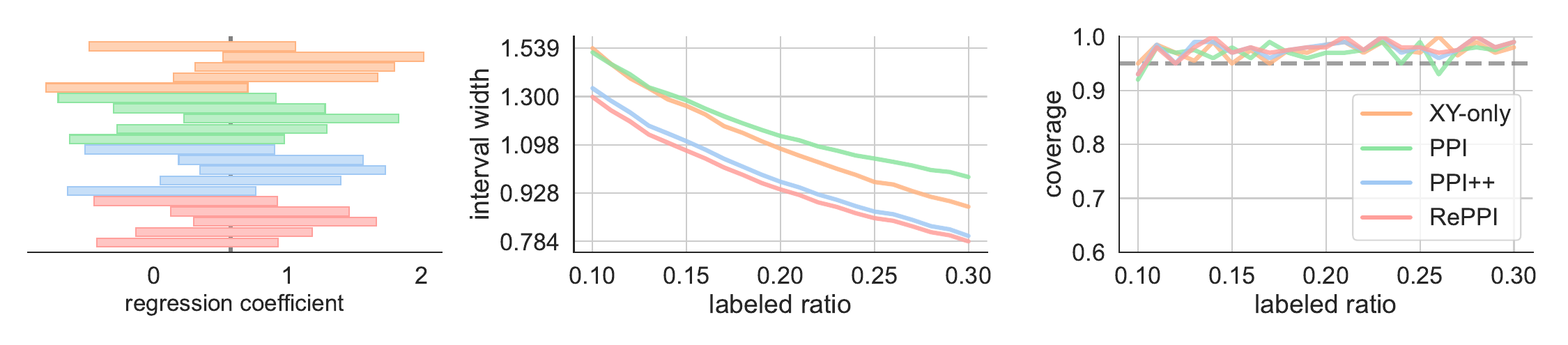}
    \caption{Politeness experiments. The left panel shows a representative 95\% confidence interval, the middle panel shows average interval width, and the right panel shows empirical coverage. The horizontal axis represents the ratio of labeled data, $\frac{n}{N+n}$.}
    \label{fig: politeness}
\end{figure}
\begin{table}[t]
\caption{Required number of labeled observations to achieve a given interval length in the politeness experiment. The last column reports the
label reduction of RePPI relative to PPI++. Here ``NA'' means that the target 
interval length is not reached within the labeled-sample-size range considered.}
    \centering
    \begin{tabular}{c||ccccc}
         \hline
 Interval Length & XY-only & PPI & PPI++ & RePPI & Reduced Samples (\%) \\
\hline
0.8 & NA & NA & 1650 & 1599 & 3.09\% \\
0.9 & 1604 & NA & 1289 & 1211 & 6.08\% \\
1.0 & 1299 & 1591 & 1026 & 960 & 6.42\% \\
1.1 & 1072 & 1193 & 843 & 787 & 6.60\% \\
1.2 & 897 & 954 & 697 & 656 & 5.88\% \\
\hline
    \end{tabular}
    \label{tab: politeness}
\end{table}
% \appendixseven

\section{Extension}
In this section, we discuss several immediate extensions of our framework.
\subsection{Predicted Covariates}
While we focus on predicted outcomes, it is easy to adapt RePPI to additionally enable prediction models for covariates. Let $X = (X_1, X_2)$, and suppose that the subset of covariates $X_2$ is missing in the unlabeled data. Let $\hat{X}_2 
= f(X_1, W)$ where $f$ is a prediction model for $X_2$. Then, we can use an \rectifier that takes predictions of both $X_2$ and $Y$ as input, $g_\theta(X_1, \hat{X}_2, \hat{Y})$. Following Theorem \ref{thm: efficient PPI}, the optimal \rectifier satisfies 
\[\nabla g_{\theta^\star}(X_1, \hat{X}_2, \hat{Y}) = \E[\nabla \ell_{\theta^\star}(X, Y)\mid X_1, \hat{X}_2, \hat{Y}].\]
Similarly as before, we can adapt RePPI by choosing $g_\theta(X_1, \hat{X}_2, \hat{Y}) = \theta^\top \hat{M}\hat{s}(X_1, \hat{X}_2, \hat{Y})$, where $\hat{s}(X_1, \hat{X}_2, \hat{Y})$ is an estimate of $\E[\nabla \ell_{\theta^\star}(X, Y)\mid X_1, \hat{X}_2, \hat{Y}]$.

More generally, there may be multiple missingness patterns. For instance, there could be four datasets $\{\mathcal{D}_{d_{X_1}d_Y}: d_Y, d_{X_2}\in \{0, 1\}\}$ where $Y$ (resp. $X_1$) is missing iff $d_Y = 0$ (resp. $d_{X_2} = 0$). We can then extend the class of PPI estimators to include all estimators of the form 
\begin{align*}
\argmin_{\theta} &\frac{1}{|\mathcal{D}_{11}|}\sum_{i\in \mathcal{D}_{11}}\ell_{\theta}(X_i, Y_i) -\lb \frac{1}{|\mathcal{D}_{11}|}\sum_{i\in \mathcal{D}_{11}}g^{(10)}_{\theta}(X_i, \hat Y_i) - \frac{1}{|\mathcal{D}_{10}|}\sum_{i\in \mathcal{D}_{10}}g^{(10)}_{\theta}(X_i, \hat Y_i)\rb\\
& -\lb \frac{1}{|\mathcal{D}_{11}|}\sum_{i\in \mathcal{D}_{11}}g^{(01)}_{\theta}(X_{i1}, \hat{X}_{i2}, Y_i) - \frac{1}{|\mathcal{D}_{01}|}\sum_{i\in \mathcal{D}_{01}}g^{(01)}_{\theta}(X_{i1}, \hat{X}_{i2}, Y_i)\rb\\
& -\lb \frac{1}{|\mathcal{D}_{11}|}\sum_{i\in \mathcal{D}_{11}}g^{(00)}_{\theta}(X_{i1}, \hat{X}_{i2}, \hat Y_i) - \frac{1}{|\mathcal{D}_{00}|}\sum_{i\in \mathcal{D}_{00}}g^{(00)}_{\theta}(X_{i1}, \hat{X}_{i2}, \hat Y_i)\rb.
\end{align*}
Note that the expectation of the above loss function is $\E[\ell_\theta(X, Y)]$. Thus, under regularity conditions, the estimator is consistent and asymptotically normal. We leave the investigation of optimal $(g_\theta^{(10)}, g_\theta^{(01)}, g_\theta^{(00)})$ for future research.

% At last, we want to mention that the setting of PPI discussed in \cite{xu2025unified} is different from our work. In \cite{xu2025unified}, it is assumed that the prediction $\hat Y$ is a function of features $X$, and thus brings no extra information upon the features. However, as we discussed in Remark \ref{rmk: W}, the prediction $\hat Y$ is often obtained from unstructured and high-dimensional data such as text or images  (denoted as $W$ in our setting), which must be processed through large foundation models and are thus beyond the modeling ability in downstream tasks. In these applications, it is more practical to use $\hat Y$ instead of the original data $W$ for computational considerations. Hence, we model the prediction $\hat Y$ as an auxiliary variable of the true label $Y$ besides $X$ instead of a transformation $f(X)$. Since we do not specify any relationship between $\hat Y$ and $X$, our framework can include the setting discussed in \cite{xu2025unified} as a special case. }
\subsection{Distribution Shift}
\label{sec: distribution shift}
In this paper, we mainly focus on the setting where the labeled and unlabeled data are i.i.d. sampled from the same distribution. Here, we consider generalization to the case where the labeled data $\{(X_i, \hat Y_i)\}_{i=1}^n$ comes from a distribution $\mathbb{P}$ and the unlabeled data $\{(X_i, \hat Y_i)\}_{i=n+1}^{n+N}$ comes from a distribution $\mathbb{Q}$, where $\mathbb{P}_{X,\hat Y}\neq \mathbb{Q}_{X,\hat Y}$. Here $\mathbb{P}_{X,\hat Y}, \mathbb{Q}_{X,\hat Y}$ are both marginal distributions on $(X, \hat Y)$, since the label $Y$ is not observed on the unlabeled dataset, and we denote the conditional distribution of $Y$ given $(X, \hat Y)$ as $\mathbb{P}_{Y|X,\hat Y}$. In particular, we assume that the target estimand is defined on the same distribution of the labeled dataset, i.e., 
$$
\theta^\star = \argmin_{\theta\in \Theta}\E_{\mathbb{P}_{X,\hat Y}\times\mathbb{P}_{Y|X,\hat Y}}[\ell_{\theta}(X, Y)].
$$
Under this setting, the distribution shift between the two datasets can be handled by reweighting the unlabeled data. Assume the Radon-Nikodym derivative $w(x,\hat y) = \frac{d \mathbb{P}_{X,\hat Y}}{d \mathbb{Q}_{X,\hat Y}}(x,\hat y)$ is known, then  we can modify the PPI estimator \eqref{eqn: PPI} as 
\begin{equation*}
    \begin{aligned}
\hat{\theta}_g^\ppi = \argmin_{\theta} \frac{1}{n}\sum_{i=1}^{n}\ell_{\theta}(X_i, Y_i) -\lb\frac{1}{n}\sum_{i=1}^{n}g_{\theta}(X_i, \hat Y_i) - \frac{1}{N}\sum_{i=n+1}^{n+N}w(X_i,\hat Y_i)g_{\theta}(X_i, \hat Y_i)\rb,  
\end{aligned}
\end{equation*}
and the results in Theorem \ref{thm: efficient PPI} and Theorem \ref{thm:main} immediately applies under the same proof.

{

In practice, the Radon--Nikodym derivative \(w(x,\hat y)\) is typically unknown and must be estimated from the labeled and unlabeled data. A natural implementation replaces \(w\) in the weighted unlabeled term by a plug-in estimate \(\widehat w\), and fits it with an additional fold in the cross-fitting procedure to keep independence. The difference in the objective function of using $w$ and $\hat w$ is 
$$
\frac{1}{N}\sum_{i=n+1}^{n+N}(w(X_i,\hat Y_i) - \hat w(X_i,\hat Y_i))g_{\theta}(X_i, \hat Y_i)
$$
Hence, the resulting estimator has the same first-order expansion as the known-\(w\) estimator if
\[
    \sqrt n
    \left\|
        \E_Q\!\left[
            \{\widehat w(X,\hat Y)-w(X,\hat Y)\}
            \nabla g_{\theta^\star}(X,\hat Y)
        \right]
    \right\|
    =
    o_p(1), \quad \E_Q[\|\hat w-w\|^2] = o_p(1)
\]
A simple sufficient condition is
\[
    \|\widehat w-w\|_{L_2(Q)}=o_p(n^{-1/2})
\]
when \(\nabla g_{\theta^\star}\) is square-integrable and uniformly bounded in \(L_2(Q)\). However, this is generally impossible with usual parametric and nonparametric models. If the above moment condition is not satisfied, then the plug-in density-ratio error may lead to additional first-order bias and result in variance inflation. Since quantifying the error arising from plug-in density would depend on the specific model choice of model for $\hat w$, we leave a more thorough analysis for future research.  
\subsection{Finite-Sample Variance Comparison for Mean Estimation}

For mean estimation, the efficiency gain from recalibration can be understood exactly through a finite-sample control-variate calculation. Let \(\theta^\star=\mathbb E[Y]\), and let \(h(\hat Y)\) be a recalibrated surrogate. Consider the class of PPI estimators (which includes XY-only, PPI, PPI++, and RePPI)
\[
    \widehat\theta(a,h)
    =
    \bar Y_L
    -
    a\{\bar h_L-\bar h_U\},
\]
where
\[
    \bar Y_L=\frac1n\sum_{i=1}^nY_i,\qquad
    \bar h_L=\frac1n\sum_{i=1}^nh(\hat Y_i),\qquad
    \bar h_U=\frac1N\sum_{i=n+1}^{n+N}h(\hat Y_i).
\]
Writing
\[
    \sigma_Y^2=\operatorname{Var}(Y),\qquad
    \gamma_h=\operatorname{Cov}\{Y,h(\hat Y)\},\qquad
    \Gamma_h=\operatorname{Var}\{h(\hat Y)\},
\]
we have
\[
    \operatorname{Var}\{\widehat\theta(a,h)\}
    =
    \frac{\sigma_Y^2}{n}
    -
    \frac{2a\gamma_h}{n}
    +
    a^2\Gamma_h\left(\frac1n+\frac1N\right).
\]
Similar to the optimal tuning \eqref{eq:Mhat} in the general case, for a fixed \(h\), the optimal coefficient is
\[
    a_h^\star
    =
    \frac{N}{n+N}\frac{\gamma_h}{\Gamma_h},
\]
and the optimized variance is
\[
    \operatorname{Var}\{\widehat\theta(a_h^\star,h)\}
    =
    \frac{\sigma_Y^2}{n}
    \left(
        1-\frac{N}{n+N}R_h^2
    \right),
    \qquad
    R_h^2=
    \frac{\gamma_h^2}{\sigma_Y^2\Gamma_h}.
\]
The XY-only estimator corresponds to \(a=0\), with variance \(\sigma_Y^2/n\). Standard PPI corresponds to \(h(\hat Y)=\hat Y\) and \(a=1\). PPI++ keeps the raw prediction \(h(\hat Y)=\hat Y\) but optimizes the coefficient \(a\), giving
\[
    \operatorname{Var}(\widehat\theta^{\mathrm{PPI++}})
    =
    \frac{\sigma_Y^2}{n}
    \left(
        1-\frac{N}{n+N}R_{\hat Y}^2
    \right),
    \qquad
    R_{\hat Y}^2=
    \frac{\operatorname{Cov}(Y,\hat Y)^2}
    {\operatorname{Var}(Y)\operatorname{Var}(\hat Y)}.
\]
Furthermore, RePPI has the flexibility to first recalibrate the prediction and then applies the same optimal tuning, which corresponds to optimize both $h$ and $a$. With the choice of the oracle function \(h(\hat Y)=\mathbb E[Y\mid \hat Y]\) (same as the oracle score $s^\star$), the variance is optimized and the optimal value is
\[
    \operatorname{Var}(\widehat\theta^\REPPI)
    =
    \frac{\sigma_Y^2}{n}
    \left(
        1-\frac{N}{n+N}R_h^2
    \right) = \frac{\sigma_Y^2}{n}
    \left(
        1-\frac{N}{n+N}\frac{\Var(\E[Y|\hat Y])}{\sigma_Y^2}
    \right).
\]
Therefore, RePPI improves over PPI++ whenever the optimal recalibration $h(\hat Y) = \E[Y|\hat Y]$ is not a simple linear function $aY$. In this finite-sample mean-estimation setting,
\[
    \operatorname{Var}(\widehat\theta^\REPPI)
    \le
    \operatorname{Var}(\widehat\theta^\ppiplus)
    \le
    \min\{
        \operatorname{Var}(\widehat\theta^\ppi),
        \operatorname{Var}(\widehat\theta^\cls)
    \}.
\]
And the gain of RePPI over PPI++ is
\[
    \frac{\sigma_Y^2}{n}\frac{N}{n+N}
    \left(
        \frac{\Var(\E[Y|\hat Y])}{\sigma_Y^2}-\frac{\operatorname{Cov}(Y,\hat Y)^2}
    {\operatorname{Var}(Y)\operatorname{Var}(\hat Y)}
    \right),
\]
which corresponds to the difference between the optimal projection of $Y$ on $\hat{Y}$ in the class of linear functions and all measurable functions.

}
\end{document}